\documentclass[sigconf]{acmart}
\AtBeginDocument{%
  }

\newcommand{\hao}[1]{#1}
\newcommand{\beftext}[1]{}
\newcommand{\eat}[1]{}

\newcommand{\song}[1]{{\color{brown}{#1}}}
\usepackage{multirow}
\usepackage[table,xcdraw]{xcolor}

\copyrightyear{2026}
\acmYear{2026}
\setcopyright{cc}
\setcctype{by}
\acmConference[KDD '26]{Proceedings of the 32nd ACM SIGKDD Conference on Knowledge Discovery and Data Mining V.2}{August 09--13, 2026}{Jeju Island, Republic of Korea}
\acmBooktitle{Proceedings of the 32nd ACM SIGKDD Conference on Knowledge Discovery and Data Mining V.2 (KDD '26), August 09--13, 2026, Jeju Island, Republic of Korea}
\acmDOI{10.1145/3770855.3818002}
\acmISBN{979-8-4007-2259-2/2026/08}

\usepackage[most]{tcolorbox}
\usepackage{xcolor}
\usepackage{alltt}
\usepackage{listings}
\usepackage{subcaption}

\begin{document}


\title{EvoDS: Self-Evolving Autonomous Data Science Agent with Skill Learning and Context Management}


\author{Zherui Yang}
\affiliation{%
  \institution{The Hong Kong University of Science and Technology (Guangzhou)}
  \city{Guangzhou}
  \country{China}
}
\email{zyang582@connect.hkust-gz.edu.cn}

\author{Fan Liu}
\affiliation{%
  \institution{The Hong Kong University of Science and Technology (Guangzhou)}
  \city{Guangzhou}
  \country{China}}
\email{fliu236@connect.hkust-gz.edu.cn}

\author{Yansong Ning}
\affiliation{%
  \institution{The Hong Kong University of Science and Technology (Guangzhou)}
  \city{Guangzhou}
  \country{China}}
\email{yning092@connect.hkust-gz.edu.cn}

\author{Hao Liu}
\authornote{Corresponding author.}
\affiliation{%
  \institution{The Hong Kong University of Science and Technology (Guangzhou)}
  \city{Guangzhou}
  \country{China}}
\email{liuh@ust.hk}

\renewcommand{\shortauthors}{Zherui Yang, Fan Liu, Yansong Ning, and Hao Liu}


\tcbset{
  aibox/.style={
    width=400.18663pt,
    top=10pt,
    colback=black!05,
    colframe=black!20,
    colbacktitle=black!25!gray,
    enhanced,
    center,
    attach boxed title to top center={yshift=-0.1in},
    boxed title style={boxrule=0pt,colframe=white,},
  }
}

\tcbset{
  aiboxnotitle/.style={
    width=400.18663pt,
    top=10pt,
    colback=black!05,
    colframe=black!20,
    enhanced,
    center,
  }
}

\tcbset{
  aiboxbreakable/.style={
    width=400.18663pt,
    top=10pt,
    colback=black!05,
    colframe=black!20,
    colbacktitle=black!50,
    enhanced,
    center,
    breakable,
    attach boxed title to top left={yshift=-0.1in,xshift=0.15in},
    boxed title style={boxrule=0pt,colframe=white,},
  }
}
\newtcolorbox{AIBox}[2][]{aibox,title=#2,#1}
\newtcolorbox{AIBoxNoTitle}[1][]{aiboxnotitle}
\newtcolorbox{AIBoxBreak}[2][]{aiboxbreakable,title=#2,#1}

\lstset{
  basicstyle=\scriptsize\ttfamily,
  breaklines=true,
  columns=fullflexible
}
\lstset{linewidth=380pt}

\newtheorem{theoremBrief}{Theorem}
\newtheorem{assumption}{Assumption}

\eat{
\begin{abstract}
Large language model-based agents show promise for automating data science workflows, yet they remain constrained by limited and inflexible action spaces as well as challenges in long-horizon reasoning. To address these limitations, we propose \textbf{EvoDS}, a self-evolving data science agent built upon a hierarchical multi-agent framework. To enable flexible and scalable action spaces, EvoDS introduces an Adaptive Tool Evolution Mechanism that synthesizes and accumulates task-specific tools on demand. To support stable long-horizon execution, we further propose a Context Compression Strategy that autonomously retains task-relevant information while discarding redundant context. In addition, we develop a multi-agent reinforcement learning algorithm that jointly optimizes task performance, action space evolution, and context management within a unified training process. Both theoretical analysis and extensive empirical evaluations demonstrate the effectiveness of the proposed approach, with EvoDS consistently outperforming strong open-source and proprietary baselines across four data science benchmarks. The code is available at \url{https://anonymous.4open.science/r/EvoDS}.
\end{abstract}}

\begin{abstract}
\eat{
\song{
Data science (DS) agent has recently shown great potential to interact with external tools/environment for automating data science tasks.
However, existing approaches often struggle to acquire reusable tool-usage capabilities from trial-and-error learning and deal with accumulated context in multi-turn agent–environment interactions.
To this end, this paper proposes EvoDS, a unified framework for building self-evolving DS agents, capable of autonomously synthesizing new tools, compressing context in multi-step reasoning process.
To achieve this, we first introduce an Autonomous Capability Acquisition Mechanism that treats tools as agent capability, enabling DS agent to synthesize, validate, and internalize these capabilities.
Moreover, we develop an adaptive context compression strategy, which guides DS agent to invoke a summarization tool at appropriate interaction turn to compress redundant context.
Finally, we propose a multi-agentic reinforcement learning to jointly optimize task execution, capability acquisition, and context management.
Theoretically, we prove that the EvoDS can reduce tool-selection error probabilities, and its optimization objective is equivalent to solving an information bottleneck problem.
Extensive experiments across four data science benchmarks demonstrate that EvoDS achieves 28.9\% improvement over three generalist agent framework and four state-of-the-art open-source DS agent baselines.
Our data and code are available at \url{https://anonymous.4open.science/r/EvoDS}.
}}

\beftext{
Data science agents have recently shown great potential in interacting with external environments to automate data science tasks. However, existing approaches often struggle to acquire reusable capabilities from successful trial-and-error experience and to effectively manage the accumulated context arising from multi-turn agent–environment interactions. To address these challenges, we propose EvoDS, a unified framework for building self-evolving data science agents that can autonomously acquire capabilities and adaptively compress context during multi-step reasoning. Specifically, we introduce an Autonomous Capability Acquisition Mechanism that treats tools as agent capabilities, enabling data science agents to synthesize, validate, and internalize reusable capabilities. Moreover, we develop an adaptive context compression strategy that guides agents to invoke a summarization tool at appropriate interaction steps to compress redundant context. Finally, we propose a multi-agentic reinforcement learning approach to jointly optimize task execution, capability acquisition, and context management. Theoretically, we prove that EvoDS reduces tool-selection error probabilities and that its reinforcement learning objective minimizes task-irrelevant information while preserving task-critical signals. Extensive experiments across four data science benchmarks demonstrate that EvoDS achieves an average improvement of 28.9\% over state-of-the-art open-source data science agent baselines. Our data and code are available at \url{https://anonymous.4open.science/r/EvoDS}.
}

\hao{
Recent progress in Large Language Model~(LLM) agents has enabled promising advances in automated data science. However, existing approaches remain fundamentally limited by their static action sets and lack of principled long-horizon context management, hindering their ability to accumulate reusable experience across tasks and operate reliably in multi-stage, iterative data science pipelines.
To address these challenges, we introduce EvoDS, a self-evolving autonomous data science agent that learns to expand its skills and adaptively managing long-term context through agentic reinforcement learning. 
Specifically, EvoDS introduces two key strategies: (1) Autonomous Skill Acquisition (ASA) mechanism, which enables agents to synthesize, validate, and reuse executable  skills; and (2) Adaptive Context Compression (ACC) strategy, which treats context management as a learned control problem rather than passive truncation.
These strategies are orchestrated within a two-stage multi-agent training scheme, enabling EvoDS to autonomously improve over time.
Theoretically, we prove that EvoDS’s hierarchical design reduces tool-selection error, and its optimization objective aligns with an information bottleneck principle, ensuring efficient context use. Empirically, EvoDS outperforms state-of-the-art open-source data science agents by an average of 28.9\% across four diverse benchmarks while eliminating out-of-token failures.
Our code and data are available at \url{https://github.com/usail-hkust/EvoDS}.
}

\eat{
Large language model-based agents show promise for automating data science tasks, yet existing approaches lack the ability to acquire capabilities from experience  during deployment and to effectively manage long and heterogeneous execution contexts, which are important for data science tasks. To address these limitations, we propose \textbf{EvoDS}, a self-evolving data science agent built upon a hierarchical multi-agent architecture. EvoDS treats tools as concrete carriers of agent capabilities and introduces an Autonomous Capability Acquisition Mechanism that enables agents to synthesize, validate, and accumulate new capabilities on demand. To support stable long-horizon execution, we further propose a Adaptive Context Compression Strategy that proactively distills voluminous execution context into compact summaries. 
To effectively coordinate task execution, capability acquisition, and context management within a unified framework, we introduce a multi-agent reinforcement learning algorithm that jointly optimizes these objectives during training. We further provide theoretical analyses showing that EvoDS reduces the probability of tool-selection errors and that its training objective is equivalent to solving an information bottleneck optimization problem. Extensive experiments demonstrate the effectiveness of EvoDS, achieving an average improvement of 28.9\% over state-of-the-art open-source baselines across four data science benchmarks. The code is available at \url{https://github.com/usail-hkust/EvoDS}.}
\end{abstract}

\begin{CCSXML}
<ccs2012>
   <concept>
       <concept_id>10010147.10010178.10010219.10010221</concept_id>
       <concept_desc>Computing methodologies~Intelligent agents</concept_desc>
       <concept_significance>500</concept_significance>
       </concept>
   <concept>
       <concept_id>10010147.10010178.10010219.10010220</concept_id>
       <concept_desc>Computing methodologies~Multi-agent systems</concept_desc>
       <concept_significance>500</concept_significance>
       </concept>
   <concept>
       <concept_id>10010147.10010257.10010258.10010261</concept_id>
       <concept_desc>Computing methodologies~Reinforcement learning</concept_desc>
       <concept_significance>500</concept_significance>
       </concept>
 </ccs2012>
\end{CCSXML}

\ccsdesc[500]{Computing methodologies~Intelligent agents}
\ccsdesc[500]{Computing methodologies~Multi-agent systems}
\ccsdesc[500]{Computing methodologies~Reinforcement learning}

\eat{
\begin{CCSXML}
<ccs2012>
 <concept>
  <concept_id>00000000.0000000.0000000</concept_id>
  <concept_desc>Do Not Use This Code, Generate the Correct Terms for Your Paper</concept_desc>
  <concept_significance>500</concept_significance>
 </concept>
 <concept>
  <concept_id>00000000.00000000.00000000</concept_id>
  <concept_desc>Do Not Use This Code, Generate the Correct Terms for Your Paper</concept_desc>
  <concept_significance>300</concept_significance>
 </concept>
 <concept>
  <concept_id>00000000.00000000.00000000</concept_id>
  <concept_desc>Do Not Use This Code, Generate the Correct Terms for Your Paper</concept_desc>
  <concept_significance>100</concept_significance>
 </concept>
 <concept>
  <concept_id>00000000.00000000.00000000</concept_id>
  <concept_desc>Do Not Use This Code, Generate the Correct Terms for Your Paper</concept_desc>
  <concept_significance>100</concept_significance>
 </concept>
</ccs2012>
\end{CCSXML}

\ccsdesc[500]{Do Not Use This Code~Generate the Correct Terms for Your Paper}
\ccsdesc[300]{Do Not Use This Code~Generate the Correct Terms for Your Paper}
\ccsdesc{Do Not Use This Code~Generate the Correct Terms for Your Paper}
\ccsdesc[100]{Do Not Use This Code~Generate the Correct Terms for Your Paper}
}
\keywords{Data Science Agent, Multi Agent System, Self-Evolving, Agent Skill, Agentic Reinforcement Learning}


\maketitle

\eat{
\section{Introduction}
Autonomous data science has long been a central pursuit of both academia and industry, aiming to automate the end-to-end lifecycle of data-driven discovery, ranging from exploratory data analysis to complex predictive modeling~\cite{DBLP:conf/ijcnn/0001AHKSB0PRRWG20, DBLP:journals/cacm/BieRHHSW22, MUMUNI2025113, DBLP:journals/jair/ZollerH21}. Recently, the rapid advancement of Large Language Models (LLMs) has created a transformative opportunity for autonomous data science, offering unprecedented capabilities in natural language understanding, code generation, and logical reasoning~\cite{DBLP:journals/corr/abs-2303-18223, DBLP:journals/tist/NaveedKQSAUABM25, DBLP:journals/corr/abs-2402-06196}. These models have rapidly evolved into autonomous agents capable of iterative reasoning and sophisticated tool use to solve goal-oriented tasks~\cite{DBLP:journals/fcsc/WangMFZYZCTCLZWW24, DBLP:conf/ijcai/GuoCWCPCW024, DBLP:journals/chinaf/XiCGHDHZWJZZFWXZWJZLYDW25}. In this context, LLM-based data science agents have emerged as a powerful new paradigm, holding the potential to automate entire workflows and significantly accelerate data-driven scientific discovery~\cite{sun2025survey, DBLP:journals/corr/abs-2508-02744, DBLP:journals/corr/abs-2509-23988, DBLP:journals/corr/abs-2510-23587, DBLP:journals/corr/abs-2510-04023}.

Despite these advances, a substantial gap remains between executing atomic operations and addressing real-world, complex data science tasks. While state-of-the-art agents, such as DeepAnalyze~\cite{DBLP:journals/corr/abs-2510-16872} and DataMind~\cite{DBLP:journals/corr/abs-2509-25084}, have demonstrated strong performance in fundamental scenarios (e.g., data-driven question answering), their performance degrades noticeably on complex data science tasks, as shown in Table~\ref{tab:compare}. Such tasks are inherently exploratory and iterative, requiring agents to perform multiple execution attempts, debug persistent coding errors, and sustain coherent multi-step reasoning chains over long horizons~\cite{DBLP:conf/emnlp/HuangLYZLWHHLZL24, qiang2025mledojo, DBLP:conf/iclr/ChenCNZWYLLWLDX25}. In these settings, the limitations of existing approaches become increasingly pronounced.

These limitations stem from two key factors. First, existing data science agents are constrained by limited and inflexible action spaces. Current approaches predominantly follow two paradigms: tool-based agents~\cite{li2025autokaggle} and code-based agents~\cite{DBLP:journals/corr/abs-2510-16872, DBLP:journals/corr/abs-2509-25084}. Tool-based agents operate over static, pre-defined tool libraries. While effective for common workflows, they often fail when task-specific requirements exceed their fixed capability boundaries. In contrast, code-based agents directly generate executable scripts to solve tasks. Although more flexible in principle, such agents rely heavily on the backbone model’s static knowledge and frequently encounter practical issues such as package incompatibilities, API mismatches, or latent function bugs, resulting in brittle behavior in real-world environments.

Second, existing agents lack effective mechanisms to address the long-context nature of data science tasks~\cite{DBLP:journals/corr/abs-2508-02744, DBLP:journals/corr/abs-2510-23587}. Complex data science workflows inevitably produce excessively long contexts through iterative exploration, including code snippets, intermediate results, execution logs, and error traces. Such contexts frequently suffer from \emph{lost-in-the-middle}~\cite{DBLP:journals/tacl/LiuLHPBPL24} effects or exceed token limits, leading to degraded reasoning quality and suboptimal decision-making.

Motivated by these limitations, the objective of this work is to develop a self-evolving data science agent that can continuously evolve its action space over time while effectively managing long-horizon contexts, thereby enabling autonomous solutions to complex, real-world data science tasks.

However, achieving this objective presents several non-trivial challenges:
(1) \emph{Adaptive Action Space Evolution}: How can an agent autonomously identify its own capability gaps and evolve its action space to meet diverse, task-specific requirements?
(2) \emph{Long-Horizon Context Management}: How can voluminous execution artifacts be compressed and distilled into compact, high-value representations without losing critical information?
(3) \emph{Effective Agent Training}: How can an agent be trained to solve tasks while simultaneously learning to manage both an evolving action space and long-term context?

To address these challenges, we propose EvoDS, an autonomous data science agent built upon a hierarchical multi-agent architecture. In this architecture, a Manager agent performs high-level reasoning and task orchestration, while specialized sub-agents act as domain experts with localized tool spaces. To overcome the limitations of static action spaces, EvoDS introduces an Adaptive Tool Evolution mechanism that enables sub-agents to synthesize, validate, and register new tools on demand. To mitigate context explosion, EvoDS further employs a Context Compression Strategy, in which sub-agents distill raw execution artifacts into concise, high-value summaries that are forwarded to the Manager agent. In addition, the Manager is equipped with a dedicated context summarization tool and autonomously determines when further context compression is necessary, enabling proactive rather than passive context management.

For effective training, we first leverage a teacher model to collect high-quality trajectories for supervised fine-tuning (SFT). Building upon this foundation, we introduce a multi-agent reinforcement learning (RL) algorithm that jointly optimizes task performance, action space evolution, and context management within a unified training process. Moreover, we provide theoretical analyses showing that hierarchical agent architectures yield a lower probability of tool-selection errors and that the EvoDS training objective is equivalent to solving an information bottleneck optimization problem.

The main contributions of this work are summarized as follows:
\begin{enumerate}
\item We propose EvoDS, an autonomous data science agent that adaptively evolves its action space and manages long-horizon context through a hierarchical, tool-integrated multi-agent architecture, enabling robust performance on complex data science tasks.
\item We introduce an \emph{Adaptive Tool Evolution} mechanism for on-demand action space evolution and a \emph{Context Compression Strategy} for efficient long-context reasoning.
\item We introduce a multi-agent reinforcement learning algorithm that jointly optimizes task performance, action space evolution, and context management.
\item We validate the effectiveness of EvoDS through both theoretical analysis and extensive empirical experiments.
\end{enumerate}
}

\section{Introduction}
\eat{
Autonomous data science has long been a central pursuit of both academia and industry, aiming to automate the end-to-end lifecycle of data-driven discovery, ranging from exploratory data analysis to complex predictive modeling~\cite{DBLP:conf/ijcnn/0001AHKSB0PRRWG20, DBLP:journals/cacm/BieRHHSW22, MUMUNI2025113, DBLP:journals/jair/ZollerH21}. Recently, the rapid advancement of Large Language Models (LLMs) has created a transformative opportunity for autonomous data science, offering unprecedented capabilities in natural language understanding, code generation, and logical reasoning~\cite{DBLP:journals/corr/abs-2303-18223, DBLP:journals/tist/NaveedKQSAUABM25, DBLP:journals/corr/abs-2402-06196}. These models have rapidly evolved into autonomous agents capable of iterative reasoning and sophisticated tool use to solve goal-oriented tasks~\cite{DBLP:journals/fcsc/WangMFZYZCTCLZWW24, DBLP:conf/ijcai/GuoCWCPCW024, DBLP:journals/chinaf/XiCGHDHZWJZZFWXZWJZLYDW25}. In this context, LLM-based data science agents have emerged as a powerful new paradigm, holding the potential to automate entire workflows and significantly accelerate data-driven scientific discovery~\cite{sun2025survey, DBLP:journals/corr/abs-2508-02744, DBLP:journals/corr/abs-2509-23988, DBLP:journals/corr/abs-2510-23587, DBLP:journals/corr/abs-2510-04023}.

Early data science agents typically rely on human-designed workflows or static execution pipelines. For example, DS-Agent~\cite{DBLP:conf/icml/GuoD0C0024} and AutoKaggle~\cite{li2025autokaggle} solve machine learning tasks by following manually predefined workflows. More recent autonomous agents, such as DeepAnalyze~\cite{DBLP:journals/corr/abs-2510-16872}, adopt an action-based paradigm by defining a small set of atomic actions (e.g., \textit{<Analyze>}, \textit{<Understand>}, \textit{<Code>}, \textit{<Execute>}, and \textit{<Answer>}) and iteratively selecting actions based on environmental feedback to solve given tasks.

Despite recent progress, a substantial gap remains between executing atomic operations and addressing real-world, complex data science tasks. Although state-of-the-art agents perform well on relatively fundamental tasks such as data-driven question answering, their performance degrades noticeably on challenging data science tasks, as demonstrated in our experiments. Such tasks are inherently exploratory and iterative, often requiring repeated trial-and-error execution, persistent debugging of non-trivial code errors, and coherent multi-step reasoning over long horizons~\cite{DBLP:conf/emnlp/HuangLYZLWHHLZL24, qiang2025mledojo, DBLP:conf/iclr/ChenCNZWYLLWLDX25}. Under these conditions, the limitations of existing approaches become increasingly pronounced.

These limitations stem from two key factors. First, existing data science agents lack the ability to adaptively acquire new capabilities from experience. Workflow-based data science agents~\cite{DBLP:conf/icml/GuoD0C0024, li2025autokaggle} rely on fixed execution pipelines, which limits their autonomy and flexibility. In contrast, autonomous data science agents~\cite{DBLP:journals/corr/abs-2510-16872} typically solve tasks by iteratively generating code or invoking tools. However, code generation frequently encounters practical issues in real-world environments, including package incompatibilities, API mismatches, and latent implementation bugs, while failing to distill successful experience into modular and reusable capabilities. In addition, tool sets provided to agents are typically static and pre-defined, making them unable to handle scenarios that exceed their predefined capability boundaries. Overall, existing agents are unable to systematically accumulate new capabilities during execution, resulting in poor performance on challenging data science tasks.

Second, existing agents lack effective mechanisms to address the long-context nature of data science tasks~\cite{DBLP:journals/corr/abs-2508-02744, DBLP:journals/corr/abs-2510-23587}. Complex data science workflows inevitably produce excessively long contexts through iterative exploration, including code snippets, intermediate results, execution logs, and error traces. Such contexts frequently suffer from \emph{lost-in-the-middle}~\cite{DBLP:journals/tacl/LiuLHPBPL24} effects or exceed token limits, leading to degraded reasoning quality and suboptimal decision-making.
}

\beftext{
Data science aims to extract knowledge and insights from data, encompassing tasks ranging from exploratory data analysis to advanced predictive modeling~\cite{DBLP:conf/ijcnn/0001AHKSB0PRRWG20, DBLP:journals/cacm/BieRHHSW22, MUMUNI2025113, DBLP:journals/jair/ZollerH21}. With the rapid progress of large language models (LLMs)~\cite{DBLP:journals/corr/abs-2303-18223, DBLP:journals/tist/NaveedKQSAUABM25, DBLP:journals/corr/abs-2402-06196}, LLM-based agents have shown strong capabilities in reasoning and tool use for solving complex tasks autonomously~\cite{DBLP:journals/fcsc/WangMFZYZCTCLZWW24, DBLP:conf/ijcai/GuoCWCPCW024, DBLP:journals/chinaf/XiCGHDHZWJZZFWXZWJZLYDW25}. Building upon these advances, LLM-based data science agents have emerged as a powerful new paradigm, holding the potential to automate end-to-end data science workflows that traditionally require substantial human expertise and intervention~\cite{sun2025survey, DBLP:journals/corr/abs-2508-02744, DBLP:journals/corr/abs-2509-23988, DBLP:journals/corr/abs-2510-23587, DBLP:journals/corr/abs-2510-04023}.

Initial data science agents typically relied on human-designed workflows, which inherently constrained their autonomy and flexibility~\cite{DBLP:conf/icml/GuoD0C0024, DBLP:conf/icml/TriratJH25, li2025autokaggle}. For example, DS-Agent~\cite{DBLP:conf/icml/GuoD0C0024} and AutoKaggle~\cite{li2025autokaggle} addressed machine learning tasks by executing manually predefined pipelines.
More recently, several works have explored autonomous data science agents that select actions based on environmental feedback rather than following fixed workflows~\cite{DBLP:journals/corr/abs-2510-16872, DBLP:journals/corr/abs-2505-23723, DBLP:journals/corr/abs-2509-25084}. For instance, DeepAnalyze~\cite{DBLP:journals/corr/abs-2510-16872} formulates problem solving as an iterative decision process over a small set of atomic actions (e.g., \textit{<Analyze>}, \textit{<Understand>}, \textit{<Code>}, \textit{<Execute>}, and \textit{<Answer>}).
Despite these advances, existing approaches still struggle on challenging data science tasks, as evidenced by our experimental results. Such tasks are inherently exploratory and iterative, often requiring extensive trial-and-error execution, persistent debugging of non-trivial code errors, and coherent multi-step reasoning over long horizons~\cite{DBLP:conf/emnlp/HuangLYZLWHHLZL24, qiang2025mledojo, DBLP:conf/iclr/ChenCNZWYLLWLDX25}.


These limitations stem from two fundamental factors. 
First, existing data science agents mainly rely on the intrinsic capabilities of the underlying LLM or domain knowledge acquired during post-training, resulting in static capabilities at inference time. They lack the ability to distill successful trial-and-error experiences into modular and reusable capabilities. Moreover, data science tasks are highly diverse and may exceed the coverage of training-time knowledge. As a result, existing agents tend to treat each task in isolation, despite sharing common underlying principles, leading to limited effectiveness on challenging problems.
Second, data science tasks naturally produce long contexts through iterative exploration, accumulating data previews, code snippets, tool invocations, and execution logs. Existing agents lack effective mechanisms to manage such contexts~\cite{DBLP:journals/corr/abs-2508-02744, DBLP:journals/corr/abs-2510-23587}, which can trigger the \emph{lost-in-the-middle}~\cite{DBLP:journals/tacl/LiuLHPBPL24} phenomenon or exceed token limits, ultimately degrading reasoning quality and decision making.

To overcome these limitations, this work aims to develop a self-evolving data science agent that can adaptively acquire new capabilities from successful experience while effectively managing long execution contexts, enabling continual improvement on challenging long-horizon tasks. However, achieving this goal poses several non-trivial challenges:
(1) \emph{autonomous capability Acquisition}: How can an agent autonomously identify its capability gaps and acquire new capabilities through interaction with the environment?
(2) \emph{Long-Horizon Context Management}: How can long execution contexts be effectively managed without losing critical information?
(3) \emph{Effective Agent Coordination}: How can an agent effectively coordinate task execution, capability acquisition, and context management within a unified framework?}


\hao{
Automating data science workflows, from data preprocessing and feature engineering to model selection, evaluation, and visualization, has long been a central goal of machine learning research, driven by the growing demand for scalable, accessible, and efficient analytics across scientific and industrial domains~\cite{DBLP:journals/cacm/BieRHHSW22, MUMUNI2025113, DBLP:journals/jair/ZollerH21}. Traditional AutoML systems have made important progress in automating isolated stages of this pipeline, yet they typically rely on rigid search spaces and predefined operators, limiting flexibility in open-ended analytical tasks~\cite{DBLP:journals/jair/ZollerH21, he2021automl}. Recent advances in large language models (LLMs) have renewed interest in autonomous data science agents capable of reasoning over natural language instructions, writing executable code, invoking tools, and coordinating multi-step workflows~\cite{DBLP:journals/fcsc/WangMFZYZCTCLZWW24, DBLP:conf/ijcai/GuoCWCPCW024, DBLP:conf/kdd/0002CWW00Z0DLP025}. By integrating language reasoning with perception, memory, and action, LLM-based agents offer a promising pathway toward end-to-end automation of realistic data science pipelines with minimal human intervention~\cite{sun2025survey, DBLP:journals/corr/abs-2508-02744, DBLP:journals/corr/abs-2509-23988, DBLP:journals/corr/abs-2510-23587, DBLP:journals/corr/abs-2510-04023}.

Despite this promise, existing autonomous data science agents remain largely constrained by predefined workflows and fixed action spaces~\cite{DBLP:conf/icml/GuoD0C0024, DBLP:conf/icml/TriratJH25, li2025autokaggle, DBLP:journals/corr/abs-2510-16872}. Recent systems such as DS-Agent~\cite{DBLP:conf/icml/GuoD0C0024}, AutoKaggle~\cite{li2025autokaggle}, and DeepAnalyze~\cite{DBLP:journals/corr/abs-2510-16872} demonstrate encouraging results by orchestrating LLMs with code execution and tool calling for tasks including feature engineering, model training, and evaluation. However, these approaches primarily rely on scripted pipelines or hand-designed control logic, treating tools as static primitives rather than learnable capabilities. Consequently, agents repeatedly rediscover similar solutions through trial-and-error, instead of abstracting successful behaviors into reusable skills. Moreover, prior experience is typically discarded after task completion, preventing systematic improvement over time. This stands in contrast to the problem-solving process of human data scientists, which is inherently exploratory, iterative, and experience-driven. These observations raise a fundamental research question: \emph{can we design autonomous data science agents that not only execute tasks, but also support open-ended exploratory analysis and systematically accumulate experience across iterations and tasks?}

However, achieving this goal introduces two tightly coupled challenges. 
(1) \textbf{Automatic Skill Acquisition}: How can data science agents acquire reusable skills from experience, abstracting successful problem-solving procedures into persistent actions or operators that enable self-improvement across tasks? Without such skill acquisition, agents remain confined to one-off reasoning and cannot systematically benefit from prior exploration. 
(2) \textbf{Explosive Context Management}: How can agents effectively manage rapidly growing context arising from iterative experimentation, intermediate artifacts, and evolving skills? Data science workflows naturally require long-horizon iterative reasoning, while continual skill evolution further enlarges the action space and execution context, aggravating long-context challenges. Such context explosion not only induces the well-known lost-in-the-middle phenomenon~\cite{DBLP:journals/tacl/LiuLHPBPL24}, degrading long-horizon reasoning performance, but also hampers effective experience accumulation by obscuring which actions and decisions were truly responsible for success~\cite{DBLP:journals/corr/abs-2508-02744, DBLP:journals/corr/abs-2510-23587}. 
Consequently, autonomous data science demands principled mechanisms for jointly learning skills and regulating memory, allowing agents to preserve task-critical information while suppressing irrelevant details under bounded context constraints. 
}

\beftext{
To address these challenges, we propose EvoDS, a self-evolving data science agent built upon a hierarchical multi-agent architecture. In EvoDS, tools serve as the concrete carriers of agent capabilities. A Manager agent performs high-level reasoning and task orchestration, while specialized sub-agents act as domain experts with localized capability spaces. To enable capability acquisition from experience, EvoDS introduces an \emph{Autonomous Capability Acquisition Mechanism} that allows sub-agents to synthesize, validate, and register new tools on demand. To mitigate context explosion, EvoDS employs an \emph{Adaptive Context Compression Strategy}, where the Manager agent is equipped with a dedicated context summarization tool and adaptively infers when to invoke it based on accumulated contexts, enabling proactive context management.

To effectively coordinate task execution, enable the distillation of execution experiences into modular and reusable capabilities, and support adaptive context management within a unified framework, we adopt an online reinforcement learning (RL) paradigm for training. Specifically, we first leverage a teacher model to collect high-quality trajectories for supervised fine-tuning (SFT), providing a stable initialization. Building upon this foundation, we introduce a multi-agent RL algorithm that optimizes task performance, capability acquisition, and context management within a unified training process. We further provide theoretical analyses showing that hierarchical agent architectures reduce the probability of tool-selection errors, and that the reinforcement learning objective is equivalent to solving an information bottleneck optimization problem, which minimizes task-irrelevant information while preserving task-critical signals.

The main contributions of this work are summarized as follows:
\begin{enumerate}
    \item We propose EvoDS, a self-evolving data science agent that adaptively acquires new capabilities from successful experience and manages long-horizon contexts through a hierarchical multi-agent architecture.
    \item We develop an \emph{Autonomous Capability Acquisition Mechanism} that enables on-demand acquisition of new capabilities from successful experience via tool synthesis, along with a \emph{Adaptive Context Compression Strategy} that supports autonomous long-context management for challenging tasks.
    \item We introduce a multi-agent reinforcement learning algorithm that jointly optimizes task performance, capability acquisition, and context management.
    \item Extensive experiments demonstrate the effectiveness of EvoDS, achieving an average improvement of 28.9\% over SOTA open-source baselines across four benchmarks.
\end{enumerate}
}

\hao{
To address the above challenges, we propose \textbf{EvoDS}, a self-evolving autonomous data science agent that integrates skill acquisition and context regulation within a hierarchical multi-agent architecture. EvoDS employs a Manager Agent to coordinate specialized agents for subtasks such as data handling, modeling, visualization, and debugging, where each agent maintains scope-specific data science skills. This design decomposes complex workflows into atomic executable subtasks for fine-grained skill evolution, while localizing long execution contexts for more effective context management. To maintain consistency, agents share a global memory for overall task objectives and maintain local memories for subtask-specific context. To enable experience-driven self-improvement, EvoDS introduces an \emph{Autonomous Skill Acquisition} (ASA) mechanism that abstracts successful problem-solving behaviors into reusable executable skills, allowing the action space to progressively expand over time. EvoDS further incorporates an \emph{Adaptive Context Compression} (ACC) strategy to selectively retain task-critical information while suppressing irrelevant details under bounded context windows. The above components are jointly optimized through a two-stage agentic reinforcement learning (RL) framework. We first collect trajectories from a teacher model for supervised fine-tuning (SFT), and then perform online RL to jointly optimize task performance, skill acquisition, and context management. This design enables EvoDS to simultaneously learn autonomous task execution, progressive skill acquisition, and long-horizon context management within a unified multi-agent framework.


Our main contributions are summarized as follows:
\begin{itemize}
\item We propose EvoDS, a self-evolving autonomous data science agent that integrates Autonomous Skill Acquisition mechanism and Adaptive Context Compression Strategy within a unified hierarchical multi-agent framework, enabling agents to progressively acquire reusable operational skills while actively controlling long-term context.
\item We design a two-stage multi-agent training scheme that jointly optimizes task execution, skill acquisition, and context regulation, allowing EvoDS to improve with experience while remaining robust under context constraints.
\item 
We theoretically show that the hierarchical architecture reduces tool-selection errors and that the reinforcement learning objective aligns with an information bottleneck, encouraging retention of task-critical information while filtering irrelevant signals.
\item 
Extensive experiments on four benchmarks demonstrate that EvoDS significantly outperforms state-of-the-art open-source agents, achieves robust long-horizon performance without out-of-token failures, and exhibits consistent improvement from accumulated experience.
\end{itemize}
}

\section{Related Works}
\subsection{Data Science Agents}
\eat{
Data science agents aim to automate a wide range of data-related tasks, including exploratory data analysis and predictive modeling~\cite{zhang2024datacopilot, DBLP:conf/www/ChenYZ000HMZ25, DBLP:journals/corr/abs-2601-10402, DBLP:journals/corr/abs-2506-16499, DBLP:journals/corr/abs-2510-08511, nam2025mlestar, fang2025mlzero}. 
Initial approaches typically adopt a workflow-based paradigm, where predefined pipelines or execution graphs automate specific stages of the data science lifecycle~\cite{DBLP:conf/icml/GuoD0C0024, DBLP:conf/icml/TriratJH25, li2025autokaggle, liu2025mmagent}. For example, DS-Agent~\cite{DBLP:conf/icml/GuoD0C0024}, AutoML-Agent~\cite{DBLP:conf/icml/TriratJH25}, and AutoKaggle~\cite{li2025autokaggle} rely on predefined pipelines for machine learning tasks. Although effective for well-structured tasks, such systems exhibit limited flexibility when faced with diverse data science requirements. 
More recently, autonomous data science agents have been proposed that leverage LLMs to dynamically plan actions, invoke tools, or generate executable code to solve data science tasks with minimal human intervention~\cite{DBLP:journals/corr/abs-2510-16872, DBLP:journals/corr/abs-2505-23723, DBLP:journals/corr/abs-2509-25084}.
For instance, DeepAnalyze~\cite{DBLP:journals/corr/abs-2510-16872} iteratively generates and executes code to solve data science tasks. 
However, existing agents lack the ability to acquire capabilities from experience and to effectively manage long-horizon execution contexts, leading to degraded performance in complex, long-horizon tasks. In contrast, our work addresses these challenges by enabling autonomous capability acquisition and proactive long-context management within a unified agent framework.
}

\eat{
Data science agents aim to automate a wide range of data-related tasks, such as exploratory data analysis and predictive modeling~\cite{zhang2024datacopilot, DBLP:conf/www/ChenYZ000HMZ25, DBLP:journals/corr/abs-2601-10402, DBLP:journals/corr/abs-2506-16499, DBLP:journals/corr/abs-2510-08511, nam2025mlestar, fang2025mlzero}.
Initial studies primarily adopt workflow-based approaches with predefined pipelines to automate specific data science tasks~\cite{DBLP:conf/icml/GuoD0C0024, DBLP:conf/icml/TriratJH25, li2025autokaggle, liu2025mmagent}, which limits flexibility under diverse task requirements. More recent work has explored autonomous data science agents that dynamically plan actions, invoke tools, or generate executable code to solve data science tasks~\cite{DBLP:journals/corr/abs-2510-16872, DBLP:journals/corr/abs-2505-23723, DBLP:journals/corr/abs-2509-25084}. However, existing agents generally lack mechanisms for acquiring capabilities from experience and for effective long-horizon context management, leading to degraded performance on complex tasks. In contrast, our work addresses these challenges by enabling autonomous capability acquisition and adaptive long-context management within a unified agent framework.}

Recent advances in LLMs have driven the development of autonomous data science agents, which aim to automate a wide range of data-centric tasks, including exploratory data analysis, feature construction, predictive modeling, and result interpretation~\cite{zhang2024datacopilot, DBLP:conf/www/ChenYZ000HMZ25, DBLP:journals/corr/abs-2601-10402, DBLP:journals/corr/abs-2506-16499, DBLP:journals/corr/abs-2510-08511, nam2025mlestar, fang2025mlzero}. Early approaches mainly rely on workflow-based paradigms with predefined pipelines~\cite{DBLP:conf/icml/GuoD0C0024, DBLP:conf/icml/TriratJH25, li2025autokaggle, liu2025mmagent}. For example, DS-Agent~\cite{DBLP:conf/icml/GuoD0C0024} follows a pipeline that retrieves relevant cases and iteratively generates and debugs code. More recent work shifts toward fully autonomous agents that can dynamically plan and execute actions based on the current state~\cite{DBLP:journals/corr/abs-2510-16872, DBLP:journals/corr/abs-2505-23723, DBLP:journals/corr/abs-2509-25084}. For instance, DeepAnalyze~\cite{DBLP:journals/corr/abs-2510-16872} defines a set of actions and enables the agent to iteratively reason and act. Despite these advances, existing approaches remain constrained by predefined workflows or fixed action spaces, treating actions as static primitives rather than learnable skills, and thus failing to abstract reusable skills from experience. Additionally, they lack effective long-context management for inherently long-horizon data science tasks.

\subsection{Self-Evolving Strategies in LLM Agents}
Due to the large parameter scale of LLMs, many studies explore self-evolving strategies for LLM agents at inference time~\cite{DBLP:journals/corr/abs-2507-21046, DBLP:journals/corr/abs-2508-07407}, improving prompts, memories, tools, skills, or agent frameworks without updating model parameters. Prompt-based evolution methods iteratively refine prompts or reasoning templates according to task feedback, thereby improving task-solving performance and reasoning quality~\cite{DBLP:journals/corr/abs-2406-07496, DBLP:conf/iclr/Guo0GLS0L0Y24, DBLP:conf/icml/FernandoBMOR24}. Memory-based approaches accumulate and retrieve experiences during task execution to support continual improvement and long-term adaptation~\cite{xu2025amem, DBLP:conf/aaai/ZhongGGYW24, DBLP:journals/corr/abs-2504-19413, DBLP:conf/icml/WangMFN25}. Tool- or skill-based evolution methods further enable agents to synthesize, modify, and reuse executable tools during problem solving~\cite{DBLP:conf/iclr/YuanC000J24, DBLP:journals/corr/abs-2603-01145}. In addition, topology-based evolution in multi-agent systems dynamically adjusts agent coordination structures and collaboration strategies for different task types~\cite{DBLP:conf/icml/ZhugeWKFKS24, DBLP:conf/iclr/ZhangXYTCCZCHWZ25}. Among these directions, tool- and skill-based evolution is most related to our work, while EvoDS further integrates skill acquisition and context regulation within a unified architecture for challenging data science tasks.

\eat{
Recent studies have explored self-evolving strategies for autonomous agents, enabling capability adaptation during inference based on interaction experience and environmental feedback without updating LLM parameters~\cite{DBLP:journals/corr/abs-2507-21046, DBLP:journals/corr/abs-2508-07407}. Existing approaches span several paradigms, including memory-based evolution for experience accumulation and reuse~\cite{xu2025amem, DBLP:conf/aaai/ZhongGGYW24, DBLP:journals/corr/abs-2504-19413, DBLP:conf/icml/WangMFN25}, prompt-based evolution for refining prompts or reasoning templates~\cite{DBLP:journals/corr/abs-2406-07496, DBLP:conf/iclr/Guo0GLS0L0Y24, DBLP:conf/icml/FernandoBMOR24}, tool-based evolution for synthesizing, modifying, or reusing executable tools~\cite{DBLP:conf/iclr/YuanC000J24, lu2026dont}, and topology-based evolution in multi-agent systems to improve coordination~\cite{DBLP:conf/icml/ZhugeWKFKS24, DBLP:conf/iclr/ZhangXYTCCZCHWZ25, DBLP:conf/icml/ZhangNF00025, DBLP:conf/iclr/ShangLZMLXL25}. Among them, tool-based evolution enables execution-based evaluation of acquired capabilities, whereas other paradigms are often harder to assess systematically. Consequently, we adopt tools as explicit carriers of agent capabilities for self-evolving data science agents.}

\subsection{Context Compression for LLM Agents.}
Context compression is essential for LLM-based agents to support long-horizon reasoning and multi-turn interactions~\cite{DBLP:journals/corr/abs-2507-13334}. Existing methods mainly follow two strategies. One line of work compresses context through summarization once the token length exceeds predefined limits, reducing memory usage while preserving coarse-grained information~\cite{DBLP:conf/icml/LeeCFCF24, DBLP:conf/acl/FeiNZH0D024, DBLP:journals/corr/abs-2510-00615}. Another line of work decomposes complex tasks into subtasks and retains only task-critical information by folding intermediate execution processes, thereby alleviating long-context interference during multi-step reasoning~\cite{DBLP:journals/corr/abs-2510-11967, DBLP:journals/corr/abs-2512-22733}. Different from prior work, EvoDS introduces an adaptive context compression strategy that dynamically determines when to compress context and what information to retain.

\eat{
Context management is essential for LLM-based agents to support long-horizon reasoning and multi-turn interactions~\cite{DBLP:journals/corr/abs-2507-13334}. Existing approaches can be broadly categorized into two paradigms. Memory-based methods maintain a memory module to store past interactions or salient information, enabling future reasoning under context length constraints~\cite{DBLP:conf/acl/ChenLPSSZGY25, zhang2025gmemory, DBLP:conf/icml/LeeCFCF24}. In contrast, context compression methods enable long-horizon reasoning by either explicitly abstracting historical interactions into compact summaries~\cite{DBLP:conf/icml/LeeCFCF24, DBLP:conf/acl/FeiNZH0D024, DBLP:journals/corr/abs-2510-00615} or by reducing computational overhead~\cite{DBLP:conf/iclr/00010WWCW24, he2024camelot}. In this work, we adopt tools as carriers of cross-task capabilities and employ context compression for intra-task context management, avoiding the additional complexity introduced by explicit memory retrieval.}

\begin{figure*}[!t]
  \centerline{\includegraphics[width=\linewidth]{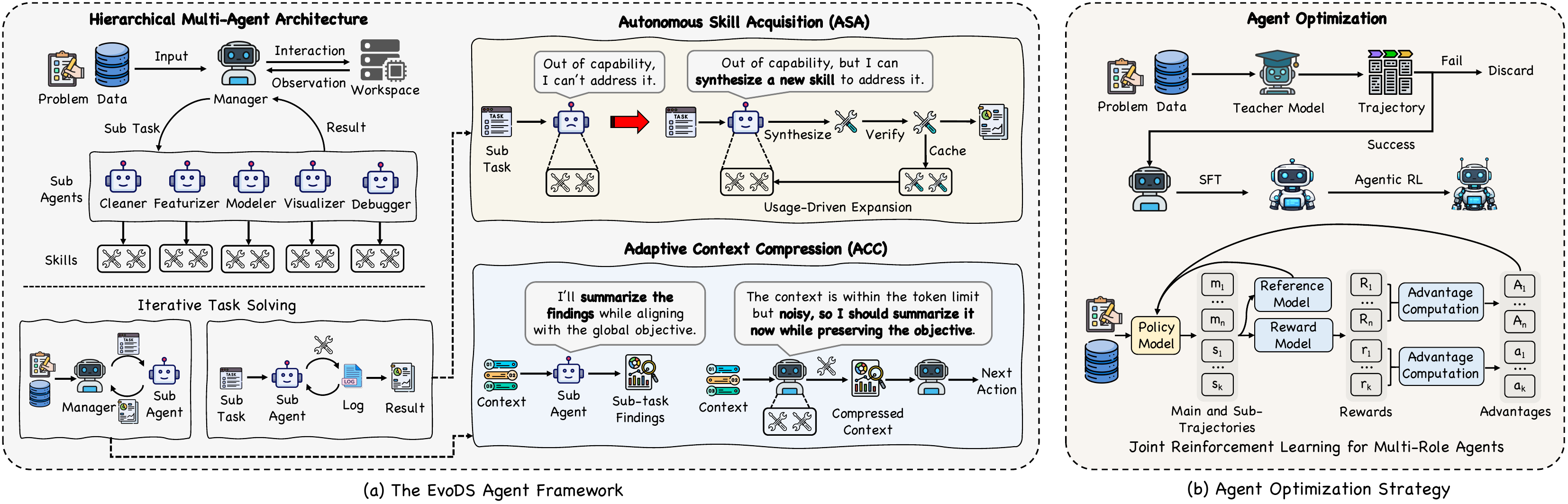}}
  \caption{Overview of EvoDS. (a) EvoDS adopts a hierarchical multi-agent architecture with autonomous skill acquisition and adaptive context compression for data science tasks. (b) The agent is trained via SFT and agentic RL to jointly optimize task execution, skill acquisition, and context management.}
  \label{fig:framework}
\end{figure*}


\eat{
RL has been shown to be an effective paradigm for training LLM-based agents, particularly in improving exploration, robustness, and adaptability in dynamic environments. 
Most prior work focuses on single-agent optimization~\cite{DBLP:journals/corr/abs-2505-23723, DBLP:journals/corr/abs-2509-25084, DBLP:journals/corr/abs-2510-16872}. Algorithms such as PPO~\cite{DBLP:journals/corr/SchulmanWDRK17}, RLOO~\cite{DBLP:conf/acl/AhmadianCGFKPUH24}, GRPO~\cite{DBLP:journals/corr/abs-2402-03300}, and REINFORCE++~\cite{DBLP:journals/corr/abs-2501-03262} align agents with sparse reward signals, including code executability or answer correctness. More recently, multi-agent reinforcement learning has been explored to enable specialization and coordination among multiple agents~\cite{DBLP:journals/corr/abs-2511-13288, DBLP:journals/corr/abs-2510-04678, DBLP:conf/nips/Bo0DFW00W24, motwani2025malt, DBLP:conf/acl/Park0GOZK25}. Representative approaches include M-GRPO~\cite{DBLP:journals/corr/abs-2511-13288}, which trains agents with separate backbones to encourage role specialization, and MATPO~\cite{DBLP:journals/corr/abs-2510-04678}, which trains distinct agent roles (e.g., planner and worker) within a single LLM instance using role-specific prompts and reinforcement learning. However, existing training paradigms typically assume fixed environments and static action spaces, and do not explicitly address the joint optimization of task performance, action space evolution, and long-horizon context management. These limitations motivate the development of joint optimization frameworks tailored to autonomous data science agents, as explored in this work.
}
\subsection{Agent Optimization}
Agent optimization mainly follows two paradigms: SFT and RL. SFT improves agent capabilities by imitating expert trajectories or high-quality demonstrations, providing stable initialization for downstream decision making and tool use. RL further enables agents to learn from environment feedback through trial-and-error interactions, improving exploration, robustness, and adaptability in dynamic environments~\cite{DBLP:journals/corr/abs-2509-02547}. Most prior work focuses on single-agent optimization~\cite{DBLP:journals/corr/abs-2505-23723, DBLP:journals/corr/abs-2509-25084, DBLP:journals/corr/abs-2510-16872}, using algorithms such as PPO~\cite{DBLP:journals/corr/SchulmanWDRK17}, RLOO~\cite{DBLP:conf/acl/AhmadianCGFKPUH24}, GRPO~\cite{DBLP:journals/corr/abs-2402-03300}, and REINFORCE++~\cite{DBLP:journals/corr/abs-2501-03262} with sparse reward signals (e.g., answer correctness). More recently, multi-agent RL has been explored to enable agent specialization and coordination~\cite{DBLP:journals/corr/abs-2511-13288, DBLP:journals/corr/abs-2510-04678, DBLP:conf/nips/Bo0DFW00W24, motwani2025malt, DBLP:conf/acl/Park0GOZK25}. However, existing methods typically assume fixed environments and static action spaces. In contrast, EvoDS jointly optimizes task performance, skill acquisition, and long context management within a unified multi-agent framework.

\eat{
\section{Preliminary}

In this section, we formalize the problem of autonomous data science and introduce the notations used throughout this paper.

\subsection{Problem Definition}

We consider the problem of \emph{autonomous data science}, where an agent is required to solve a data science task without human intervention. Each task is specified by a natural language description and associated datasets or documents, and the agent must produce a valid solution by interacting with an execution environment.

Formally, a data science task is defined as:
\begin{equation}
\mathcal{T} = \langle q, \mathcal{D}, \mathcal{E} \rangle,
\end{equation}
where $q \in \mathcal{Q}$ denotes a natural language task description, $\mathcal{D}$ represents the input data, and $\mathcal{E}$ denotes the execution environment.

\emph{Data.}
The input data $\mathcal{D}$ consists of one or more data files:
\begin{equation}
\mathcal{D} = \{ d_1, d_2, \dots, d_N \},
\end{equation}
where each $d_i$ may take arbitrary formats, such as CSV, JSON, Excel, images, or plain text. No assumptions are made about the schema, size, or modality of the data, and the files may exhibit heterogeneous formats.

\emph{Execution Environment.}
The execution environment $\mathcal{E}$ provides a set of executable interfaces that allow the agent to inspect data, run code, and invoke tools. At each step, the environment returns execution feedback, including outputs, logs, or error messages.

\emph{Objective.}
Given a task $\mathcal{Q}$, the agent aims to produce a solution $\hat{y}$ that satisfies the task requirements. The solution may take various forms, such as answers, transformed datasets, evaluation results, visualizations, or structured reports. Task success is evaluated by a task-specific evaluation function:
\begin{equation}
R_{\text{outcome}} = \mathcal{F}(\hat{y}, y),
\end{equation}
which measures the correctness or quality of the agent’s output.

\subsection{Agent--Environment Interaction}

We model the problem as a sequential decision-making process. At step $t$, the agent observes a state $s_t$ derived from its current context, which includes the task description, execution history, and intermediate results. The agent then selects an action $a_t$ from its action space $\mathcal{A}_t$:
\begin{equation}
a_t \sim \pi_\theta(a \mid s_t),
\end{equation}
where $\pi_\theta$ denotes the agent policy parameterized by $\theta$.

After executing $a_t$ in the environment, the agent receives feedback $o_t$ (e.g., execution outputs or error traces), and the context is updated accordingly. This interaction continues until a termination condition is met or a maximum number of steps is reached.
}

\eat{
\section{Preliminary}
In this section, we introduce the basic definitions of LLM-based agents and formalize the problem of autonomous data science.

\subsection{Basic Definitions}

\emph{Action.}
An action represents an executable operation issued by the agent to interact with the environment. Actions may include invoking tools, executing code, or directly generating outputs. We denote the action taken at step $t$ as $a_t \in \mathcal{A}_t$, where $\mathcal{A}_t$ denotes the action space available to the agent at that step.

\emph{Environment.}
The execution environment $\mathcal{E}$ provides the external interface through which actions are executed. Given an action $a_t$, the environment returns an observation $o_t$, which may include execution outputs, logs, error messages, or other feedback signals.

\emph{State and Policy.}
At each step $t$, the agent maintains a state $s_t$, which captures its current context, including the task description, interaction history, intermediate results, and accumulated observations. Based on the current state, the agent selects an action according to a policy $\pi_\theta$ parameterized by $\theta$:
\begin{equation}
a_t \sim \pi_\theta(a \mid s_t).
\end{equation}
In LLM-based agents, the policy indicates the LLM.

\emph{Agent--Environment Interaction.}
We model autonomous data science as a sequential decision-making process. Starting from an initial state derived from the task specification, the agent repeatedly selects actions according to its policy and executes them in the environment. After each action, the environment returns an observation, which is incorporated into the agent state for subsequent decision making. This interaction continues until a termination condition is met, such as successful task completion or reaching a predefined step limit.

\subsection{Problem Definition}

We consider the problem of autonomous data science, where an agent is required to solve a data science task without human intervention by interacting with an execution environment.

Formally, a data science task is defined as:
\begin{equation}
\mathcal{T} = \langle q, \mathcal{D}, \mathcal{E} \rangle,
\end{equation}
where $q \in \mathcal{Q}$ denotes a natural language task description, $\mathcal{D}$ represents the input data, and $\mathcal{E}$ denotes the execution environment.

The input data $\mathcal{D}$ consists of a collection of data files:
\begin{equation}
\mathcal{D} = \{ d_1, d_2, \dots, d_N \},
\end{equation}
where each $d_i$ may take arbitrary formats, including structured tables (e.g., CSV, Excel), semi-structured files (e.g., JSON), unstructured text, or other modalities. No assumptions are made regarding data schema, scale, or homogeneity.

Given a task $\mathcal{T}$, the agent aims to produce an output $\hat{y}$ that satisfies the task requirements. The output may take various forms, such as predictions, transformed datasets, visualizations, evaluation metrics, or structured reports. Task success is evaluated using a task-specific evaluation function:
\begin{equation}
R_{\text{outcome}} = \mathcal{F}(\hat{y}, y),
\end{equation}
which measures the correctness or quality of the agent’s solution with respect to the ground truth $y$.
}

\eat{
\section{Preliminary}
\subsection{Basic Definitions}

\begin{definition}\textbf{LLM-based Agent.}
An LLM-based agent $\pi_\theta$ is a system centered around a LLM with parameters $\theta$. The agent takes textual input and generates textual output via the LLM, which is then parsed into an action, such as tool invocation, code execution, or direct text output, enabling interaction with the environment.
\end{definition}

\begin{definition}\textbf{Environment.}
The environment $\mathcal{E}$ provides an interface for agent interaction. Given an action $a$, the environment executes the action and returns an observation $o$. In data science scenarios, the environment typically corresponds to a code execution environment that runs programs and returns outputs.
\end{definition}

\begin{definition}\textbf{Agent--Environment Interaction.}
Agent problem solving is modeled as an iterative agent--environment interaction. At step $t$, the agent selects an action $a_t$ based on its state $s_t$ (i.e., the maintained interaction context), executes it in the environment $\mathcal{E}$, receives an observation $o_t$, and updates its state. The process continues until task completion or termination.
\end{definition}

\subsection{Problem Definition}

A data science task is defined as $\mathcal{T} = \langle q, \mathcal{D} \rangle$, where $q \in \mathcal{Q}$ denotes a natural language task description, and $\mathcal{D}$ represents the input data. The input data may consist of a collection of data files $\mathcal{D} = \{ d_1, d_2, \dots, d_N \}$, where each data file $d_i$ may be in arbitrary formats, including structured, semi-structured, or unstructured data. Given $\mathcal{T}$, the agent aims to produce an output $\hat{y}$ that satisfies the task requirements. Task performance is evaluated using a task-specific evaluation function, defined as $R_{\text{outcome}} = \mathcal{F}(\hat{y})$, where $\mathcal{F}$ assesses the quality of the agent’s solution according to predefined criteria.}

\section{Preliminary}

\begin{definition}\textbf{Agent Skill.}
An agent skill is defined as $a = \langle n, d, c \rangle$, where $n$ denotes the skill name, $d$ is a textual description specifying the functionality and usage instructions of the skill, and $c$ denotes the corresponding executable code implementation.
\end{definition}

\begin{definition}\textbf{Agent Context.}
The context of an agent at step $t$ is defined as $C_t$, which contains the task description and the historical interaction information accumulated between the agent and the environment during the previous $t-1$ steps.
\end{definition}

\begin{definition}\textbf{Data Science Agent.}
A data science agent $\pi_\theta$ is an LLM-based agent parameterized by $\theta$, equipped with an action space $\mathcal{A}$ (e.g., code execution, tool invocation, or textual response). At step $t$, the agent selects an action $a_t \in \mathcal{A}$ based on the current context $C_t$, receives feedback from the environment, and updates its context to $C_{t+1}$.
\end{definition}

In this work, skills are treated as a type of action that can be invoked by the agent in the form of executable tools.

\begin{definition}\textbf{Task for Data Science Agent.}
Given a data science task $\mathcal{T} = \langle q, \mathcal{D} \rangle$, where $q$ is a natural language task description and $\mathcal{D} = \{d_1, \dots, d_N\}$ denotes the associated dataset, which may consist of multiple files in arbitrary formats, the goal of the agent is to generate a solution $\hat{y}$ through multi-step interaction with the environment. The quality of the solution is evaluated by a task-specific function $R_{\text{outcome}} = \mathcal{F}(\hat{y})$. For tasks with ground truth, $\mathcal{F}$ directly compares $\hat{y}$ with the reference answer; for open-ended tasks (e.g., machine learning or visualization), it may be defined via relative ranking or LLM-as-a-judge, depending on the benchmark.
\end{definition}

\eat{
\section{Preliminary}
\begin{definition}\textbf{Agent Skill.}
An agent skill is defined as $a = \langle n, d, c \rangle$, where $n$ denotes the skill name, $d$ is a textual description specifying usage instructions, and $c$ denotes the corresponding executable code. Skills are treated as actions that can be invoked by the agent in the form of tools.
\end{definition}

\begin{definition}\textbf{Agent Context.}
An agent skill is defined as $a = \langle n, d, c \rangle$, where $n$ denotes the skill name, $d$ is a textual description specifying usage instructions, and $c$ denotes the corresponding executable code. Skills are treated as actions that can be invoked by the agent in the form of tools.
\end{definition}

\begin{definition}\textbf{Data Science Agent.}
A data science agent $\pi_\theta$ is an LLM-based agent parameterized by $\theta$, equipped with an action space $\mathcal{A}$ (e.g., code execution, tool invocation, or textual response). At each step, the agent selects an action $a \in \mathcal{A}$ based on the current state, receives feedback from the environment, and updates its state.
\end{definition}

\begin{definition}\textbf{Data Science Task.}
A data science task $\mathcal{T}$ is defined as $\mathcal{T} = \langle q, \mathcal{D} \rangle$, where $q$ is a natural language task description and $\mathcal{D} = \{d_1, \dots, d_N\}$ denotes the associated dataset, which may consist of multiple files in arbitrary formats.
\end{definition}

\subsection{Problem Definition}
Given a data science task $\mathcal{T} = \langle q, \mathcal{D} \rangle$, the agent aims to generate a solution $\hat{y}$ through multi-step interaction with the environment. The quality of the solution is evaluated by a task-specific function $R_{\text{outcome}} = \mathcal{F}(\hat{y})$. For tasks with ground truth, $\mathcal{F}$ directly compares $\hat{y}$ with the reference answer; for open-ended tasks (e.g., machine learning or visualization), it may be defined via relative ranking or LLM-as-a-judge, depending on the benchmark.
}

\section{Methodology}
\eat{In this section, we present EvoDS, an autonomous data science agent designed to tackle complex, long-horizon data science tasks.} 
The overall framework of EvoDS is illustrated in Figure~\ref{fig:framework}. We first introduce its hierarchical multi-agent architecture, followed by an \emph{Autonomous Skill Acquisition} mechanism for on-demand skill expansion and a \emph{Adaptive Context Compression} strategy for effective context management. Finally, we present a reinforcement learning strategy for multi-role agents that jointly optimizes task performance, skill acquisition, and context management.

\subsection{Hierarchical Multi-Agent Architecture}

Data science workflows involve diverse stages such as data cleaning, feature engineering, and model development. Directly handling all these operations within a single agent often leads to excessive context accumulation and large action spaces, as the agent must simultaneously maintain long execution histories and diverse domain-specific skills. Such challenges significantly increase reasoning complexity and aggravate long-context interference.

To address these issues, EvoDS adopts a \emph{Hierarchical Multi-Agent Architecture} that decomposes data science workflows across specialized agents with localized skill spaces. At the top level, a \emph{Manager Agent} serves as the global controller responsible for high-level reasoning, task decomposition, and inter-agent coordination. At lower levels, specialized agents execute concrete subtasks within their respective domains. To maintain alignment with the overall task objective, each agent maintains two types of memory: a \emph{global memory} that stores overall task goals, and a \emph{local memory} that records subtask-specific objectives and intermediate execution context. Such a design enables agents to focus on localized reasoning while remaining aligned with the global objective.

Specifically, the Manager interacts directly with the execution environment using a minimal set of code execution tools $\mathcal{A}^{\text{man}}$ for environment exploration, data inspection, and program execution. Based on the global state, the Manager either performs high-level reasoning directly or delegates subtasks to specialized sub-agents. Furthermore, based on the characteristics of data science workflows, EvoDS employs multiple sub-agents, including a \emph{Cleaner}, \emph{Featurizer}, \emph{Modeler}, \emph{Visualizer}, and \emph{Debugger}, responsible for data cleaning, feature engineering, model development, visualization, and debugging, respectively. These sub-agents also function as callable tools for the Manager agent. Formally, invoking a sub-agent is treated as a high-level action:
\begin{equation}
a_i^{\text{sub}} \in \mathcal{A}^{\text{man}}, \quad
a_i^{\text{sub}} = \text{Invoke}(\pi^i_\theta, q_j),
\end{equation}
where $\pi^i_\theta$ denotes the $i$-th sub-agent and $q_j$ denotes the $j$-th subtask assigned to $\pi^i_\theta$.

To initialize agent capabilities, following AutoKaggle~\cite{li2025autokaggle} and ML-Tool-Bench~\cite{DBLP:journals/corr/abs-2512-00672}, we construct a set of basic data science skills that support common operations such as preprocessing, training, and visualization, and organize these skills according to sub-agent expertise. Each sub-agent $\pi^i_\theta$ maintains a localized action space $\mathcal{A}_i$:
\begin{equation}
\mathcal{A}_i \cap \mathcal{A}_j = \emptyset (i \neq j),
\end{equation}
which reduces reasoning complexity and improves decision effectiveness by restricting agents to domain-specific operations.

Under this hierarchical framework, the Manager iteratively selects actions according to the global workflow state, either interacting with the environment directly or delegating subtasks to specialized sub-agents. Sub-agents execute scope-specific operations within their localized skill spaces, return execution results to the Manager, and the process repeats until the task is completed.

\subsection{Autonomous Skill Acquisition}

To enable agents to acquire reusable skills from experience and support self-evolving behavior across tasks, we introduce an \emph{Autonomous Skill Acquisition} mechanism. During task execution, the Manager assigns subtasks to corresponding sub-agents according to their expertise scopes. However, assigned subtasks may exceed the existing skill space of a sub-agent. In such cases, the sub-agent can synthesize new executable skills to solve the given problem. Specifically, the proposed mechanism consists of four stages: \emph{Synthesis}, \emph{Verification}, \emph{Caching}, and \emph{Expansion}.

\textbf{Synthesis.}
In the synthesis stage, the agent uses the underlying LLM to generate a new executable skill $a_{\text{new}} = \langle n, d, c \rangle$ through prompting, where $n$ denotes the skill name, $d$ is a textual description specifying usage instructions, and $c$ denotes the corresponding executable code. Such a representation enables structured and parameterized tool invocation. The synthesized skill is designed to be task-agnostic and reusable across different tasks.

\textbf{Verification.}
In the verification stage, the agent invokes the synthesized skill according to its usage description $d$ and executes the corresponding code $c$ within the environment to solve the subtask. The skill is evaluated through execution feedback, including executability and output validity. Only skills that execute successfully and produce valid outputs are regarded as effective, while skills that fail or generate invalid outputs are discarded.

\textbf{Caching.}
Naively adding all synthesized skills into a sub-agent's action space may introduce a large number of infrequently used or low-quality skills, thereby degrading skill-selection accuracy. To address this issue, all validated skills are first stored in a synthesized skill repository during the caching stage. We denote the synthesized skill repository for the $i$-th sub-agent as:
\begin{equation}
\Delta \mathcal{A}_i = \{ a_{\text{new}} \mid a_{\text{new}} \notin \mathcal{A}_i \}.
\end{equation}

\textbf{Expansion.}
In the expansion stage, EvoDS adopts a usage-frequency-aware expansion strategy. Specifically, skills are identified by their names $n$, and repeated synthesis of the same skill indicates recurring capability requirements. We maintain a generation count $c(a_{\text{new}})$ for each synthesized skill $a_{\text{new}}$, and permanently add the skill into the sub-agent's action space only when its count exceeds a predefined threshold $\tau$:
\begin{equation}
\mathcal{A}_i \leftarrow \mathcal{A}_i \cup \{ a_{\text{new}} \in \Delta \mathcal{A}_i \mid c(a_{\text{new}}) \ge \tau \}.
\end{equation}
In practice, we set $\tau = 3$.

Through this mechanism, EvoDS incrementally acquires new skills during problem solving and externalizes them as reusable executable skills. This design enables the agent to autonomously identify and address capability gaps over time, supporting continual adaptation to diverse and evolving data science tasks.

\subsection{Adaptive Context Compression}

Data science workflows typically involve multiple stages, which naturally produce long execution contexts containing data previews, code snippets, tool invocations, and execution logs. Moreover, the proposed \emph{Autonomous Skill Acquisition} mechanism continuously expands the available skill space during execution, further aggravating context growth. Such long contexts pose significant challenges to LLM reasoning. To address these challenges, we introduce a two-level \emph{Adaptive Context Compression} strategy.

For each sub-agent, it maintains two types of memory: a shared \emph{global memory} containing the overall task objective, and a \emph{local memory} containing subtask-specific goals and execution context. During execution, each sub-agent iteratively solves assigned subtasks and produces raw execution results, which may contain extensive intermediate information. Instead of directly returning raw outputs to the Manager Agent, sub-agents leverage the underlying LLM to distill execution results into concise summaries conditioned on the global task objective, thereby preserving task-relevant information while filtering unnecessary details. Specifically, successful executions are summarized by their key outcomes, while failed executions are summarized by their failure causes and error patterns. Formally, the compression process is defined as $\tilde{o}_t = \phi(o_t \mid G)$, where $o_t$ denotes the raw execution result, $G$ denotes the global task objective, and $\phi(\cdot)$ is a compression function that preserves critical information while discarding redundant details. The compressed summaries $\tilde{o}_t$ are then returned to the Manager Agent, reducing context length while maintaining semantic fidelity.

For the Manager Agent, in contrast to passive compression triggered only when the context length exceeds a predefined threshold~\cite{DBLP:journals/corr/abs-2510-00615, DBLP:journals/corr/abs-2509-06283}, EvoDS adopts an adaptive context compression strategy. Specifically, we equip the Manager Agent with a dedicated summarization tool $a^{\text{sum}} \in \mathcal{A}^{\text{man}}$ for dynamic context management. Given the current context $C$, the Manager Agent autonomously determines when to invoke the summarization tool according to the current reasoning state and overall task objective as $a^{\text{sum}} \sim \pi^{\text{man}}_\theta(\cdot \mid C, G)$, where $\pi^{\text{man}}_\theta$ denotes the Manager Agent. Upon invocation, the Manager Agent compresses the accumulated context and updates the Manager memory as $C \leftarrow g(C \mid G)$, where $g(\cdot)$ is a prompt-based summarization function that preserves task progress, key decisions, and critical intermediate results relevant to the global objective. This design enables adaptive rather than reactive context management, allowing the Manager Agent to selectively compress context at appropriate stages of execution.

By combining sub-agent-level abstraction with Manager-level adaptive compression, EvoDS effectively controls context growth in long-horizon data science workflows. This strategy mitigates token budget limitations, alleviates the \emph{lost-in-the-middle} effect~\cite{DBLP:journals/tacl/LiuLHPBPL24}, and supports coherent reasoning over complex multi-step tasks.

\subsection{Agentic Optimization for EvoDS}
\eat{
Training EvoDS poses several non-trivial challenges.
First, different agents serve distinct functional roles and optimize different objectives.
Second, the action space of each sub-agent evolves over time through adaptive tool synthesis, introducing non-stationarity into the policy learning process.
Third, the Manager Agent’s context compression strategy may induce context discontinuities.
To address these challenges, we adopt an online reinforcement learning framework that jointly optimizes task performance, action space evolution, and context management within a unified training process.

To reduce computational and memory overhead, following MATPO~\cite{DBLP:journals/corr/abs-2510-04678}, all agents share the same LLM backbone and parameters, while agent-specific behaviors are induced through role-specific system prompts and distinct action spaces.
However, directly applying reinforcement learning in this setting remains challenging due to reward sparsity.
Therefore, we follow a widely adopted two-stage training paradigm consisting of SFT warm-up followed by online RL.
}

To jointly optimize task execution, skill acquisition, and context management, we first perform SFT and then adopt online RL for multi-role agents. Following MATPO~\cite{DBLP:journals/corr/abs-2510-04678}, all agents in EvoDS share the same LLM backbone and parameters to reduce computational and memory overhead, while agent-specific behaviors are induced through role-specific system prompts and distinct action spaces.

\subsubsection{SFT for Stable Initialization}
We first employ an advanced LLM as a teacher model to collect trajectories for SFT. For each problem instance, the Manager Agent iteratively solves the overall problem, while sub-agents iteratively solve delegated subtasks. This process produces a \emph{main trajectory} $\tau^{\text{main}}$ from the Manager Agent and corresponding \emph{sub-trajectories} $\tau^{\text{sub}}$ from sub-agents:
\begin{align}
\tau^{\text{main}} &= \{ p, q, a_1, o_1, a_2, o_2, \dots, a_T, o_T \}, \\
\tau^{\text{sub}} &= \{ p_i, q_j, a^{\text{sub}}_1, o^{\text{sub}}_1, \dots, a^{\text{sub}}_S, o^{\text{sub}}_S \},
\end{align}
where $p$ and $p_i$ denote the system prompts of the Manager Agent and the $i$-th sub-agent, $q$ denotes the problem description, $q_j$ denotes the assigned subtask, and $(a, o)$ denotes the action–observation pairs. Both main and sub-trajectories are used for SFT to enable coordinated behavior initialization across agent roles.

Due to context summarization, the main trajectory may be partitioned into multiple segments. When a summarization action is invoked, the accumulated history is compressed, and subsequent decisions are conditioned on the summarized context. Accordingly, the main trajectory can be decomposed as
$
\tau^{\text{main}} = \bigcup_{k=1}^{K} \tau^{\text{main}}_k,
$
and each segment is treated as an independent trajectory for training.

\subsubsection{Joint RL for Multi-Role Agents}
After SFT warm-up, EvoDS is further optimized through online RL. To support joint optimization of multi-role agents, we define rewards separately for main trajectories and sub-trajectories.

For sub-trajectories, we adopt a rule-based reward reflecting whether the assigned subtask is successfully solved:
\begin{equation}
R^{\text{sub}} =
\begin{cases}
0.1, & \text{if the subtask is solved}, \\
-0.1, & \text{otherwise}.
\end{cases}
\end{equation}

For main trajectories, we design a hybrid reward balancing solution quality and coordination efficiency. Specifically, the outcome reward $R_{\text{outcome}} \in [0,1]$ measures the final solution quality. To encourage effective sub-agent scheduling, we define a subtask completion reward
$
R_{\text{sub}} = \frac{1}{N} \sum_{i=1}^{N} R^{\text{sub}}_i,
$
where $N$ is the number of sub-agent invocations. Efficiency is further encouraged via a context penalty $
P_{\text{context}} = |C|/C_{\max},
$
and a turn penalty
$
P_{\text{turn}} = T/T_{\max},
$
where $P_{\text{context}}$ measures the ratio of trajectory context length to the maximum token budget, while $P_{\text{turn}}$ captures the ratio of interaction turns to the maximum turn budget. The overall reward for the Manager Agent is defined as:
\begin{equation}
R^{\text{main}} = R_{\text{outcome}} + \alpha R_{\text{sub}} - \beta P_{\text{context}} - \gamma P_{\text{turn}},
\end{equation}
where we set $\alpha = 0.2$ and $\beta = \gamma = 0.1$ in practice. Since context summarization partitions a main trajectory into multiple segments while rewards are only available at the end of the rollout, we apply reward broadcasting by assigning the final reward to all trajectory segments as
$
R^{\text{main},k} \leftarrow R^{\text{main}}, \forall k \in \{1,\dots,K\}.
$

To jointly optimize the Manager Agent and sub-agents, we adopt Group Relative Policy Optimization (GRPO)~\cite{DBLP:journals/corr/abs-2402-03300}, which provides stable advantage estimation without requiring an explicit value function. For each problem instance, we generate a group of $n$ rollouts, resulting in $n$ main trajectories and $m$ sub-trajectories. For main trajectories, the advantage for the $i$-th rollout is computed as
$
A_i^{\text{main}} =
\frac{R_i^{\text{main}} - \mu(R^{\text{main}})}{\sigma(R^{\text{main}}) + \epsilon},
$
where $\mu(\cdot)$ and $\sigma(\cdot)$ denote the mean and standard deviation over the rollout group, and $\epsilon$ is a small constant. For sub-trajectories, given the binary reward structure, we directly use
$
A^{\text{sub}} = R^{\text{sub}}.
$

The overall training objective jointly optimizes the policy over both trajectory types.
We define the clipped surrogate objective per trajectory as $\mathcal{L}_{\text{clip}}(\tau, A)$.
The total loss is given by:
{\tiny
\begin{align}
\mathcal{L}(\theta) =
\mathbb{E} \Bigg[
\frac{1}{n} \sum_{i=1}^{n} \mathcal{L}_{\text{clip}}(\tau^{\text{main}}_i, A_i^{\text{main}})
+
\frac{1}{m} \sum_{j=1}^{m} \mathcal{L}_{\text{clip}}(\tau^{\text{sub}}_j, A_j^{\text{sub}})
\Bigg]
-
\beta_{\text{KL}} \, \mathbb{D}_{\text{KL}}(\pi_\theta \,\|\, \pi_{\text{ref}}),
\\
\mathcal{L}_{\text{clip}}(\tau, A) =
\frac{1}{|\tau|} \sum_{t=1}^{|\tau|}
\min \left(
\frac{\pi_\theta(a_t|s_t)}{\pi_{\theta_{\text{old}}}(a_t|s_t)} A,\,
\text{clip}\!\left(
\frac{\pi_\theta(a_t|s_t)}{\pi_{\theta_{\text{old}}}(a_t|s_t)},
1-\varepsilon,
1+\varepsilon
\right) A
\right),
\end{align}}
where $\varepsilon$ and $\beta_{\text{KL}}$ are hyperparameters, $\pi_{\text{ref}}$ denotes a reference policy, and $\mathbb{D}_{\text{KL}}$ denotes the Kullback--Leibler divergence.

Through this unified training strategy, EvoDS jointly learns high-level task orchestration and low-level subtask execution. The combination of trajectory segmentation, reward broadcasting, and GRPO-based optimization enables robust long-horizon reasoning with adaptive skill acquisition and effective context management.

\section{Theoretical Analysis}
\label{sec:theorem}
\eat{
In this section, we provide theoretical insights into several key design choices of EvoDS. 
We begin by analyzing the advantages of the proposed hierarchical agent framework for tool selection.

\begin{theorem}
\label{theorem_1}
Given a fixed context $C$, the upper bound on the tool selection error probability of a hierarchical agent framework is strictly lower than that of a flat agent framework.
\end{theorem}

The proof of Theorem~\ref{theorem_1} is provided in the Appendix~\ref{appendix: error_bound}. This theorem demonstrates that hierarchical tool selection yields a strictly tighter error bound than flat selection under the same context. From the proof, we observe that this improvement arises from decomposing a large, unstructured decision space into a sequence of smaller, structured sub-problems. Such decomposition effectively reduces the action space size at each decision stage and lowers uncertainty in the tool selection process. As a result, the hierarchical agent benefits from increased effective decision margins and reduced variance, leading to more robust and reliable tool selection.

Next, we analyze the optimization objective of the Manager Agent from an information-theoretic perspective.

\begin{theorem}
\label{theorem_2}
The optimization objective of the Manager Agent is equivalent to solving the following Information Bottleneck problem:
\[
\min_{p(z \mid c)} I(Z; C) - \lambda I(Z; Y), \quad \lambda > 0,
\]
where $C$ denotes the global context, $Z$ denotes the compressed context representation produced by the Manager Agent, and $Y$ denotes the final agent output.
\end{theorem}

The proof of Theorem~\ref{theorem_2} is provided in the Appendix~\ref{appendix: objective}. This theorem shows that the Manager Agent implicitly performs Information Bottleneck optimization by compressing the global context to discard task-irrelevant information while preserving information that is predictive of task outcomes. This result provides an information-theoretic justification for the design of the Manager Agent’s objective, explaining why context summarization can simultaneously improve efficiency while maintaining decision quality.
}

In this section, we provide theoretical insights into several key design choices of EvoDS. Since skills are represented as executable tools, we begin by analyzing the advantages of the proposed hierarchical agent framework for tool selection.

\begin{theorem}
\label{theorem_1}
Given a fixed context $C$, the upper bound on the tool selection error probability of a hierarchical agent framework is strictly lower than that of a flat agent framework.
\end{theorem}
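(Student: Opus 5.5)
To make the statement precise, I will fix a stochastic model of tool selection. Given the fixed context $C$, each tool $a$ in the full pool $\mathcal{A}=\bigcup_{i=1}^{K}\mathcal{A}_i$ (disjoint, as in the architecture, with $|\mathcal{A}_i|=M_i$ and $M=\sum_i M_i$) receives a noisy score $s(a\mid C)=\mu(a\mid C)+\xi_a$. The correct tool $a^\ast$ has a margin $\mu(a^\ast\mid C)-\mu(a\mid C)\ge \Delta>0$ over every competitor. The noise terms $\xi_a$ are independent and sub-Gaussian with variance proxy $\sigma^2$, and the agent picks the argmax.

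For the flat agent, an error occurs if some competitor outscores $a^\ast$. Pairwise differences are sub-Gaussian with proxy $2\sigma^2$, so a union bound gives
\[
P^{\text{flat}}_{\text{err}}\le (M-1)\exp\!\bigl(-\Delta^2/(4\sigma^2)\bigr).
\]

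For the hierarchical agent, selection proceeds in two stages. First the Manager chooses the sub-agent $i^\ast$ containing $a^\ast$ among $K$ candidates, with domain-level margin $\Delta_{\text{m}}$ and noise $\sigma_{\text{m}}$. Then sub-agent $i^\ast$ chooses within $\mathcal{A}_{i^\ast}$, with margin $\Delta$ and noise $\sigma$. Since the overall error is contained in the union of a stage-1 error and a stage-2 error, the union bound yields
\[
P^{\text{hier}}_{\text{err}}\le (K-1)e^{-\Delta_{\text{m}}^2/(4\sigma_{\text{m}}^2)}+(M_{i^\ast}-1)e^{-\Delta^2/(4\sigma^2)}.
\]

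The remaining task is to compare the two bounds. I would justify $\Delta_{\text{m}}/\sigma_{\text{m}}\ge \Delta/\sigma$ in one of two ways. One is to state it as an assumption: coarse domain-level distinctions such as cleaning versus modeling are at least as separable as fine tool distinctions. The other is to model the Manager's domain score as an aggregate of its $M_i$ tool scores, which averages out noise by a factor $\sqrt{M_i}$ while preserving the mean gap. Under either route the hierarchical bound is at most
\[
(K-1+M_{i^\ast}-1)\,e^{-\Delta^2/(4\sigma^2)},
\]
and since $K+M_{i^\ast}-2<M-1$ whenever $K\ge 2$ and every $M_i\ge 2$ (because $M\ge M_{i^\ast}+2(K-1)$), the inequality is strict.

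I expect the main obstacle to be this margin comparison at the Manager level. Without some assumption of this kind, the claim can fail: for example, with $\Delta_{\text{m}}\to 0$ the first stage alone produces error close to $1-1/K$. I would therefore state the required condition explicitly as an assumption, using the \texttt{assumption} environment defined in the paper, and note that the strictness needs only a mild non-degeneracy condition ($K\ge2$, $M_i\ge2$). If the aggregation route is used instead, the factor $\sqrt{M_i}$ even makes the Manager-stage term exponentially smaller, which strengthens the conclusion.
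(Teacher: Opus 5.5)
Your argument is essentially the paper's proof. You use noisy additive scores (sub-Gaussian rather than Gaussian), a pairwise tail bound $\exp(-\Delta^2/(4\sigma^2))$ with a union bound for the flat agent, and a union bound over ``Manager picks the wrong sub-agent'' and ``correct sub-agent picks the wrong tool'' for the hierarchical agent, followed by a term-by-term comparison. Three points are worth adjusting or keeping.

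\emph{The Manager-level margin.} The paper does not need your combined assumption $\Delta_{\mathrm{m}}/\sigma_{\mathrm{m}}\ge\Delta/\sigma$. It defines the sub-agent utility as $U(j\mid C)=\max_{t\in\mathcal{T}_j}u(t\mid C)$, which gives $\Delta_M\ge\Delta_{\text{base}}$ for free through its lemma $\Delta_{\text{base}}=\min\{\Delta_M,\Delta_S\}$. It then only assumes comparable noise: $\sigma_M^2\approx\sigma^2$, which must be read as $\le$ for a rigorous inequality, and $\sigma_S^2\le\sigma^2$. Adopting this max-definition is the clean way to discharge what you identify as the main obstacle.

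\emph{The aggregation route is invalid and should be dropped.} Averaging the $M_i$ tool scores does shrink the noise, but the resulting mean is the \emph{average} utility of the domain, and gaps between domain averages are not controlled by $\Delta$. For example, take $\mathcal{A}_{i^\ast}=\{a^\ast,b\}$ with utilities $1,0$ and $\mathcal{A}_j=\{c,d\}$ with utilities $0.9,0.9$. Then $\Delta=0.1$, but the domain-mean gap is $-0.4$, so the Manager prefers the wrong domain, with probability tending to one as the noise vanishes. The claimed ``preserved mean gap'' and the resulting $\sqrt{M_i}$ gain therefore do not follow.

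\emph{Strictness.} Your counting step is more careful than the paper's, so keep it. The paper infers strictness from $k_{j^\ast}<K$ and $N<K$ alone. Since $K\ge k_{j^\ast}+(N-1)$, these give only $N+k_{j^\ast}-2\le K-1$, with equality when every other sub-agent holds a single tool; in that case, if the margins and noise levels coincide, the two bounds are equal. Your non-degeneracy condition is exactly what a strict inequality requires. That condition is that some domain other than $i^\ast$ contains at least two tools, which $K\ge2$ together with all $M_i\ge2$ guarantees.
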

The proof of Theorem~\ref{theorem_1} is provided in Appendix~\ref{appendix: error_bound}. This result shows that hierarchical tool selection yields a tighter error bound by decomposing a large decision space into smaller, structured sub-problems, thereby reducing uncertainty and improving selection robustness. We next analyze the Manager Agent’s optimization objective from an information-theoretic perspective.
\begin{theorem}
\label{theorem_2}
The optimization objective of the Manager Agent is equivalent to solving the following Information Bottleneck problem:
\begin{equation}
    \min_{p(z \mid c)} I(Z; C) - \lambda I(Z; Y), \quad \lambda > 0,
\end{equation}
where $C$ denotes the global context, $Z$ denotes the compressed context, and $Y$ denotes the final agent output.
\end{theorem}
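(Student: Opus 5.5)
Our plan is to map the Manager Agent's reward $R^{\text{main}} = R_{\text{outcome}} + \alpha R_{\text{sub}} - \beta P_{\text{context}} - \gamma P_{\text{turn}}$ onto the two terms of the Information Bottleneck Lagrangian. We model the summarization step as a stochastic encoder $p(z \mid c)$ that maps the accumulated context $C$ to the compressed context $Z = g(C \mid G)$. The subsequent decisions and the final output $Y$ then depend on $C$ only through $Z$, which gives the Markov chain $Y \leftarrow C \rightarrow Z \rightarrow \hat{Y}$. Taking expectations over rollouts, the RL objective becomes $\max_{\theta}\ \mathbb{E}[R_{\text{outcome}}] + \alpha\mathbb{E}[R_{\text{sub}}] - \beta\,\mathbb{E}[|Z|]/C_{\max} - \gamma\,\mathbb{E}[T]/T_{\max}$. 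We then treat the turn and sub-agent terms as either absorbed constants or part of the predictive term. Since the optimization is over $p(z\mid c)$ with the downstream policy fixed, we hold them fixed.

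\textbf{Step 1 (compression term).} We relate the context penalty to $I(Z;C)$. The expected token length of $Z$ upper bounds its entropy up to a vocabulary factor: $\mathbb{E}|Z|\log|\mathcal{V}| \ge H(Z) \ge I(Z;C)$. At the optimum of the length penalty under a fixed information content, a Shannon/Kraft-type argument makes this tight up to constants. Minimizing $\beta P_{\text{context}}$ is therefore, up to the constant $\beta/(C_{\max}\log|\mathcal{V}|)$, minimizing $I(Z;C)$. For the deterministic part of $g$ we use $H(Z\mid C)=0$, so $H(Z)=I(Z;C)$ exactly.

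\textbf{Step 2 (relevance term).} We relate $\mathbb{E}[R_{\text{outcome}}]$ to $I(Z;Y)$. Let $Y$ be the task-relevant target, meaning the correct answer or output determining $\mathcal{F}$. We would use the standard variational bound $I(Z;Y) \ge H(Y) + \mathbb{E}[\log q(Y\mid Z)]$, where the policy acting on $Z$ is the decoder $q$. We then identify $R_{\text{outcome}}$ with a monotone surrogate of the log-likelihood of the correct output. For binary success this is Fano's inequality: a higher success probability gives a lower $H(Y\mid Z)$, which gives a higher $I(Z;Y)$. Maximizing outcome reward thus maximizes a lower bound on $I(Z;Y)$, with equality when the decoder is Bayes-optimal. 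That condition is what GRPO drives toward at convergence.

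\textbf{Step 3 (assembly).} We combine the two steps. Maximizing $R^{\text{main}}$ is equivalent to minimizing $\beta' I(Z;C) - I(Z;Y)$, and dividing by $\beta'$ gives $\min_{p(z\mid c)} I(Z;C) - \lambda I(Z;Y)$ with $\lambda = 1/\beta' > 0$. The bounded constants from $R_{\text{sub}}$ and $P_{\text{turn}}$ do not affect the argmin.

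The main obstacle is making \emph{equivalence} rather than a bound rigorous in Steps 1 and 2. Token length is only an upper bound on information, and outcome reward only lower-bounds mutual information. We would first try to stipulate, as modeling assumptions, an optimal code length and an optimal decoder, as is standard in variational IB analyses, and state the result as equivalence of optimizers under these assumptions.
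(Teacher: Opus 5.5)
Your skeleton matches the paper's proof. The appendix also uses a relaxed Manager objective $J(g)=\mathbb{E}[R(Y)]-\gamma\,\mathbb{E}[|Z|]$, takes $Z=g(C)$ deterministic so that $H(Z)=I(Z;C)$, links expected reward to $I(Z;Y)$, and reads off $\lambda$ as a ratio of coefficients. The difference is that the paper does not derive the two links from information-theoretic inequalities. It posits them as assumptions, $\mathbb{E}[R(Y)]=f(I(Z;Y))$ with $f'>0$ and $\mathbb{E}[|Z|]=\kappa H(Z)$, and the equivalence holds only under these. Three points in your version should be brought in line with this.
\begin{itemize}
\item $R_{\text{sub}}$ and $P_{\text{turn}}$ are not constant as functions of $p(z\mid c)$, even with the downstream policy fixed. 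That policy conditions on $Z$, so the number of turns and the sub-agent outcomes depend on the summary. The paper simply defines the relaxed objective without these terms; you should do the same rather than claim they are fixed.
\item $g$ must be fully deterministic, not just its ``deterministic part''. For a genuinely stochastic encoder the length penalty charges $H(Z)=I(Z;C)+H(Z\mid C)$, not $I(Z;C)$.
\item An optimal-code stipulation only gives $\mathbb{E}|Z|\log|\mathcal{V}|\in[H(Z),H(Z)+\log|\mathcal{V}|)$. That is proportionality up to an encoder-dependent slack of less than one token, so for an exact statement you still need to posit $\mathbb{E}[|Z|]=\kappa H(Z)$ outright.
\end{itemize}

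The genuine gap is in Step 2: stipulating a Bayes-optimal decoder does not turn your bounds into an equivalence. It makes the Barber--Agakov bound tight for the log-loss $\mathbb{E}[\log q(Y\mid Z)]$. But $R_{\text{outcome}}$ is a success or quality score, and a per-sample monotone surrogate of the log-likelihood does not make the \emph{expected} reward a monotone function of $I(Z;Y)$. Fano's inequality likewise stays one-sided under optimal decoding. Concretely, let $Y$ be uniform on four values and $C=Y$. The deterministic encoders $Z=Y\bmod 2$ and $Z=\mathbf{1}[Y=0]$ both give Bayes-optimal success probability $1/2$. Yet $I(Z;Y)$ is $1$ bit for the first and the binary entropy $h(1/4)\approx 0.81$ bits for the second. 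So no assumption on the decoder can deliver your Step 3; you must assume the reward--information link directly, which is exactly the paper's assumption $\mathbb{E}[R(Y)]=f(I(Z;Y))$.

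Note also that Step 3, by writing the objective as $\beta' I(Z;C)-I(Z;Y)$, implicitly takes $f$ affine. With only a monotone $f$, as in the paper, reaching an IB Lagrangian with a fixed $\lambda$ needs one more argument. A maximizer of $f(I(Z;Y))-\beta' I(Z;C)$ lies on the IB frontier. Given concavity of that frontier, it is then an IB optimum for $\lambda=f'(I^\ast)/\beta'$, a multiplier that depends on the optimum itself. The paper's proof passes over this step too.
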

The proof of Theorem~\ref{theorem_2} is provided in Appendix~\ref{appendix: objective}. The theorem shows that the Manager Agent implicitly performs Information Bottleneck optimization, which minimizes task-irrelevant information while preserving task-critical signals. This provides a theoretical justification for our optimization objective design, showing that the proposed context compression strategy can balance efficiency and decision quality.

\begin{table*}[!t]
\caption{Performance comparison of different agents. The best proprietary and open-source model results are highlighted in bold, respectively. EvoDS disables synthesized skill reuse, while EvoDS-evo enables cross-task reuse within the same benchmark.}
\label{tab:compare}
\begin{tabular}{ccccccc}
\toprule
\multicolumn{1}{c|}{Backbones}                       & \multicolumn{1}{c|}{Methods}          & DABench        & DA-Code         & ScienceAgentBench & \multicolumn{1}{c|}{MLE-Dojo}        & AVG.           \\ \hline
\multicolumn{7}{c}{\cellcolor[HTML]{E6F0FA}Proprietary Models}                                                                                                                                            \\ \hline
\multicolumn{1}{c|}{}                                & \multicolumn{1}{c|}{AutoGen}          & 0.756          & 0.260           & 0.098             & \multicolumn{1}{c|}{0.137}          & 0.313          \\
\multicolumn{1}{c|}{}                                & \multicolumn{1}{c|}{ReAct}            & 0.858          & 0.367          & 0.206             & \multicolumn{1}{c|}{0.199}          & 0.408          \\
\multicolumn{1}{c|}{}                                & \multicolumn{1}{c|}{LAMBDA}           & 0.859          & 0.215          & 0.108             & \multicolumn{1}{c|}{0.265}          & 0.362          \\
\multicolumn{1}{c|}{\multirow{-2}{*}{DeepSeek-V3.1}} & \multicolumn{1}{c|}{Data Interpreter} & 0.746          & 0.234          & 0.088             & \multicolumn{1}{c|}{0.126}          & 0.299          \\ 
\multicolumn{1}{c|}{}                                & \multicolumn{1}{c|}{LATM}   & 0.769 & 0.208 & 0.186             & \multicolumn{1}{c|}{0.140}          & 0.327                  \\
\multicolumn{1}{c|}{}                                & \multicolumn{1}{c|}{ML-Master2}           & 0.884 & 0.276 & 0.157            & \multicolumn{1}{c|}{0.299}          & 0.404          \\ \hline
\multicolumn{1}{c|}{}                                & \multicolumn{1}{c|}{AutoGen}          & 0.771          & 0.192          & 0.088             & \multicolumn{1}{c|}{0.108}          & 0.290           \\
\multicolumn{1}{c|}{}                                & \multicolumn{1}{c|}{Code Interpreter} & \textbf{0.924} & 0.171          & 0.088             & \multicolumn{1}{c|}{0.181}          & 0.341          \\
\multicolumn{1}{c|}{}                                & \multicolumn{1}{c|}{ReAct}            & 0.781          & 0.303          & 0.147             & \multicolumn{1}{c|}{0.228}          & 0.365          \\
\multicolumn{1}{c|}{}                                & \multicolumn{1}{c|}{LAMBDA}           & 0.881          & 0.200            & 0.118             & \multicolumn{1}{c|}{0.268}          & 0.367          \\
\multicolumn{1}{c|}{\multirow{-3}{*}{GPT-4o}}        & \multicolumn{1}{c|}{Data Interpreter} & 0.716          & 0.244          & 0.088             & \multicolumn{1}{c|}{0.102}          & 0.288          \\ 
\multicolumn{1}{c|}{}                                & \multicolumn{1}{c|}{LATM}           & 0.746 & 0.146 & 0.108             & \multicolumn{1}{c|}{0.132}          & 0.283          \\
\multicolumn{1}{c|}{}                                & \multicolumn{1}{c|}{ML-Master2}           & 0.874 & 0.195 & 0.088             & \multicolumn{1}{c|}{0.273}          & 0.358          \\ \hline
\multicolumn{1}{c|}{}                                & \multicolumn{1}{c|}{AutoGen}          & 0.884          & 0.274          & 0.108             & \multicolumn{1}{c|}{0.165}          & 0.358          \\
\multicolumn{1}{c|}{}                                & \multicolumn{1}{c|}{ReAct}            & 0.866          & \textbf{0.399} & \textbf{0.225}    & \multicolumn{1}{c|}{\textbf{0.308}} & \textbf{0.450}  \\
\multicolumn{1}{c|}{}                                & \multicolumn{1}{c|}{LAMBDA}           & 0.919          & 0.221          & 0.127             & \multicolumn{1}{c|}{0.254}          & 0.380           \\
\multicolumn{1}{c|}{\multirow{-2}{*}{o4-mini}}       & \multicolumn{1}{c|}{Data Interpreter} & 0.788          & 0.227          & 0.098             & \multicolumn{1}{c|}{0.152}          & 0.316          \\ 
\multicolumn{1}{c|}{}                                & \multicolumn{1}{c|}{LATM}           & 0.744 & 0.215 & \textbf{0.225}             & \multicolumn{1}{c|}{0.173}          & 0.339          \\
\multicolumn{1}{c|}{}                                & \multicolumn{1}{c|}{ML-Master2}           & 0.884 & 0.335 & 0.176             & \multicolumn{1}{c|}{0.298}          & 0.423          \\ \hline
\multicolumn{7}{c}{\cellcolor[HTML]{F3F8FD}Open-source Models}                                                                                                                                            \\ \hline
\multicolumn{1}{c|}{DeepSeek-R1-0528-Qwen3-8B}                                & \multicolumn{1}{c|}{DeepAnalyze-8B}   & 0.856           & 0.217          & 0.000              & \multicolumn{1}{c|}{0.044}          & 0.279          \\
\multicolumn{1}{c|}{Qwen2.5-Coder 7B}                                & \multicolumn{1}{c|}{DataMind-7B}      & 0.829           & 0.211          & 0.010             & \multicolumn{1}{c|}{0.010}         & 0.265          \\
\multicolumn{1}{c|}{Qwen2.5-Coder 14B}                                & \multicolumn{1}{c|}{DataMind-14B}     & 0.876          & 0.292          & 0.010             & \multicolumn{1}{c|}{0.136}          & 0.329           \\
\multicolumn{1}{c|}{Qwen3-8B}             & \multicolumn{1}{c|}{\textbf{EvoDS-8B (Ours)}}         & 0.894 & 0.337 & 0.108    & \multicolumn{1}{c|}{0.302} & 0.410 \\
\multicolumn{1}{c|}{Qwen3-8B}             & \multicolumn{1}{c|}{\textbf{EvoDS-evo-8B (Ours)}}         & \textbf{0.911} & \textbf{0.355} & \textbf{0.118}    & \multicolumn{1}{c|}{\textbf{0.311}} & \textbf{0.424} \\ \bottomrule
\end{tabular}
\end{table*}

\section{Experiments}
\eat{
In this section, we conduct experiments to answer the following research questions:
\begin{itemize}
    \item \textbf{RQ1}: How does EvoDS perform compared with state-of-the-art agents?
    \item \textbf{RQ2}: How do different modules contribute to the overall effectiveness of EvoDS?
    \item \textbf{RQ3}: Can the proposed Adaptive Tool Evolution Mechanism generate effective and reusable tools in practice?
\end{itemize}
}

This section aims to answer the following research questions: 
\begin{itemize} 
\item \textbf{RQ1}: How effective is EvoDS compared with state-of-the-art autonomous data science agents? 
\item \textbf{RQ2}: What is the contribution of each module to EvoDS? 
\item \textbf{RQ3}: Can EvoDS synthesize effective and reusable skills? 
\textbf{RQ4}: What are the successful and failure cases of EvoDS? \end{itemize}

\subsection{Experimental Setup}

\subsubsection{Benchmarks}

\eat{We evaluate EvoDS on four representative data science benchmarks, including DABench~\cite{DBLP:conf/icml/HuZWCM0WSXZCY0K24}, DA-Code~\cite{DBLP:conf/emnlp/HuangLYZLWHHLZL24}, ScienceAgentBench (SAB)~\cite{DBLP:conf/iclr/ChenCNZWYLLWLDX25}, and MLE-Dojo~\cite{qiang2025mledojo}, to comprehensively assess its effectiveness across diverse task settings. These benchmarks cover a broad spectrum of data science tasks, such as data wrangling, exploratory data analysis, and predictive data modeling.

For visualization tasks in DA-Code, the benchmark evaluates generated executable code, whereas EvoDS is a tool-integrated agent that directly produces plots via visualization tools. Therefore, following MatPlotBench~\cite{DBLP:conf/acl/YangZWCHYLTLYLS24}, we adopt an LLM-as-a-judge strategy to directly evaluate the generated plots. In addition, due to the high computational cost of machine learning tasks, we sample 10 problem instances from MLE-Dojo for evaluation. Detailed descriptions of all benchmarks are provided in the appendix.}

We evaluate EvoDS on four data science benchmarks, including DABench~\cite{DBLP:conf/icml/HuZWCM0WSXZCY0K24}, DA-Code~\cite{DBLP:conf/emnlp/HuangLYZLWHHLZL24}, ScienceAgentBench (SAB)~\cite{DBLP:conf/iclr/ChenCNZWYLLWLDX25}, and MLE-Dojo~\cite{qiang2025mledojo}, to assess its effectiveness across diverse tasks including data wrangling, exploratory analysis, and predictive modeling. For visualization tasks in DA-Code, since EvoDS directly generates plots via tools rather than executable code, we adopt an LLM-as-a-judge strategy following MatPlotBench~\cite{DBLP:conf/acl/YangZWCHYLTLYLS24}. In addition, due to the high computational cost of machine learning tasks, we evaluate EvoDS on 10 sampled instances from MLE-Dojo. Detailed benchmark descriptions are provided in the Appendix~\ref{appendix: bench}.

\subsubsection{Baselines}

We compare EvoDS against a diverse set of competitive baselines, including general-purpose agents such as AutoGen~\cite{wu2024autogen}, ReAct~\cite{DBLP:conf/iclr/YaoZYDSN023}, and Code Interpreter~\cite{openai2023}, as well as data science agents including LAMBDA~\cite{Sun17072025}, Data Interpreter~\cite{DBLP:conf/acl/HongLLLWZLCZWZZ25}, DataMind (7B and 14B)~\cite{DBLP:journals/corr/abs-2509-25084}, DeepAnalyze-8B~\cite{DBLP:journals/corr/abs-2510-16872}, and self-evolving agents LATM~\cite{DBLP:conf/iclr/Cai00CZ24} and ML-Master2~\cite{DBLP:journals/corr/abs-2601-10402}. Except for DataMind and DeepAnalyze, which are trained agents with fixed backbones, all other agents are evaluated using three different LLM backbones: DeepSeek-V3.1-Terminus~\cite{deepseek2025v31}, GPT-4o~\cite{openai2023gpt4}, and o4-mini~\cite{openai2025o3o4mini}. 
\eat{Detailed baseline descriptions are provided in the Appendix~\ref{appendix: baselines}.}

\subsubsection{Implementation Details}
\eat{
EvoDS employs Qwen3-8B~\cite{DBLP:journals/corr/abs-2505-09388} as its backbone model. All agents within the system share the same model parameters.

For training data, we construct a diverse corpus by integrating heterogeneous data science tasks from multiple sources, including \textit{DataMind-12K}~\cite{DBLP:journals/corr/abs-2509-25084}, a high-quality trajectory dataset spanning diverse data analysis tasks and data formats; \textit{DataScience-Instruct-500K}~\cite{DBLP:journals/corr/abs-2510-16872}, which covers data preparation, data analysis, data modeling, data insight generation, and open-ended research tasks; \textit{MatPlotBench}~\cite{DBLP:conf/acl/YangZWCHYLTLYLS24}, a visualization-focused benchmark; and \textit{DSBench}~\cite{DBLP:conf/iclr/JingHWYYM0DY25} and \textit{MLE-Dojo}~\cite{qiang2025mledojo}, which emphasize autonomous machine learning workflows. Since DataMind-12K and DataScience-Instruct-500K are trajectory-based datasets, we use GPT-4o to extract problem descriptions and corresponding ground-truth answers. For DSBench and MLE-Dojo, machine learning tasks are time-consuming and pose challenges for sampling during RL training. Therefore, we sample only a subset of instances for training, and that they do not overlap with the test set.

We then use Qwen3-8B as a filtering model to categorize task difficulty. Tasks that can be successfully solved by Qwen3-8B in a single attempt are labeled as simple, while the remaining tasks are labeled as hard. To facilitate effective agent training, we retain all hard tasks and randomly sample 10\% of the simple tasks, resulting in a final training dataset of 8K instances.

During the SFT stage, we employ DeepSeek-V3.1-Terminus as the teacher model to collect high-quality trajectories. For each training instance, we perform 8 rollouts, yielding approximately 36K trajectories in total. We train EvoDS using a batch size of 32 and a learning rate of $1 \times 10^{-5}$ for 3 epochs. In the RL stage, we adopt a curriculum learning strategy by gradually increasing the maximum number of interaction turns from 4 to 20. The rollout size is set to 8, and the learning rate is $1 \times 10^{-6}$. The maximum response length is set to 24K tokens. We train the model for 300 RL steps. Both SFT and RL training are conducted using the VeRL~\cite{DBLP:journals/corr/abs-2509-01055} framework on 4 NVIDIA A800 GPUs.
}

EvoDS uses Qwen3-8B~\cite{DBLP:journals/corr/abs-2505-09388} as the shared backbone for all agents. Training data are constructed from heterogeneous sources, including \textit{DataMind-12K}~\cite{DBLP:journals/corr/abs-2509-25084}, \textit{DataScience-Instruct-500K}~\cite{DBLP:journals/corr/abs-2510-16872}, \textit{MatPlotBench}~\cite{DBLP:conf/acl/YangZWCHYLTLYLS24}, \textit{DSBench}~\cite{DBLP:conf/iclr/JingHWYYM0DY25}, and \textit{MLE-Dojo}~\cite{qiang2025mledojo}, covering data analysis, visualization, and machine learning tasks. Since DataMind-12K and DataScience-Instruct-500K are trajectory-based datasets, we use GPT-4o to extract problem descriptions and corresponding ground-truth answers. Due to the high cost of machine learning tasks, only a subset of instances from DSBench and MLE-Dojo is sampled for training, with no overlap with the test set. We further use Qwen3-8B to filter overly simple samples, resulting in 8K training instances.

For SFT, we use the EvoDS framework with DeepSeek-V3.1 as the teacher model, collecting 8 rollouts per instance and yielding 36K trajectories in total. We train for 3 epochs with a batch size of 32 and a learning rate of $1\times10^{-5}$. For RL, we apply curriculum learning by gradually increasing the interaction turn budget from 4 to 20, using a rollout size of 8 and a learning rate of $1\times10^{-6}$ for 300 steps. The maximum response length is set to 24K tokens. All training is conducted using VeRL~\cite{DBLP:journals/corr/abs-2509-01055} on 4 NVIDIA A800 GPUs. 

\eat{
\subsection{Performance Comparison (RQ1)}
We evaluate the performance of EvoDS on four representative data science benchmarks. The results are reported in Table~\ref{tab:compare}. Overall, EvoDS demonstrates consistently strong performance across all benchmarks.

From Table~\ref{tab:compare}, we observe that EvoDS achieves the best performance among all open-source baselines across all four benchmarks. Compared with the strongest open-source baseline, DataMind-14B, EvoDS achieves an absolute improvement of 9.5\% and a relative improvement of 28.9\%. These results indicate that the proposed method substantially enhances the agent’s capability in solving diverse data science tasks. Notably, despite using fewer model parameters, EvoDS exhibits a clear and consistent performance advantage over DataMind-14B, highlighting the effectiveness of our agent design and training strategy.

When compared with proprietary models, EvoDS also shows strong competitive performance. On average, EvoDS achieves the second-best overall performance, surpassed only by ReAct with the o4-mini backbone, while outperforming all other proprietary baselines. Specifically, EvoDS outperforms the strongest DeepSeek-V3.1–based baseline (ReAct–DeepSeek-V3.1) by an average margin of 3.9\%, and surpasses the best GPT-4o–based baseline (LAMBDA–GPT-4o) by 15.5\% on average. These results suggest that EvoDS effectively leverages its agentic architecture and training strategy to narrow the performance gap between open-source and proprietary foundation models. Importantly, although EvoDS uses DeepSeek-V3.1 as the teacher model during the SFT stage for trajectory collection, it achieves superior performance at inference time. This improvement can be attributed to two key factors: (1) the hierarchical and modular agent design, which enables more structured reasoning and effective tool usage, and (2) the reinforcement learning stage, which further refines long-horizon decision-making and tool coordination.

Beyond the overall performance gains, we highlight several key observations:

\textbf{Strong performance on complex long-horizon data science tasks.}
EvoDS not only performs well on relatively simple data analysis tasks, but also excels in complex, long-horizon data science scenarios. Among the evaluated benchmarks, DABench focuses on relatively simple data analysis tasks, whereas DA-Code is an interactive benchmark that requires iterative environment interaction and long-horizon reasoning. ScienceAgentBench targets complex data-driven scientific discovery problems, and MLE-Dojo evaluates end-to-end machine learning workflows. EvoDS maintains strong performance on DA-Code, ScienceAgentBench, and MLE-Dojo, achieving substantial improvements over open-source baselines, particularly on DA-Code and MLE-Dojo. These results demonstrate that EvoDS is well-suited for long-horizon data science tasks that require sustained reasoning, iterative tool usage, and effective context management. We attribute this advantage primarily to the hierarchical agent framework and the proposed context compression strategy, which together enable robust planning and execution over long horizons.

\textbf{End-to-end data science pipeline capability.}
EvoDS exhibits strong end-to-end data science capabilities, especially on MLE-Dojo. Notably, all open-source baselines perform poorly on this benchmark, with the best result reaching only 0.136. In contrast, EvoDS achieves a score of 0.311 on MLE-Dojo, outperforming all baselines, including proprietary models such as ReAct (o4-mini). This significant performance gap highlights EvoDS’s ability to autonomously handle the full data science pipeline, including data preprocessing, feature engineering, model training, evaluation, and iterative refinement. These results indicate that EvoDS effectively integrates tool usage and multi-stage reasoning to address complex machine learning tasks that require coordinated decisions across multiple phases.

\textbf{Remaining challenges in data-driven scientific discovery.}
Despite its strong overall performance, EvoDS still faces challenges on the most difficult data-driven scientific discovery tasks in ScienceAgentBench. While EvoDS achieves the best performance among open-source models and demonstrates competitive results compared with several proprietary models, a noticeable gap remains relative to the strongest proprietary baselines. This limitation primarily stems from the nature of scientific discovery tasks, which often require deep domain-specific knowledge, hypothesis formulation, and abstract reasoning beyond pattern recognition and procedural tool usage. Although EvoDS benefits from its hierarchical design and adaptive tool evolution mechanism, its reasoning capability is ultimately constrained by the scientific knowledge and abstraction capacity of the underlying foundation model. Addressing these limitations by incorporating stronger domain priors, external knowledge integration, or specialized scientific reasoning modules remains an important direction for future work.
}

\subsection{Performance Comparison (RQ1)}

Table~\ref{tab:compare} reports the performance of EvoDS on four data science benchmarks. Overall, EvoDS achieves consistently strong performance and establishes a new state-of-the-art among open-source backbones across all benchmarks. Moreover, EvoDS-evo consistently outperforms EvoDS, indicating that the proposed Autonomous Skill Acquisition mechanism effectively enhances agent skills and leads to improved performance. Specifically, compared with the strongest open-source baseline, DataMind-14B, EvoDS achieves an absolute improvement of 9.5\% and a relative improvement of 28.9\% in average performance, despite using fewer model parameters. These results indicate that the performance gains are not driven by model scale, but rather by the proposed agent framework and training strategy, demonstrating its effectiveness in handling diverse data science scenarios.

EvoDS also demonstrates competitive performance against proprietary methods. Overall, EvoDS achieves the second-best average performance, surpassed only by ReAct with the o4-mini backbone, while outperforming all other proprietary baselines. Notably, EvoDS exceeds the best DeepSeek-V3.1-based baseline by 3.9\% on average and outperforms the strongest GPT-4o-based baseline by 15.5\%. These results suggest that a carefully designed agent framework combined with effective training can substantially narrow the performance gap between open-source and proprietary foundation models. Although DeepSeek-V3.1 is used as the teacher model during SFT, EvoDS achieves superior inference-time performance, highlighting the effectiveness of reinforcement learning in improving long-horizon decision making beyond imitation.

Beyond overall performance, EvoDS demonstrates strong long-horizon and end-to-end data science capabilities. While DABench focuses on relatively simple data analysis tasks, DA-Code, ScienceAgentBench, and MLE-Dojo emphasize iterative interaction, long-horizon reasoning, and end-to-end workflows. EvoDS consistently outperforms open-source baselines on these benchmarks, with especially large gains on DA-Code and MLE-Dojo. On MLE-Dojo, where all other open-source baselines perform poorly (with the best result reaching only 0.136), EvoDS achieves a score of 0.311 and even surpasses proprietary baselines such as ReAct (o4-mini). These results further highlight EvoDS’s ability to handle long-horizon tasks and autonomously coordinate the full data science pipeline, rather than excelling only at isolated subtasks.

Despite its strong overall performance, EvoDS remains challenged on the most difficult data-driven scientific discovery tasks in ScienceAgentBench, where a performance gap persists compared with the strongest proprietary baselines. These tasks typically require deep domain knowledge and abstract reasoning beyond procedural tool usage. While EvoDS benefits from its agent framework and training strategy, its performance is ultimately constrained by the scientific knowledge of the underlying foundation model. Incorporating stronger domain knowledge or external knowledge sources remains an important direction for future work.

\subsection{Ablation Studies (RQ2)}
\begin{table}[!t]
\caption{Ablation study results.}
\label{table: ablation}
\resizebox{\linewidth}{!}{
\begin{tabular}{c|cccc|c}
\toprule
Methods   & DABench        & DA-Code         & SAB & MLE-Dojo        & AVG.           \\ \midrule
w/o train & 0.673          & 0.101          & 0.020             & 0.000          & 0.199          \\
w/o rl    & 0.878          & 0.325          & 0.059             & 0.188          & 0.363          \\
w/ grpo    & 0.869          & 0.333          & 0.108             & 0.287          & 0.399          \\
w/o tool  & 0.882          & 0.336          & 0.059             & 0.184          & 0.365          \\
w/o hier  & 0.900          & 0.337          & 0.069             & 0.251          & 0.389          \\
w/o asa   & 0.894          & 0.352          & 0.088             & 0.257          & 0.398          \\
w/o acc   & 0.865          & 0.333          & 0.098             & 0.122          & 0.355          \\ \hline
EvoDS     & \textbf{0.911} & \textbf{0.355} & \textbf{0.118}    & \textbf{0.311} & \textbf{0.424} \\ \bottomrule
\end{tabular}}
\end{table}

\eat{
To systematically evaluate the contribution of each component in EvoDS, we conduct ablation studies by comparing EvoDS with six variants, each removing or modifying a specific module. Specifically, \textbf{"w/o train"} uses Qwen3-8B as the backbone without any training; \textbf{"w/o rl"} applies supervised fine-tuning only, without reinforcement learning; \textbf{"w/o tool"} restricts the agent to a single code execution tool, removing manually designed data science tools; \textbf{"w/o hier"} replaces the hierarchical agent framework with a flat agent structure; \textbf{"w/o atem"} disables the Adaptive Tool Evolution Mechanism and relies solely on a predefined tool set; and \textbf{"w/o accs"} removes the Context Compression Strategy. The experimental results are reported in Table~\ref{table: ablation}.

As shown in Table~\ref{table: ablation}, EvoDS consistently outperforms all six ablated variants across all benchmarks, demonstrating that each proposed module contributes positively to overall performance. In particular, "w/o rl" substantially outperforms "w/o train", indicating that supervised fine-tuning provides a strong initialization for agent behavior. Furthermore, EvoDS achieves clear gains over "w/o rl", confirming the effectiveness of the proposed reinforcement learning strategy in refining long-horizon decision-making, tool coordination, and agent reasoning.

Comparing "w/o tool" and "w/o hier", we observe that "w/o hier" consistently outperforms "w/o tool", highlighting the importance of integrating explicit data science tools rather than relying solely on code execution. More importantly, EvoDS further improves upon "w/o hier", demonstrating the effectiveness of the hierarchical agent framework itself. This result empirically validates the conclusion of Theorem~\ref{theorem_1}, which shows that hierarchical tool selection yields a tighter error bound than flat selection by decomposing the decision space into structured sub-problems.

In addition, EvoDS outperforms both "w/o atem" and "w/o accs", indicating that the Adaptive Tool Evolution Mechanism and the Adaptive Context Compression Strategy each play a crucial role in enhancing agent performance. The former enables EvoDS to accumulate and reuse task-specific knowledge in the form of evolving tools, thereby improving generalization across tasks, while the latter ensures efficient context management during long-horizon interactions.

Notably, among all ablated variants except "w/o train", "w/o accs" exhibits the worst performance. We find that this degradation is primarily caused by frequent out-of-token-limit failures due to the absence of the Adaptive Context Compression Strategy. To further analyze this issue, we report the proportion of test instances that encounter out-of-token-limit failures in Table~\ref{table: oom}. As shown, "w/o accs" suffers from a high rate of out-of-token-limit failures, particularly on complex benchmarks such as ScienceAgentBench and MLE-Dojo, which require long-horizon reasoning and extensive intermediate context. In contrast, EvoDS does not encounter any out-of-token-limit failures across all benchmarks, demonstrating the effectiveness of the proposed Context Compression Strategy in enabling stable and scalable long-horizon agent execution.
}

To assess the contribution of each module in EvoDS, we conduct ablation studies with seven variants, each removing or modifying a specific module. Specifically, \textbf{"w/o train"} directly uses Qwen3-8B without training; \textbf{"w/o rl"} applies SFT only; \textbf{"w/ grpo"} trains only the Manager Agent using GRPO, without the proposed RL strategy; \textbf{"w/o tool"} restricts the agent to a single code execution tool; \textbf{"w/o hier"} replaces the hierarchical agent architecture with a flat architecture; \textbf{"w/o asa"} disables the Autonomous Skill Acquisition mechanism; and \textbf{"w/o acc"} removes the Adaptive Context Compression strategy. Results are reported in Table~\ref{table: ablation}. Overall, EvoDS consistently outperforms all ablated variants across all benchmarks, indicating that each proposed module contributes positively to performance. The large performance gap between "w/o train" and "w/o rl" highlights the importance of SFT for initializing agent behavior. Moreover, "w/ grpo" outperforms "w/o rl", while EvoDS further improves upon "w/ grpo", demonstrating that RL plays a critical role in refining long-horizon decision making, tool coordination, and execution beyond imitation, and further validating the effectiveness of the proposed joint reinforcement learning strategy for multi-role agents. Comparing "w/o tool" and "w/o hier", we observe that "w/o hier" consistently achieves better performance, indicating the importance of integrating explicit data science tools rather than relying solely on code execution. However, EvoDS further outperforms "w/o hier", demonstrating that the hierarchical agent framework provides additional benefits. This result empirically supports Theorem~\ref{theorem_1}, which shows that hierarchical tool selection yields a tighter error bound than flat selection by decomposing a large decision space into structured sub-problems.

We further observe that removing either the Autonomous Skill Acquisition Mechanism ("w/o asa") or the Adaptive Context Compression strategy ("w/o acc") leads to noticeable performance degradation, indicating that dynamically synthesizing and reusing synthesized tools improves agent capability. Meanwhile, "w/o acc" exhibits the most severe degradation among all trained variants, suggesting that effective context management is essential for stable long-horizon execution. To better understand the impact of context compression, we report the proportion of test instances affected by out-of-token-limit failures in Table~\ref{table: oom}. Without context compression, the agent frequently exceeds the token budget, especially on complex benchmarks such as ScienceAgentBench and MLE-Dojo that require long-horizon reasoning and extensive intermediate context. In contrast, EvoDS eliminates out-of-token-limit failures entirely across all benchmarks, demonstrating that the proposed Adaptive Context Compression strategy is crucial for scalability and reliability in long-horizon data science tasks.

\begin{table}[!t]
\caption{Proportion of samples exceeding token limits.}
\label{table: oom}
\begin{tabular}{c|cccc}
\toprule
\multicolumn{1}{l|}{Methods} & DABench & DACode & SAB    & MLEDojo \\ \midrule
w/o acc                      & 7/257   & 20/500 & 18/102 & 3/10    \\
EvoDS                        & 0/257   & 0/500  & 0/102  & 0/10    \\ \bottomrule
\end{tabular}
\end{table}

\begin{figure*}[!t]
  \centerline{\includegraphics[width=1\linewidth]{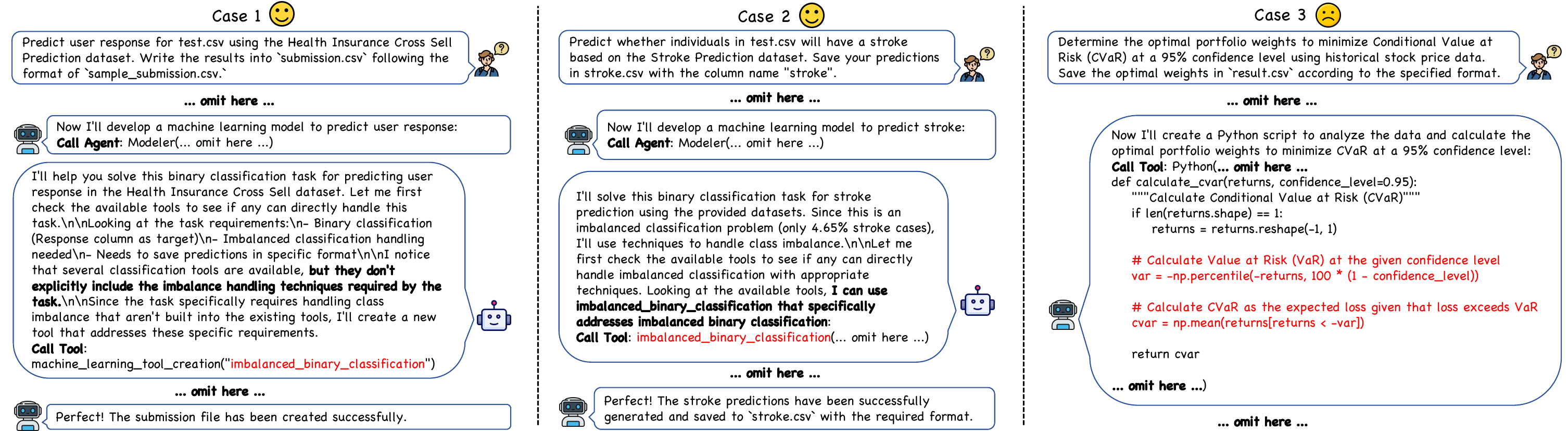}}
  \caption{Case study of EvoDS. Case 1 shows skill synthesis for solving a new task. Case 2 demonstrates cross-task skill reuse. Case 3 presents a failure case on a professional quantitative finance task due to insufficient domain expertise.}
  \label{fig:cases}
\end{figure*}

\eat{
\subsection{Cross-Task Tool Reuse (RQ3)}
To further investigate whether the Adaptive Tool Evolution Mechanism enables EvoDS to preserve and reuse problem-solving knowledge across tasks via tool evolution, we conduct additional analyses focusing on cross-task tool reuse and generalization.

We first design an ablation study to evaluate whether tools synthesized during earlier tasks can effectively facilitate subsequent task solving. Specifically, we split the DA-Code benchmark into a validation set and a test set with a ratio of 3:2. We compare two variants: \textbf{"w/o evo"}, where EvoDS is directly evaluated on the test set without retaining any tools generated from previous tasks; and \textbf{"w/ evo"}, where EvoDS is first evaluated on the validation set, during which tools generated by the Adaptive Tool Evolution Mechanism are retained, and then continues evaluation on the test set with access to these accumulated tools.

To further examine effectiveness under larger distribution shifts, we conduct an additional experiment on ScienceAgentBench. In this setting, we compare "w/o evo", which directly evaluates EvoDS on ScienceAgentBench, with "w/ evo", which leverages tools collected from the DA-Code validation set. This experiment assesses whether evolved tools can generalize across benchmarks with substantially different task distributions. The results are summarized in Table~\ref{table:evo}.

As shown in Table~\ref{table:evo}, the "w/ evo" variant consistently outperforms "w/o evo" on both benchmarks. This demonstrates that tools generated during earlier tasks can effectively assist subsequent task execution. Notably, the performance gain on ScienceAgentBench indicates that the evolved tools remain beneficial even under significant distribution shifts, suggesting strong cross-task generalization and effectiveness of the Adaptive Tool Evolution Mechanism.

We further provide a qualitative case study to illustrate how EvoDS achieves self-evolving behavior in practice, as shown in Appendi~\ref{appendix: case}. In one DA-Code task, EvoDS autonomously synthesizes an \texttt{imbalanced\_binary\_classification} tool to address class imbalance, which is not explicitly provided in the predefined tool set. In subsequent tasks exhibiting similar data characteristics, EvoDS directly reuses this evolved tool without re-deriving the solution from scratch. This example highlights that the Adaptive Tool Evolution Mechanism enables EvoDS to externalize task-specific reasoning into reusable tools, thereby accumulating and transferring knowledge across tasks.
}

\subsection{Cross-Task Skill Reuse (RQ3)}

To evaluate whether the Autonomous Skill Acquisition mechanism enables EvoDS to acquire and reuse problem-solving skills across tasks, we conduct experiments on both within-benchmark and cross-benchmark cross-task skill reuse and generalization. For the within-benchmark setting, we split the DA-Code benchmark into a validation set and a test set with a ratio of 3:2. We compare two variants: \textbf{"w/o reuse"}, which evaluates EvoDS on the test set without retaining any previously synthesized skills, and \textbf{"w/ reuse"}, which first evaluates EvoDS on the validation set, retains the synthesized skills, and then evaluates on the test set with access to these accumulated skills. For the cross-benchmark setting, we conduct an additional experiment on ScienceAgentBench. In this setting, "w/ reuse" leverages skills collected from the DA-Code validation set, while "w/o reuse" is evaluated directly on ScienceAgentBench 

The results are reported in Table~\ref{table:evo}. As shown, "w/ reuse" consistently outperforms "w/o reuse", achieving improvements of 2.9\% and 9.3\% on the challenging DA-Code and SAB benchmarks, respectively. These results demonstrate that synthesized skills from earlier tasks can effectively facilitate subsequent problem solving. Notably, the performance gain on ScienceAgentBench indicates that the synthesized skills remain beneficial under substantial distribution shifts, suggesting that they capture reusable capabilities rather than task-specific procedures. To further analyze the effectiveness of skill reuse, we additionally collect statistics of synthesized skills during testing. EvoDS synthesized 279 skills in total, which were invoked 925 times with a 69\% cross-task reuse rate, further demonstrating the strong generalizability and reusability of the acquired skills across diverse tasks.

\begin{table}[t!]
\centering
\caption{Performance comparison with and without the reuse of synthesized skills.}
\begin{tabular}{c|cc}
\toprule
Methods & DA-Code & ScienceAgentBench \\ \midrule
w/o reuse & 0.341 & 0.108 \\
w/ reuse  & 0.351 & 0.118 \\ 
\bottomrule
\end{tabular}
\label{table:evo}
\end{table}

\eat{
\begin{figure}[!t]
  \centerline{\includegraphics[width=0.8\linewidth]{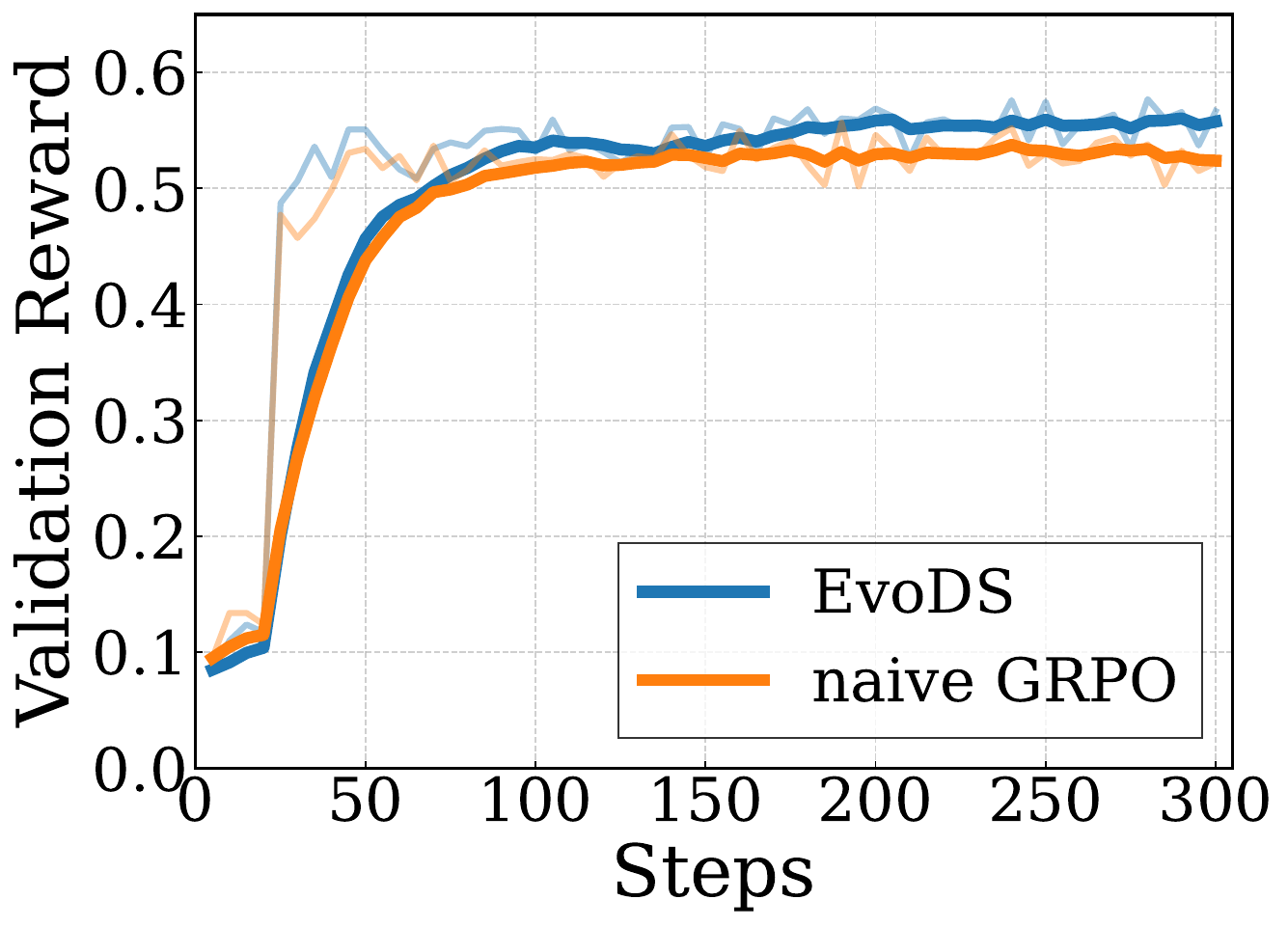}}
  \caption{Validation performance curves of the multi-agent reinforcement learning used by EvoDS and naive GRPO across training steps in the RL stage.}
  \label{fig:rl}
\end{figure}
}

\eat{
\subsection{Further Analysis (RQ4)}
To evaluate EvoDS’s effectiveness in solving complex, long-horizon tasks, we compare EvoDS with DeepAnalyze on DA-Code under varying context lengths, as shown in Figure~\ref{fig:context}. As the context length increases, the performance of DeepAnalyze degrades substantially. While EvoDS also exhibits a mild performance decline, it consistently maintains higher accuracy across all context ranges. In particular, DeepAnalyze fails to solve most instances with contexts exceeding 20k tokens and frequently encounters out-of-token-limit failures. In contrast, EvoDS still achieves approximately 10\% accuracy on instances with context lengths between 20k and 25k, and no instances exceed 25k tokens. This is primarily attributed to the proposed Adaptive Context Compression Strategy.

Finally, to validate the effectiveness of the proposed multi-agent reinforcement learning algorithm, we compare it against a naive GRPO baseline that optimizes only the Manager Agent’s trajectories while ignoring sub-agent trajectories. We track the average validation score during the RL training process for both methods, as shown in Figure~\ref{fig:rl}.

From the learning curves, we observe that EvoDS consistently achieves higher validation performance than naive GRPO throughout training. This result indicates that jointly optimizing both manager and sub-agent behaviors leads to improved credit assignment and coordination, thereby enhancing learning efficiency and final performance. Together, these results further confirm the effectiveness of EvoDS’s context compression design and its multi-agent reinforcement learning strategy.
}

\subsection{Case Studies and Failure Analysis (RQ4)}
In this section, we present case studies to illustrate the self-evolving behavior, limitations, and failure modes of EvoDS. We select three representative tasks from DA-Code and omit intermediate details for clarity. As shown in Figure~\ref{fig:cases}, in Case~1, EvoDS identifies that the task requires handling severe class imbalance, while the predefined skill set lacks a suitable solution. EvoDS therefore synthesizes an \textit{imbalanced\_binary\_classification} skill tailored to the task. In Case~2, EvoDS encounters another task with similar imbalance characteristics and directly reuses the previously synthesized skill, avoiding redundant exploration and enabling more efficient problem solving. These examples demonstrate that EvoDS can not only solve complex data science tasks through adaptive skill synthesis, but also accumulate reusable capabilities across related tasks.

In contrast, Case~3 presents a failure scenario on a professional quantitative finance task requiring expertise in financial modeling and numerical optimization. Due to insufficient domain knowledge, EvoDS produces flawed execution logic, leading to task failure. To further analyze the limitations of EvoDS, we investigate 50 failed cases on DA-Code and categorize them into four major types: (1) Instruction Following Errors (52\%), where the agent fails to follow task instructions or constraints; (2) Execution Limits (18\%), where complex tasks cannot be solved within the interaction budget; (3) Coordination Errors (18\%), where ineffective coordination among agents causes information loss or execution failures; and (4) Reasoning Deficits (12\%), where the execution logic is flawed. These results suggest that, although EvoDS demonstrates strong adaptability and skill reuse ability, further improvements are still needed in domain expertise, long-horizon coordination, and robust reasoning.

\begin{figure}[!t]
  \centerline{\includegraphics[width=0.98\linewidth]{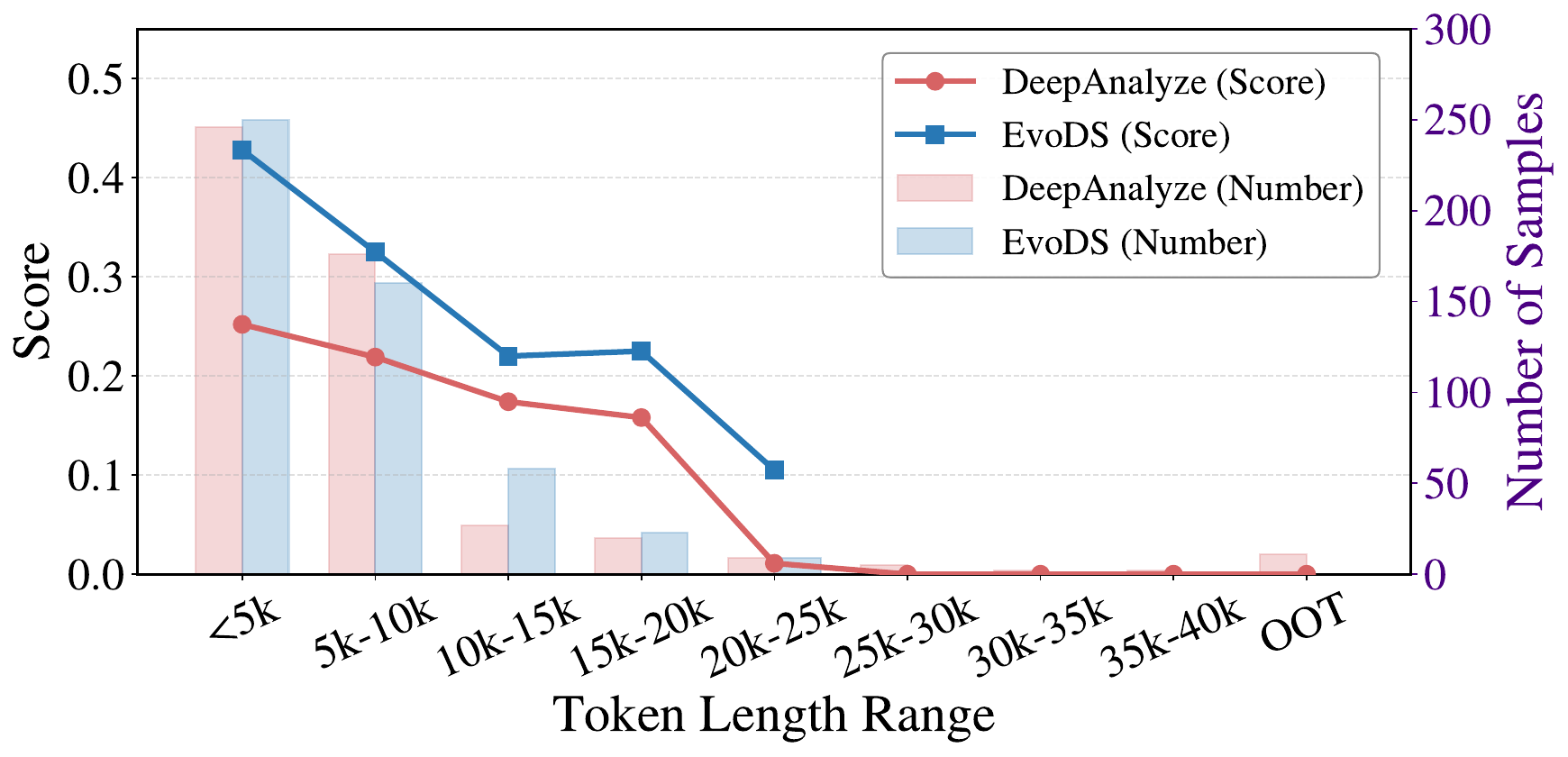}}
  \caption{Performance of EvoDS and DeepAnalyze across varying context token length ranges on DA-Code, where OOT denotes out-of-token limitations.}
  \label{fig:context}
\end{figure}

\subsection{Further Analysis}

To examine EvoDS’s effectiveness on complex long-horizon tasks, we compare EvoDS with DeepAnalyze on DA-Code under varying context lengths, as shown in Figure~\ref{fig:context}. As the context length increases, the performance of DeepAnalyze degrades sharply, failing to solve most instances with contexts exceeding 20k tokens and frequently encountering out-of-token-limit failures. In contrast, although EvoDS also exhibits a performance decline, it consistently maintains higher accuracy across all context ranges and still achieves around 10\% accuracy on instances with context lengths between 20k and 25k. Notably, no EvoDS instances exceed 25k tokens. These results indicate that the proposed Adaptive Context Compression strategy effectively controls context growth, enabling more stable and scalable long-horizon execution.

We further analyze the proposed joint reinforcement learning strategy for multi-role agents by comparing it with a naive GRPO baseline that optimizes only the Manager Agent using main trajectories. Figure~\ref{fig:training} presents the average validation reward and context token usage during training for both methods. EvoDS consistently achieves higher validation performance while using fewer context tokens throughout training. These results indicate that jointly optimizing the Manager and sub-agent behaviors leads to more effective coordination, which is essential for learning in multi-agent settings. During training, EvoDS first exhibits an increase in context usage due to the gradual expansion of the interaction turn budget, followed by a steady reduction driven by the penalty term introduced in the reward function to encourage more efficient execution behaviors and compact context usage. Overall, these analyses demonstrate that EvoDS’s advantages on long-horizon tasks stem from both effective context management and principled agentic reinforcement learning.

\begin{figure}[!t]
  \centering
  \begin{subfigure}{0.23\textwidth}
    \includegraphics[width=\linewidth]{plot/validation_curve.pdf}
    \caption{Reward over Training Steps.
    }
    \label{fig:rl}
  \end{subfigure}
  \begin{subfigure}{0.23\textwidth}
    \includegraphics[width=\linewidth]{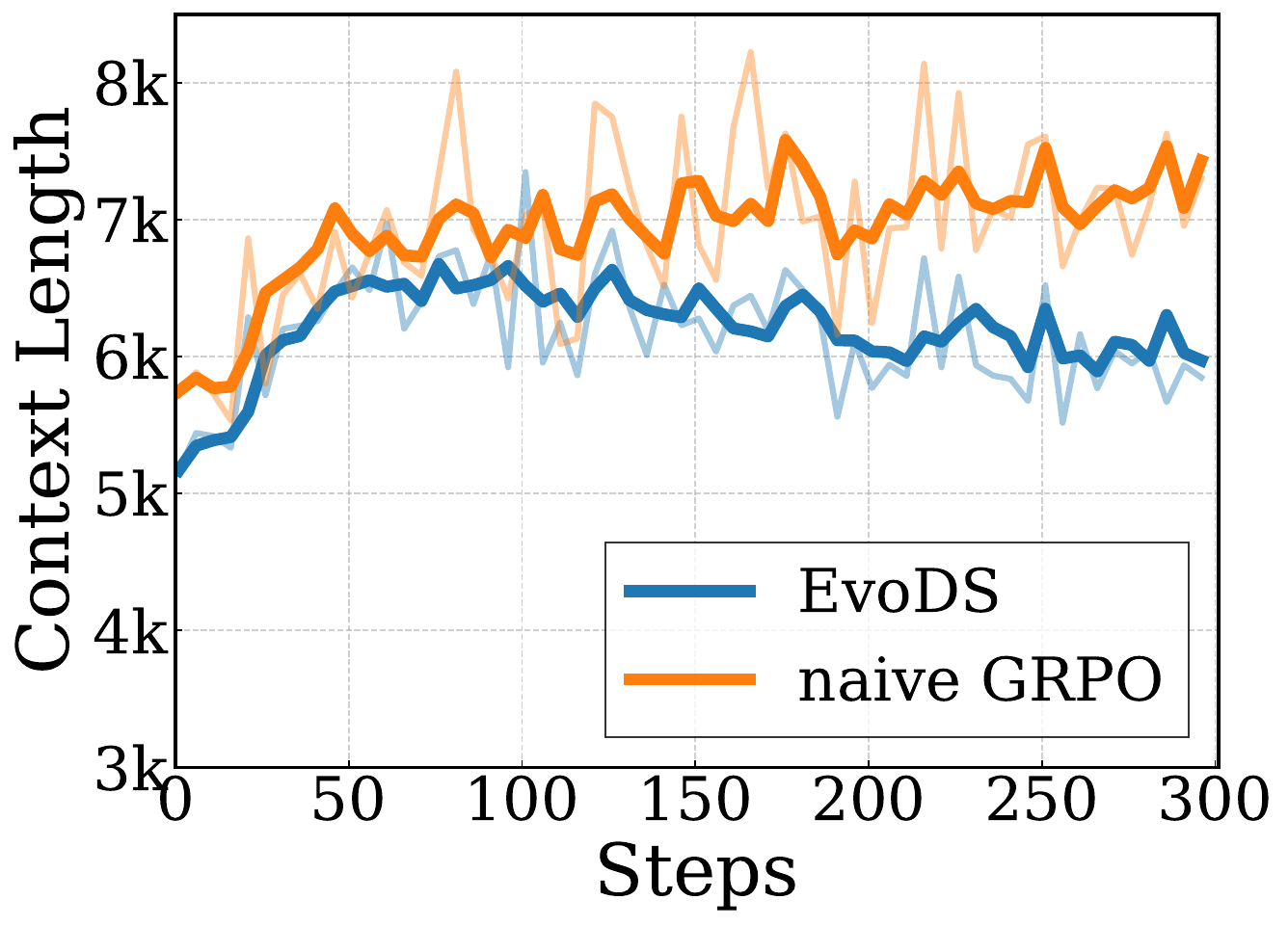}
    \caption{Context over Training Steps.}
    \label{fig:context_curve}
  \end{subfigure}
  \caption{Reward and context length of EvoDS and naive GRPO across RL training steps.}
  \label{fig:training}
\end{figure}

\section{Conclusion}
In this work, we presented EvoDS, a self-evolving autonomous data science agent designed to address key limitations of existing LLM-based data science systems, particularly their inability to acquire reusable skills from experience and effectively manage long-horizon execution contexts. By integrating a hierarchical multi-agent architecture with autonomous skill acquisition, adaptive context compression, and a joint reinforcement learning strategy for multi-role agents, EvoDS can continuously accumulate reusable skills, efficiently regulate long execution contexts, and solve complex data science tasks in an end-to-end manner. Extensive experiments and detailed analyses validate the effectiveness of each proposed component and demonstrate that EvoDS generalizes well across tasks and benchmarks, consistently outperforming both open-source and proprietary baselines. These findings suggest that self-evolving agent frameworks provide a promising direction toward scalable, adaptive, and autonomous data science systems.

\eat{
In this work, we presented EvoDS, a self-evolving data science agent designed to overcome key limitations of existing LLM-based data science agents, which lack the ability to acquire skills from experience and to effectively manage long-horizon execution contexts. By integrating a hierarchical multi-agent architecture with autonomous skill acquisition, context compression, and joint reinforcement learning strategy for multi-role agents, EvoDS continuously accumulates reusable skills, efficiently manages long contexts, and solves complex data science tasks in an end-to-end manner. Extensive experiments and analyses validate the contribution of each component and demonstrate that EvoDS generalizes well across tasks and benchmarks, achieving superior performance over both open-source and proprietary baselines. These results suggest that self-evolving agent frameworks offer a promising direction for building scalable and autonomous data science systems.}

\begin{acks}
This work was supported by the National Natural Science Foundation of China (Grant No. 62572417, No.92370204), National Key R\&D Program of China (Grant No.2023YFF0725004).
\end{acks}

\clearpage
\bibliographystyle{ACM-Reference-Format}
\bibliography{ref}

@inproceedings{DBLP:conf/iclr/AlemiFD017,
  author       = {Alexander A. Alemi and
                  Ian Fischer and
                  Joshua V. Dillon and
                  Kevin Murphy},
  title        = {Deep Variational Information Bottleneck},
  booktitle    = {{ICLR}},
  year         = {2017}
}

@article{DBLP:journals/tacl/LiuLHPBPL24,
  author       = {Nelson F. Liu and
                  Kevin Lin and
                  John Hewitt and
                  Ashwin Paranjape and
                  Michele Bevilacqua and
                  Fabio Petroni and
                  Percy Liang},
  title        = {Lost in the Middle: How Language Models Use Long Contexts},
  journal      = {Trans. Assoc. Comput. Linguistics},
  volume       = {12},
  pages        = {157--173},
  year         = {2024}
}

@inproceedings{
wu2024autogen,
title={AutoGen: Enabling Next-Gen {LLM} Applications via Multi-Agent Conversations},
author={Qingyun Wu and Gagan Bansal and Jieyu Zhang and Yiran Wu and Beibin Li and Erkang Zhu and Li Jiang and Xiaoyun Zhang and Shaokun Zhang and Jiale Liu and Ahmed Hassan Awadallah and Ryen W White and Doug Burger and Chi Wang},
booktitle={{COLM}},
year={2024}
}

@inproceedings{DBLP:conf/iclr/YaoZYDSN023,
  author       = {Shunyu Yao and
                  Jeffrey Zhao and
                  Dian Yu and
                  Nan Du and
                  Izhak Shafran and
                  Karthik R. Narasimhan and
                  Yuan Cao},
  title        = {ReAct: Synergizing Reasoning and Acting in Language Models},
  booktitle    = {{ICLR}},
  year         = {2023}
}

@inproceedings{DBLP:conf/icml/GuoD0C0024,
  author       = {Siyuan Guo and
                  Cheng Deng and
                  Ying Wen and
                  Hechang Chen and
                  Yi Chang and
                  Jun Wang},
  title        = {DS-Agent: Automated Data Science by Empowering Large Language Models with Case-Based Reasoning},
  booktitle    = {{ICML}},
  pages        = {16813--16848},
  year         = {2024}
}

@inproceedings{DBLP:conf/icml/TriratJH25,
  author       = {Patara Trirat and
                  Wonyong Jeong and
                  Sung Ju Hwang},
  title        = {AutoML-Agent: {A} Multi-Agent {LLM} Framework for Full-Pipeline AutoML},
  booktitle    = {{ICML}},
  pages = 	 {60099--60146},
  year         = {2025}
}

@inproceedings{
li2025autokaggle,
title={AutoKaggle: A Multi-Agent Framework for Autonomous Data Science Competitions},
author={Ziming Li and Qianbo Zang and David Ma and Jiawei Guo and Tianyu Zheng and Minghao Liu and Xinyao Niu and Yue Wang and Jian Yang and Jiaheng Liu and Wanjun Zhong and Wangchunshu Zhou and Stephen Huang and Ge Zhang},
booktitle={DL4C@ICLR},
year={2025}
}

@inproceedings{
liu2025mmagent,
title={{MM}-Agent: {LLM} as Agents for Real-world Mathematical Modeling Problem},
author={Fan Liu and Zhe-Rui Yang and Cancheng Liu and Tianrui SONG and Xiaofeng Gao and Hao Liu},
booktitle={NeurIPS},
year={2025}
}

@article{DBLP:journals/corr/abs-2510-16872,
  author       = {Shaolei Zhang and
                  Ju Fan and
                  Meihao Fan and
                  Guoliang Li and
                  Xiaoyong Du},
  title        = {DeepAnalyze: Agentic Large Language Models for Autonomous Data Science},
  journal      = {CoRR},
  volume       = {abs/2510.16872},
  year         = {2025}
}

@article{DBLP:journals/corr/SchulmanWDRK17,
  author       = {John Schulman and
                  Filip Wolski and
                  Prafulla Dhariwal and
                  Alec Radford and
                  Oleg Klimov},
  title        = {Proximal Policy Optimization Algorithms},
  journal      = {CoRR},
  volume       = {abs/1707.06347},
  year         = {2017}
}

@article{DBLP:journals/corr/abs-2402-03300,
  author       = {Zhihong Shao and
                  Peiyi Wang and
                  Qihao Zhu and
                  Runxin Xu and
                  Junxiao Song and
                  Mingchuan Zhang and
                  Y. K. Li and
                  Y. Wu and
                  Daya Guo},
  title        = {DeepSeekMath: Pushing the Limits of Mathematical Reasoning in Open Language Models},
  journal      = {CoRR},
  volume       = {abs/2402.03300},
  year         = {2024}
}

@inproceedings{DBLP:conf/acl/AhmadianCGFKPUH24,
  author       = {Arash Ahmadian and
                  Chris Cremer and
                  Matthias Gall{\'{e}} and
                  Marzieh Fadaee and
                  Julia Kreutzer and
                  Olivier Pietquin and
                  Ahmet {\"{U}}st{\"{u}}n and
                  Sara Hooker},
  title        = {Back to Basics: Revisiting REINFORCE-Style Optimization for Learning from Human Feedback in LLMs},
  booktitle    = {{ACL}},
  pages        = {12248--12267},
  year         = {2024}
}

@article{DBLP:journals/corr/abs-2501-03262,
  author       = {Jian Hu},
  title        = {{REINFORCE++:} {A} Simple and Efficient Approach for Aligning Large Language Models},
  journal      = {CoRR},
  volume       = {abs/2501.03262},
  year         = {2025}
}

@article{DBLP:journals/corr/abs-2511-13288,
  author       = {Haoyang Hong and
                  Jiajun Yin and
                  Yuan Wang and
                  Jingnan Liu and
                  Zhe Chen and
                  Ailing Yu and
                  Ji Li and
                  Zhiling Ye and
                  Hansong Xiao and
                  Yefei Chen and
                  Hualei Zhou and
                  Yun Yue and
                  Minghui Yang and
                  Chunxiao Guo and
                  Junwei Liu and
                  Peng Wei and
                  Jinjie Gu},
  title        = {Multi-Agent Deep Research: Training Multi-Agent Systems with {M-GRPO}},
  journal      = {CoRR},
  volume       = {abs/2511.13288},
  year         = {2025}
}

@article{DBLP:journals/corr/abs-2510-04678,
  author       = {Zhanfeng Mo and
                  Xingxuan Li and
                  Yuntao Chen and
                  Lidong Bing},
  title        = {Multi-Agent Tool-Integrated Policy Optimization},
  journal      = {CoRR},
  volume       = {abs/2510.04678},
  year         = {2025}
}

@inproceedings{DBLP:conf/icml/HuZWCM0WSXZCY0K24,
  author       = {Xueyu Hu and
                  Ziyu Zhao and
                  Shuang Wei and
                  Ziwei Chai and
                  Qianli Ma and
                  Guoyin Wang and
                  Xuwu Wang and
                  Jing Su and
                  Jingjing Xu and
                  Ming Zhu and
                  Yao Cheng and
                  Jianbo Yuan and
                  Jiwei Li and
                  Kun Kuang and
                  Yang Yang and
                  Hongxia Yang and
                  Fei Wu},
  title        = {InfiAgent-DABench: Evaluating Agents on Data Analysis Tasks},
  booktitle    = {{ICML}},
  pages        = {19544--19572},
  year         = {2024}
}

@inproceedings{DBLP:conf/emnlp/HuangLYZLWHHLZL24,
  author       = {Yiming Huang and
                  Jianwen Luo and
                  Yan Yu and
                  Yitong Zhang and
                  Fangyu Lei and
                  Yifan Wei and
                  Shizhu He and
                  Lifu Huang and
                  Xiao Liu and
                  Jun Zhao and
                  Kang Liu},
  title        = {DA-Code: Agent Data Science Code Generation Benchmark for Large Language Models},
  booktitle    = {{EMNLP}},
  pages        = {13487--13521},
  year         = {2024}
}

@inproceedings{DBLP:conf/iclr/ChenCNZWYLLWLDX25,
  author       = {Ziru Chen and
                  Shijie Chen and
                  Yuting Ning and
                  Qianheng Zhang and
                  Boshi Wang and
                  Botao Yu and
                  Yifei Li and
                  Zeyi Liao and
                  Chen Wei and
                  Zitong Lu and
                  Vishal Dey and
                  Mingyi Xue and
                  Frazier N. Baker and
                  Benjamin Burns and
                  Daniel Adu{-}Ampratwum and
                  Xuhui Huang and
                  Xia Ning and
                  Song Gao and
                  Yu Su and
                  Huan Sun},
  title        = {ScienceAgentBench: Toward Rigorous Assessment of Language Agents for Data-Driven Scientific Discovery},
  booktitle    = {{ICLR}},
  year         = {2025}
}

@inproceedings{
qiang2025mledojo,
title={{MLE}-Dojo: Interactive Environments for Empowering {LLM} Agents in Machine Learning Engineering},
author={Rushi Qiang and Yuchen Zhuang and Yinghao Li and Dingu Sagar V K and Rongzhi Zhang and ChangHao Li and Ian Shu-Hei Wong and Sherry Yang and Percy Liang and Chao Zhang and Bo Dai},
booktitle={{NeurIPS}},
year={2025}
}

@inproceedings{DBLP:conf/acl/YangZWCHYLTLYLS24,
  author       = {Zhiyu Yang and
                  Zihan Zhou and
                  Shuo Wang and
                  Xin Cong and
                  Xu Han and
                  Yukun Yan and
                  Zhenghao Liu and
                  Zhixing Tan and
                  Pengyuan Liu and
                  Dong Yu and
                  Zhiyuan Liu and
                  Xiaodong Shi and
                  Maosong Sun},
  title        = {MatPlotAgent: Method and Evaluation for LLM-Based Agentic Scientific Data Visualization},
  booktitle    = {{ACL} (Findings)},
  pages        = {11789--11804},
  year         = {2024}
}

@misc{openai2023,
  author       = {OpenAI},
  title        = {Code Interpreter},
  year         = {2023},
  url          = {https://platform.openai.com/docs/guides/tools-code-interpreter}
}

@article{Sun17072025,
author = {Maojun Sun and Ruijian Han and Binyan Jiang and Houduo Qi and Defeng Sun and Yancheng Yuan and Jian Huang},
title = {LAMBDA: A Large Model Based Data Agent},
journal = {J. Am. Stat. Assoc.},
volume = {0},
number = {0},
pages = {1--13},
year = {2025},
}

@inproceedings{DBLP:journals/corr/abs-2509-25084,
  author       = {Shuofei Qiao and
                  Yanqiu Zhao and
                  Zhisong Qiu and
                  Xiaobin Wang and
                  Jintian Zhang and
                  Zhao Bin and
                  Ningyu Zhang and
                  Yong Jiang and
                  Pengjun Xie and
                  Fei Huang and
                  Huajun Chen},
  title        = {Scaling Generalist Data-Analytic Agents},
  booktitle={{ICLR}},
  year         = {2026}
}

@misc{deepseek2025v31,
  author       = {DeepSeek},
  title        = {DeepSeek-V3.1 Release},
  year         = {2025},
  url          = {https://api-docs.deepseek.com/news/news250821}
}

@misc{openai2023gpt4,
  author       = {OpenAI},
  title        = {Hello GPT-4},
  year         = {2023},
  url          = {https://openai.com/zh-Hans-CN/index/hello-gpt-4o/}
}

@misc{openai2025o3o4mini,
  author       = {OpenAI},
  title        = {Introducing OpenAI o3 and o4-mini},
  year         = {2025},
  url          = {https://openai.com/zh-Hans-CN/index/introducing-o3-and-o4-mini/}
}

@article{DBLP:journals/corr/abs-2505-09388,
  author       = {An Yang and
                  Anfeng Li and
                  Baosong Yang and
                  Beichen Zhang and
                  Binyuan Hui and
                  Bo Zheng and
                  Bowen Yu and
                  Chang Gao and
                  Chengen Huang and
                  Chenxu Lv and
                  Chujie Zheng and
                  Dayiheng Liu and
                  Fan Zhou and
                  Fei Huang and
                  Feng Hu and
                  Hao Ge and
                  Haoran Wei and
                  Huan Lin and
                  Jialong Tang and
                  Jian Yang and
                  Jianhong Tu and
                  Jianwei Zhang and
                  Jian Yang and
                  Jiaxi Yang and
                  Jingren Zhou and
                  Junyang Lin and
                  Kai Dang and
                  Keqin Bao and
                  Kexin Yang and
                  Le Yu and
                  Lianghao Deng and
                  Mei Li and
                  Mingfeng Xue and
                  Mingze Li and
                  Pei Zhang and
                  Peng Wang and
                  Qin Zhu and
                  Rui Men and
                  Ruize Gao and
                  Shixuan Liu and
                  Shuang Luo and
                  Tianhao Li and
                  Tianyi Tang and
                  Wenbiao Yin and
                  Xingzhang Ren and
                  Xinyu Wang and
                  Xinyu Zhang and
                  Xuancheng Ren and
                  Yang Fan and
                  Yang Su and
                  Yichang Zhang and
                  Yinger Zhang and
                  Yu Wan and
                  Yuqiong Liu and
                  Zekun Wang and
                  Zeyu Cui and
                  Zhenru Zhang and
                  Zhipeng Zhou and
                  Zihan Qiu},
  title        = {Qwen3 Technical Report},
  journal      = {CoRR},
  volume       = {abs/2505.09388},
  year         = {2025}
}

@inproceedings{DBLP:conf/iclr/JingHWYYM0DY25,
  author       = {Liqiang Jing and
                  Zhehui Huang and
                  Xiaoyang Wang and
                  Wenlin Yao and
                  Wenhao Yu and
                  Kaixin Ma and
                  Hongming Zhang and
                  Xinya Du and
                  Dong Yu},
  title        = {DSBench: How Far Are Data Science Agents from Becoming Data Science Experts?},
  booktitle    = {{ICLR}},
  year         = {2025}
}

@article{DBLP:journals/corr/abs-2509-01055,
  author       = {Dongfu Jiang and
                  Yi Lu and
                  Zhuofeng Li and
                  Zhiheng Lyu and
                  Ping Nie and
                  Haozhe Wang and
                  Alex Su and
                  Hui Chen and
                  Kai Zou and
                  Chao Du and
                  Tianyu Pang and
                  Wenhu Chen},
  title        = {VerlTool: Towards Holistic Agentic Reinforcement Learning with Tool Use},
  journal      = {CoRR},
  volume       = {abs/2509.01055},
  year         = {2025}
}

@article{sun2025survey,
  title={A survey on large language model-based agents for statistics and data science},
  author={Sun, Maojun and Han, Ruijian and Jiang, Binyan and Qi, Houduo and Sun, Defeng and Yuan, Yancheng and Huang, Jian},
  journal={Am. Stat.},
  volume = {0},
  number = {0},
  pages={1--14},
  year={2025}
}

@article{DBLP:journals/corr/abs-2508-02744,
  author       = {Peiran Wang and
                  Yaoning Yu and
                  Ke Chen and
                  Xianyang Zhan and
                  Haohan Wang},
  title        = {Large Language Model-based Data Science Agent: {A} Survey},
  journal      = {CoRR},
  volume       = {abs/2508.02744},
  year         = {2025}
}

@article{DBLP:journals/corr/abs-2509-23988,
  author       = {Zirui Tang and
                  Weizheng Wang and
                  Zihang Zhou and
                  Yang Jiao and
                  Bangrui Xu and
                  Boyu Niu and
                  Xuanhe Zhou and
                  Guoliang Li and
                  Yeye He and
                  Wei Zhou and
                  Yitong Song and
                  Cheng Tan and
                  Bin Wang and
                  Conghui He and
                  Xiaoyang Wang and
                  Fan Wu},
  title        = {LLM/Agent-as-Data-Analyst: {A} Survey},
  journal      = {CoRR},
  volume       = {abs/2509.23988},
  year         = {2025}
}

@article{DBLP:journals/corr/abs-2510-23587,
  author       = {Yizhang Zhu and
                  Liangwei Wang and
                  Chenyu Yang and
                  Xiaotian Lin and
                  Boyan Li and
                  Wei Zhou and
                  Xinyu Liu and
                  Zhangyang Peng and
                  Tianqi Luo and
                  Yu Li and
                  Chengliang Chai and
                  Chong Chen and
                  Shimin Di and
                  Ju Fan and
                  Ji Sun and
                  Nan Tang and
                  Fugee Tsung and
                  Jiannan Wang and
                  Chenglin Wu and
                  Yanwei Xu and
                  Shaolei Zhang and
                  Yong Zhang and
                  Xuanhe Zhou and
                  Guoliang Li and
                  Yuyu Luo},
  title        = {A Survey of Data Agents: Emerging Paradigm or Overstated Hype?},
  journal      = {CoRR},
  volume       = {abs/2510.23587},
  year         = {2025}
}

@article{DBLP:journals/cacm/BieRHHSW22,
  author       = {Tijl De Bie and
                  Luc De Raedt and
                  Jos{\'{e}} Hern{\'{a}}ndez{-}Orallo and
                  Holger H. Hoos and
                  Padhraic Smyth and
                  Christopher K. I. Williams},
  title        = {Automating data science},
  journal      = {Commun. {ACM}},
  volume       = {65},
  number       = {3},
  pages        = {76--87},
  year         = {2022}
}

@inproceedings{DBLP:conf/ijcnn/0001AHKSB0PRRWG20,
  author       = {Djallel Bouneffouf and
                  Charu C. Aggarwal and
                  Thanh Hoang and
                  Udayan Khurana and
                  Horst Samulowitz and
                  Beat Buesser and
                  Sijia Liu and
                  Tejaswini Pedapati and
                  Parikshit Ram and
                  Ambrish Rawat and
                  Martin Wistuba and
                  Alexander G. Gray},
  title        = {Survey on Automated End-to-End Data Science?},
  booktitle    = {{IJCNN}},
  pages        = {1--9},
  year         = {2020}
}

@article{MUMUNI2025113,
title = {Automated data processing and feature engineering for deep learning and big data applications: A survey},
journal = {J. Inf. Intell.},
volume = {3},
number = {2},
pages = {113-153},
year = {2025},
author = {Alhassan Mumuni and Fuseini Mumuni}
}

@article{DBLP:journals/jair/ZollerH21,
  author       = {Marc{-}Andr{\'{e}} Z{\"{o}}ller and
                  Marco F. Huber},
  title        = {Benchmark and Survey of Automated Machine Learning Frameworks},
  journal      = {J. Artif. Intell. Res.},
  volume       = {70},
  pages        = {409--472},
  year         = {2021}
}

@article{DBLP:journals/corr/abs-2303-18223,
  author       = {Wayne Xin Zhao and
                  Kun Zhou and
                  Junyi Li and
                  Tianyi Tang and
                  Xiaolei Wang and
                  Yupeng Hou and
                  Yingqian Min and
                  Beichen Zhang and
                  Junjie Zhang and
                  Zican Dong and
                  Yifan Du and
                  Chen Yang and
                  Yushuo Chen and
                  Zhipeng Chen and
                  Jinhao Jiang and
                  Ruiyang Ren and
                  Yifan Li and
                  Xinyu Tang and
                  Zikang Liu and
                  Peiyu Liu and
                  Jian{-}Yun Nie and
                  Ji{-}Rong Wen},
  title        = {A Survey of Large Language Models},
  journal      = {CoRR},
  volume       = {abs/2303.18223},
  year         = {2023}
}

@article{DBLP:journals/tist/NaveedKQSAUABM25,
  author       = {Humza Naveed and
                  Asad Ullah Khan and
                  Shi Qiu and
                  Muhammad Saqib and
                  Saeed Anwar and
                  Muhammad Usman and
                  Naveed Akhtar and
                  Nick Barnes and
                  Ajmal Mian},
  title        = {A Comprehensive Overview of Large Language Models},
  journal      = {{ACM} Trans. Intell. Syst. Technol.},
  volume       = {16},
  number       = {5},
  pages        = {106:1--106:72},
  year         = {2025}
}

@article{DBLP:journals/corr/abs-2402-06196,
  author       = {Shervin Minaee and
                  Tom{\'{a}}s Mikolov and
                  Narjes Nikzad and
                  Meysam Chenaghlu and
                  Richard Socher and
                  Xavier Amatriain and
                  Jianfeng Gao},
  title        = {Large Language Models: {A} Survey},
  journal      = {CoRR},
  volume       = {abs/2402.06196},
  year         = {2024}
}

@article{DBLP:journals/fcsc/WangMFZYZCTCLZWW24,
  author       = {Lei Wang and
                  Chen Ma and
                  Xueyang Feng and
                  Zeyu Zhang and
                  Hao Yang and
                  Jingsen Zhang and
                  Zhiyuan Chen and
                  Jiakai Tang and
                  Xu Chen and
                  Yankai Lin and
                  Wayne Xin Zhao and
                  Zhewei Wei and
                  Jirong Wen},
  title        = {A survey on large language model based autonomous agents},
  journal      = {Frontiers Comput. Sci.},
  volume       = {18},
  number       = {6},
  pages        = {186345},
  year         = {2024}
}

@inproceedings{DBLP:conf/ijcai/GuoCWCPCW024,
  author       = {Taicheng Guo and
                  Xiuying Chen and
                  Yaqi Wang and
                  Ruidi Chang and
                  Shichao Pei and
                  Nitesh V. Chawla and
                  Olaf Wiest and
                  Xiangliang Zhang},
  title        = {Large Language Model Based Multi-agents: {A} Survey of Progress and Challenges},
  booktitle    = {{IJCAI}},
  pages        = {8048--8057},
  year         = {2024}
}

@article{DBLP:journals/chinaf/XiCGHDHZWJZZFWXZWJZLYDW25,
  author       = {Zhiheng Xi and
                  Wenxiang Chen and
                  Xin Guo and
                  Wei He and
                  Yiwen Ding and
                  Boyang Hong and
                  Ming Zhang and
                  Junzhe Wang and
                  Senjie Jin and
                  Enyu Zhou and
                  Rui Zheng and
                  Xiaoran Fan and
                  Xiao Wang and
                  Limao Xiong and
                  Yuhao Zhou and
                  Weiran Wang and
                  Changhao Jiang and
                  Yicheng Zou and
                  Xiangyang Liu and
                  Zhangyue Yin and
                  Shihan Dou and
                  Rongxiang Weng and
                  Wenjuan Qin and
                  Yongyan Zheng and
                  Xipeng Qiu and
                  Xuanjing Huang and
                  Qi Zhang and
                  Tao Gui},
  title        = {The rise and potential of large language model based agents: a survey},
  journal      = {Sci. China Inf. Sci.},
  volume       = {68},
  number       = {2},
  year         = {2025}
}

@article{DBLP:journals/corr/abs-2510-04023,
  author       = {Mizanur Rahman and
                  Amran Bhuiyan and
                  Mohammed Saidul Islam and
                  Md. Tahmid Rahman Laskar and
                  Ridwan Mahbub and
                  Ahmed Masry and
                  Shafiq Joty and
                  Enamul Hoque},
  title        = {LLM-Based Data Science Agents: {A} Survey of Capabilities, Challenges, and Future Directions},
  journal      = {CoRR},
  volume       = {abs/2510.04023},
  year         = {2025}
}

@inproceedings{
zhang2024datacopilot,
title={Data-Copilot: Bridging Billions of Data and Humans with Autonomous Workflow},
author={Wenqi Zhang and Yongliang Shen and Weiming Lu and Yueting Zhuang},
booktitle={LLMAgents@ICLR},
year={2024}
}

@inproceedings{DBLP:conf/www/ChenYZ000HMZ25,
  author       = {Yibin Chen and
                  Yifu Yuan and
                  Zeyu Zhang and
                  Yan Zheng and
                  Jinyi Liu and
                  Fei Ni and
                  Jianye Hao and
                  Hangyu Mao and
                  Fuzheng Zhang},
  title        = {SheetAgent: Towards a Generalist Agent for Spreadsheet Reasoning and Manipulation via Large Language Models},
  booktitle    = {{WWW}},
  pages        = {158--177},
  year         = {2025}
}

@article{DBLP:journals/corr/abs-2601-10402,
  author       = {Xinyu Zhu and
                  Yuzhu Cai and
                  Zexi Liu and
                  Bingyang Zheng and
                  Cheng Wang and
                  Rui Ye and
                  Jiaao Chen and
                  Hanrui Wang and
                  Wei{-}Chen Wang and
                  Yuzhi Zhang and
                  Linfeng Zhang and
                  Weinan E and
                  Di Jin and
                  Siheng Chen and
                  Yanfeng Wang},
  title        = {Toward Ultra-Long-Horizon Agentic Science: Cognitive Accumulation for Machine Learning Engineering},
  journal      = {CoRR},
  volume       = {abs/2601.10402},
  year         = {2026}
}

@article{DBLP:journals/corr/abs-2506-16499,
  author       = {Zexi Liu and
                  Yuzhu Cai and
                  Xinyu Zhu and
                  Yujie Zheng and
                  Runkun Chen and
                  Ying Wen and
                  Yanfeng Wang and
                  Weinan E and
                  Siheng Chen},
  title        = {ML-Master: Towards AI-for-AI via Integration of Exploration and Reasoning},
  journal      = {CoRR},
  volume       = {abs/2506.16499},
  year         = {2025}
}

@article{DBLP:journals/corr/abs-2510-08511,
  author       = {Shangheng Du and
                  Xiangchao Yan and
                  Dengyang Jiang and
                  Jiakang Yuan and
                  Yusong Hu and
                  Xin Li and
                  Liang He and
                  Bo Zhang and
                  Lei Bai},
  title        = {AutoMLGen: Navigating Fine-Grained Optimization for Coding Agents},
  journal      = {CoRR},
  volume       = {abs/2510.08511},
  year         = {2025}
}

@inproceedings{
nam2025mlestar,
title={{MLE}-{STAR}: Machine Learning Engineering Agent via Search and Targeted Refinement},
author={Jaehyun Nam and Jinsung Yoon and Jiefeng Chen and Jinwoo Shin and Sercan O Arik and Tomas Pfister},
booktitle={NeurIPS},
year={2025}
}

@inproceedings{
fang2025mlzero,
title={{MLZ}ero: A Multi-Agent System for End-to-end Machine Learning Automation},
author={Haoyang Fang and Boran Han and Nick Erickson and Xiyuan Zhang and Su Zhou and Anirudh Dagar and Jiani Zhang and Ali Caner Turkmen and Cuixiong Hu and Huzefa Rangwala and Ying Nian Wu and Bernie Wang and George Karypis},
booktitle={NeurIPS},
year={2025}
}

@article{DBLP:journals/corr/abs-2505-23723,
  author       = {Zexi Liu and
                  Jingyi Chai and
                  Xinyu Zhu and
                  Shuo Tang and
                  Rui Ye and
                  Bolun Zhang and
                  Lei Bai and
                  Siheng Chen},
  title        = {ML-Agent: Reinforcing {LLM} Agents for Autonomous Machine Learning Engineering},
  journal      = {CoRR},
  volume       = {abs/2505.23723},
  year         = {2025}
}

@inproceedings{DBLP:conf/nips/Bo0DFW00W24,
  author       = {Xiaohe Bo and
                  Zeyu Zhang and
                  Quanyu Dai and
                  Xueyang Feng and
                  Lei Wang and
                  Rui Li and
                  Xu Chen and
                  Ji{-}Rong Wen},
  title        = {Reflective Multi-Agent Collaboration based on Large Language Models},
  pages        = {138595-138631},
  booktitle    = {NeurIPS},
  year         = {2024}
}

@inproceedings{
motwani2025malt,
title={{MALT}: Improving Reasoning with Multi-Agent {LLM} Training},
author={Sumeet Ramesh Motwani and Chandler Smith and Rocktim Jyoti Das and Rafael Rafailov and Philip Torr and Ivan Laptev and Fabio Pizzati and Ronald Clark and Christian Schroeder de Witt},
booktitle={COLM},
year={2025}
}

@inproceedings{DBLP:conf/acl/Park0GOZK25,
  author       = {Chanwoo Park and
                  Seungju Han and
                  Xingzhi Guo and
                  Asuman E. Ozdaglar and
                  Kaiqing Zhang and
                  Joo{-}Kyung Kim},
  title        = {MAPoRL: Multi-Agent Post-Co-Training for Collaborative Large Language Models with Reinforcement Learning},
  booktitle    = {{ACL}},
  pages        = {30215--30248},
  year         = {2025}
}

@inproceedings{DBLP:conf/acl/HongLLLWZLCZWZZ25,
  author       = {Sirui Hong and
                  Yizhang Lin and
                  Bang Liu and
                  Bangbang Liu and
                  Binhao Wu and
                  Ceyao Zhang and
                  Danyang Li and
                  Jiaqi Chen and
                  Jiayi Zhang and
                  Jinlin Wang and
                  Li Zhang and
                  Lingyao Zhang and
                  Min Yang and
                  Mingchen Zhuge and
                  Taicheng Guo and
                  Tuo Zhou and
                  Wei Tao and
                  Robert Tang and
                  Xiangtao Lu and
                  Xiawu Zheng and
                  Xinbing Liang and
                  Yaying Fei and
                  Yuheng Cheng and
                  Yongxin Ni and
                  Zhibin Gou and
                  Zongze Xu and
                  Yuyu Luo and
                  Chenglin Wu},
  title        = {Data Interpreter: An {LLM} Agent for Data Science},
  booktitle    = {{ACL} (Findings)},
  pages        = {19796--19821},
  year         = {2025}
}

@article{DBLP:journals/corr/abs-2512-00672,
  author       = {Yaswanth Chittepu and
                  Raghavendra Addanki and
                  Tung Mai and
                  Anup B. Rao and
                  Branislav Kveton},
  title        = {ML-Tool-Bench: Tool-Augmented Planning for {ML} Tasks},
  journal      = {CoRR},
  volume       = {abs/2512.00672},
  year         = {2025}
}

@article{DBLP:journals/corr/abs-2509-06283,
  author       = {Xuan{-}Phi Nguyen and
                  Shrey Pandit and
                  Revanth Gangi Reddy and
                  Austin Xu and
                  Silvio Savarese and
                  Caiming Xiong and
                  Shafiq Joty},
  title        = {SFR-DeepResearch: Towards Effective Reinforcement Learning for Autonomously Reasoning Single Agents},
  journal      = {CoRR},
  volume       = {abs/2509.06283},
  year         = {2025}
}

@article{DBLP:journals/corr/abs-2510-00615,
  author       = {Minki Kang and
                  Wei{-}Ning Chen and
                  Dongge Han and
                  Huseyin A. Inan and
                  Lukas Wutschitz and
                  Yanzhi Chen and
                  Robert Sim and
                  Saravan Rajmohan},
  title        = {{ACON:} Optimizing Context Compression for Long-horizon {LLM} Agents},
  journal      = {CoRR},
  volume       = {abs/2510.00615},
  year         = {2025}
}

@article{DBLP:journals/corr/abs-2507-21046,
  author       = {Huan{-}ang Gao and
                  Jiayi Geng and
                  Wenyue Hua and
                  Mengkang Hu and
                  Xinzhe Juan and
                  Hongzhang Liu and
                  Shilong Liu and
                  Jiahao Qiu and
                  Xuan Qi and
                  Yiran Wu and
                  Hongru Wang and
                  Han Xiao and
                  Yuhang Zhou and
                  Shaokun Zhang and
                  Jiayi Zhang and
                  Jinyu Xiang and
                  Yixiong Fang and
                  Qiwen Zhao and
                  Dongrui Liu and
                  Qihan Ren and
                  Cheng Qian and
                  Zhenhailong Wang and
                  Minda Hu and
                  Huazheng Wang and
                  Qingyun Wu and
                  Heng Ji and
                  Mengdi Wang},
  title        = {A Survey of Self-Evolving Agents: On Path to Artificial Super Intelligence},
  journal      = {CoRR},
  volume       = {abs/2507.21046},
  year         = {2025}
}

@article{DBLP:journals/corr/abs-2508-07407,
  author       = {Jinyuan Fang and
                  Yanwen Peng and
                  Xi Zhang and
                  Yingxu Wang and
                  Xinhao Yi and
                  Guibin Zhang and
                  Yi Xu and
                  Bin Wu and
                  Siwei Liu and
                  Zihao Li and
                  Zhaochun Ren and
                  Nikos Aletras and
                  Xi Wang and
                  Han Zhou and
                  Zaiqiao Meng},
  title        = {A Comprehensive Survey of Self-Evolving {AI} Agents: {A} New Paradigm Bridging Foundation Models and Lifelong Agentic Systems},
  journal      = {CoRR},
  volume       = {abs/2508.07407},
  year         = {2025}
}

@inproceedings{
xu2025amem,
title={A-Mem: Agentic Memory for {LLM} Agents},
author={Wujiang Xu and Zujie Liang and Kai Mei and Hang Gao and Juntao Tan and Yongfeng Zhang},
booktitle={NeurIPS},
year={2025}
}

@inproceedings{DBLP:conf/aaai/ZhongGGYW24,
  author       = {Wanjun Zhong and
                  Lianghong Guo and
                  Qiqi Gao and
                  He Ye and
                  Yanlin Wang},
  title        = {MemoryBank: Enhancing Large Language Models with Long-Term Memory},
  booktitle    = {{AAAI}},
  pages        = {19724--19731},
  year         = {2024}
}

@article{DBLP:journals/corr/abs-2504-19413,
  author       = {Prateek Chhikara and
                  Dev Khant and
                  Saket Aryan and
                  Taranjeet Singh and
                  Deshraj Yadav},
  title        = {Mem0: Building Production-Ready {AI} Agents with Scalable Long-Term Memory},
  journal      = {CoRR},
  volume       = {abs/2504.19413},
  year         = {2025}
}

@inproceedings{DBLP:conf/icml/WangMFN25,
  author       = {Zora Zhiruo Wang and
                  Jiayuan Mao and
                  Daniel Fried and
                  Graham Neubig},
  title        = {Agent Workflow Memory},
  booktitle    = {{ICML}},
  pages        = {63897-63911},
  year         = {2025}
}

@article{DBLP:journals/corr/abs-2406-07496,
  author       = {Mert Y{\"{u}}ksekg{\"{o}}n{\"{u}}l and
                  Federico Bianchi and
                  Joseph Boen and
                  Sheng Liu and
                  Zhi Huang and
                  Carlos Guestrin and
                  James Zou},
  title        = {TextGrad: Automatic "Differentiation" via Text},
  journal      = {CoRR},
  volume       = {abs/2406.07496},
  year         = {2024}
}

@inproceedings{DBLP:conf/iclr/Guo0GLS0L0Y24,
  author       = {Qingyan Guo and
                  Rui Wang and
                  Junliang Guo and
                  Bei Li and
                  Kaitao Song and
                  Xu Tan and
                  Guoqing Liu and
                  Jiang Bian and
                  Yujiu Yang},
  title        = {Connecting Large Language Models with Evolutionary Algorithms Yields Powerful Prompt Optimizers},
  booktitle    = {{ICLR}},
  year         = {2024}
}

@inproceedings{DBLP:conf/icml/FernandoBMOR24,
  author       = {Chrisantha Fernando and
                  Dylan Banarse and
                  Henryk Michalewski and
                  Simon Osindero and
                  Tim Rockt{\"{a}}schel},
  title        = {Promptbreeder: Self-Referential Self-Improvement via Prompt Evolution},
  booktitle    = {{ICML}},
  pages        = {13481-13544},
  year         = {2024}
}

@inproceedings{DBLP:conf/iclr/YuanC000J24,
  author       = {Lifan Yuan and
                  Yangyi Chen and
                  Xingyao Wang and
                  Yi Fung and
                  Hao Peng and
                  Heng Ji},
  title        = {{CRAFT:} Customizing LLMs by Creating and Retrieving from Specialized Toolsets},
  booktitle    = {{ICLR}},
  year         = {2024}
}

@inproceedings{
lu2026dont,
title={Don't Just Fine-tune the Agent, Tune the Environment},
author={Siyuan Lu and Zechuan Wang and Hongxuan Zhang and Qintong Wu and Leilei Gan and Chenyi Zhuang and Jinjie Gu and Tao Lin},
booktitle={{ICLR}},
year={2026}
}

@inproceedings{DBLP:conf/icml/ZhugeWKFKS24,
  author       = {Mingchen Zhuge and
                  Wenyi Wang and
                  Louis Kirsch and
                  Francesco Faccio and
                  Dmitrii Khizbullin and
                  J{\"{u}}rgen Schmidhuber},
  title        = {GPTSwarm: Language Agents as Optimizable Graphs},
  booktitle    = {{ICML}},
  pages        = {62743-62767},
  year         = {2024}
}

@inproceedings{DBLP:conf/iclr/ZhangXYTCCZCHWZ25,
  author       = {Jiayi Zhang and
                  Jinyu Xiang and
                  Zhaoyang Yu and
                  Fengwei Teng and
                  Xionghui Chen and
                  Jiaqi Chen and
                  Mingchen Zhuge and
                  Xin Cheng and
                  Sirui Hong and
                  Jinlin Wang and
                  Bingnan Zheng and
                  Bang Liu and
                  Yuyu Luo and
                  Chenglin Wu},
  title        = {AFlow: Automating Agentic Workflow Generation},
  booktitle    = {{ICLR}},
  year         = {2025}
}

@inproceedings{DBLP:conf/icml/ZhangNF00025,
  author       = {Guibin Zhang and
                  Luyang Niu and
                  Junfeng Fang and
                  Kun Wang and
                  Lei Bai and
                  Xiang Wang},
  title        = {Multi-agent Architecture Search via Agentic Supernet},
  booktitle    = {{ICML}},
  pages        = {75834-75852},
  year         = {2025}
}

@inproceedings{DBLP:conf/iclr/ShangLZMLXL25,
  author       = {Yu Shang and
                  Yu Li and
                  Keyu Zhao and
                  Likai Ma and
                  Jiahe Liu and
                  Fengli Xu and
                  Yong Li},
  title        = {AgentSquare: Automatic {LLM} Agent Search in Modular Design Space},
  booktitle    = {{ICLR}},
  year         = {2025}
}

@article{DBLP:journals/corr/abs-2507-13334,
  author       = {Lingrui Mei and
                  Jiayu Yao and
                  Yuyao Ge and
                  Yiwei Wang and
                  Baolong Bi and
                  Yujun Cai and
                  Jiazhi Liu and
                  Mingyu Li and
                  Zhong{-}Zhi Li and
                  Duzhen Zhang and
                  Chenlin Zhou and
                  Jiayi Mao and
                  Tianze Xia and
                  Jiafeng Guo and
                  Shenghua Liu},
  title        = {A Survey of Context Engineering for Large Language Models},
  journal      = {CoRR},
  volume       = {abs/2507.13334},
  year         = {2025}
}

@inproceedings{DBLP:conf/acl/ChenLPSSZGY25,
  author       = {Mingda Chen and
                  Yang Li and
                  Karthik Padthe and
                  Rulin Shao and
                  Alicia Yi Sun and
                  Luke Zettlemoyer and
                  Gargi Ghosh and
                  Wen{-}tau Yih},
  title        = {Improving Factuality with Explicit Working Memory},
  booktitle    = {{ACL}},
  pages        = {11199--11213},
  year         = {2025}
}

@inproceedings{
zhang2025gmemory,
title={G-Memory: Tracing Hierarchical Memory for Multi-Agent Systems},
author={Guibin Zhang and Muxin Fu and Kun Wang and Guancheng Wan and Miao Yu and Shuicheng YAN},
booktitle={NeurIPS},
year={2025}
}

@inproceedings{DBLP:conf/icml/LeeCFCF24,
  author       = {Kuang{-}Huei Lee and
                  Xinyun Chen and
                  Hiroki Furuta and
                  John F. Canny and
                  Ian Fischer},
  title        = {A Human-Inspired Reading Agent with Gist Memory of Very Long Contexts},
  booktitle    = {{ICML}},
  pages        = {26396-26415},
  year         = {2024}
}

@inproceedings{DBLP:conf/acl/FeiNZH0D024,
  author       = {Weizhi Fei and
                  Xueyan Niu and
                  Pingyi Zhou and
                  Lu Hou and
                  Bo Bai and
                  Lei Deng and
                  Wei Han},
  title        = {Extending Context Window of Large Language Models via Semantic Compression},
  booktitle    = {{ACL} (Findings)},
  pages        = {5169--5181},
  year         = {2024}
}

@inproceedings{DBLP:conf/iclr/00010WWCW24,
  author       = {Tao Ge and
                  Jing Hu and
                  Lei Wang and
                  Xun Wang and
                  Si{-}Qing Chen and
                  Furu Wei},
  title        = {In-context Autoencoder for Context Compression in a Large Language Model},
  booktitle    = {{ICLR}},
  year         = {2024}
}

@inproceedings{
he2024camelot,
title={{CAMEL}oT: Towards Large Language Models with Training-Free Consolidated Associative Memory},
author={Zexue He and Leonid Karlinsky and Donghyun Kim and Julian McAuley and Dmitry Krotov and Rogerio Feris},
booktitle={LCFM@ICML},
year={2024}
}

@article{DBLP:journals/corr/abs-2509-02547,
  author       = {Guibin Zhang and
                  Hejia Geng and
                  Xiaohang Yu and
                  Zhenfei Yin and
                  Zaibin Zhang and
                  Zelin Tan and
                  Heng Zhou and
                  Zhongzhi Li and
                  Xiangyuan Xue and
                  Yijiang Li and
                  Yifan Zhou and
                  Yang Chen and
                  Chen Zhang and
                  Yutao Fan and
                  Zihu Wang and
                  Songtao Huang and
                  Yue Liao and
                  Hongru Wang and
                  Mengyue Yang and
                  Heng Ji and
                  Michael Littman and
                  Jun Wang and
                  Shuicheng Yan and
                  Philip Torr and
                  Lei Bai},
  title        = {The Landscape of Agentic Reinforcement Learning for LLMs: {A} Survey},
  journal      = {CoRR},
  volume       = {abs/2509.02547},
  year         = {2025}
}

@article{he2021automl,
  title={AutoML: A survey of the state-of-the-art},
  author={He, Xin and Zhao, Kaiyong and Chu, Xiaowen},
  journal={Knowl-based Syst},
  volume={212},
  pages={106622},
  year={2021}
}

@article{DBLP:journals/corr/abs-2510-11967,
  author       = {Weiwei Sun and
                  Miao Lu and
                  Zhan Ling and
                  Kang Liu and
                  Xuesong Yao and
                  Yiming Yang and
                  Jiecao Chen},
  title        = {Scaling Long-Horizon {LLM} Agent via Context-Folding},
  journal      = {CoRR},
  volume       = {abs/2510.11967},
  year         = {2025}
}

@article{DBLP:journals/corr/abs-2512-22733,
  author       = {Jiaqi Shao and
                  Yufeng Miao and
                  Wei Zhang and
                  Bing Luo},
  title        = {FoldAct: Efficient and Stable Context Folding for Long-Horizon Search Agents},
  journal      = {CoRR},
  volume       = {abs/2512.22733},
  year         = {2025}
}

@article{DBLP:journals/corr/abs-2603-01145,
  author       = {Yutao Yang and
                  Junsong Li and
                  Qianjun Pan and
                  Bihao Zhan and
                  Yuxuan Cai and
                  Lin Du and
                  Jie Zhou and
                  Kai Chen and
                  Qin Chen and
                  Xin Li and
                  Bo Zhang and
                  Liang He},
  title        = {AutoSkill: Experience-Driven Lifelong Learning via Skill Self-Evolution},
  journal      = {CoRR},
  volume       = {abs/2603.01145},
  year         = {2026}
}

@inproceedings{DBLP:conf/kdd/0002CWW00Z0DLP025,
  author       = {Chuan Qin and
                  Xin Chen and
                  Chengrui Wang and
                  Pengmin Wu and
                  Xi Chen and
                  Yihang Cheng and
                  Jingyi Zhao and
                  Meng Xiao and
                  Xiangchao Dong and
                  Qingqing Long and
                  Boya Pan and
                  Han Wu and
                  Chengzan Li and
                  Yuanchun Zhou and
                  Hui Xiong and
                  Hengshu Zhu},
  title        = {SciHorizon: Benchmarking AI-for-Science Readiness from Scientific Data to Large Language Models},
  booktitle    = {{KDD} {(2)}},
  pages        = {5754--5765},
  year         = {2025}
}

@inproceedings{DBLP:conf/iclr/Cai00CZ24,
  author       = {Tianle Cai and
                  Xuezhi Wang and
                  Tengyu Ma and
                  Xinyun Chen and
                  Denny Zhou},
  title        = {Large Language Models as Tool Makers},
  booktitle    = {{ICLR}},
  year         = {2024}
}


\appendix
\section{Theoretical Analysis}
\subsection{Notations}
Before presenting the theoretical analysis, we introduce the notations used throughout this section.
Let $C \in \mathcal{C}$ denote the task context accumulated by the agent, and let $\mathcal{A}$ denote an LLM-based agent.
Let $\mathcal{T} = \{t_1,\dots,t_K\}$ be the global tool set available to agent $\mathcal{A}$. We define the \emph{tool selection} problem as follows:

\begin{definition}
Given a context $C$, the tool selection problem is to select the most appropriate tool from the tool set $\mathcal{T}$ to solve the task.
Formally,
$
\hat{t} = \pi_\theta(C, \mathcal{T}),
$
where $\pi_\theta$ denotes the agent policy parameterized by $\theta$.
The agent selects tools based on an internal scoring function conditioned on a context representation $\phi(C)$.
\end{definition}

We define a \emph{base agent} as an agent that selects tools directly from the global tool set $\mathcal{T}$. For our hierarchical agent, we assume that the tool set is partitioned into $N$ disjoint subsets as
$
\mathcal{T} = \bigcup_{j=1}^N \mathcal{T}_j,
\mathcal{T}_i \cap \mathcal{T}_j = \emptyset,
$
where $|\mathcal{T}_j| = k_j$.
A \emph{Manager Agent} first selects a sub-agent indexed by $j(C)$, after which the corresponding sub-agent selects a tool from $\mathcal{T}_{j(C)}$. For a given context $C$, each tool $t$ is associated with a true utility
$
u(t \mid C) \in [0, 1],
$
and the optimal tool is defined as
$
t^*(C) = \arg\max_{t \in \mathcal{T}} u(t \mid C).
$

\subsection{Tool Selection Error Bound for Base Agent}

\begin{assumption}
For the base agent, we assume that its scoring function satisfies $s(t \mid C) = u(t \mid C) + \epsilon_t(C)$, where $\{\epsilon_t(C)\}_{t \in \mathcal{T}}$ are i.i.d. variables with
$
\epsilon_t(C) \sim \mathcal{N}(0, \sigma^2(\phi(C))).
$
The variance $\sigma^2(\phi(C))$ reflects uncertainty induced by the context representation.
The base agent selects a tool according to
$
\hat{t}_{\text{base}}(C) = \arg\max_{t \in \mathcal{T}} s(t \mid C).
$
\end{assumption}

For each context $C$, we define the minimum utility margin as
$
\Delta_{\text{base}}(C)
=
\min_{t \neq t^*(C)}
\left[u(t^*(C)\mid C) - u(t\mid C)\right],
$
where $\Delta_{\text{base}}(C) > 0$ almost surely.

\begin{lemma}
\label{lemma: upbound}
For any fixed context $C$, the base agent’s error probability satisfies
\begin{align}
    \Pr(\hat{t}_{\text{base}}(C) \neq t^*(C) \mid C)
\le (K-1)\exp\left(
-\frac{\Delta_{\text{base}}(C)^2}{4\sigma^2(\phi(C))}
\right).
\end{align}
\end{lemma}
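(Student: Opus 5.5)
The plan is to bound the error event by a union over pairwise comparisons and then apply a Gaussian tail bound to each comparison. Fix $C$ and write $t^*=t^*(C)$, $\sigma^2=\sigma^2(\phi(C))$, $\Delta=\Delta_{\text{base}}(C)$. The base agent errs only if some suboptimal tool scores at least as high as $t^*$:
\[
\{\hat{t}_{\text{base}}(C)\neq t^*\}\subseteq\bigcup_{t\neq t^*}\{s(t\mid C)\ge s(t^*\mid C)\}.
\]
Ties occur with probability zero under the Gaussian noise, so the inclusion loses nothing. The union has $K-1$ events, so
\[
\Pr(\hat{t}_{\text{base}}(C)\neq t^*\mid C)\le\sum_{t\neq t^*}\Pr\bigl(s(t\mid C)\ge s(t^*\mid C)\bigr).
\]

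Each term is handled by rewriting the pairwise event in terms of the noise. For fixed $t\neq t^*$, the event $s(t\mid C)\ge s(t^*\mid C)$ is equivalent to
\[
\epsilon_t(C)-\epsilon_{t^*}(C)\ge u(t^*\mid C)-u(t\mid C).
\]
The right-hand side is at least $\Delta$ by definition of the minimum margin. Since the $\epsilon$'s are i.i.d. $\mathcal{N}(0,\sigma^2)$, the difference $D_t=\epsilon_t-\epsilon_{t^*}$ is distributed as $\mathcal{N}(0,2\sigma^2)$. Therefore
\[
\Pr\bigl(s(t\mid C)\ge s(t^*\mid C)\bigr)\le\Pr(D_t\ge\Delta).
\]

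To finish, I would apply the standard Chernoff/Gaussian tail bound $\Pr(Z\ge x)\le\exp(-x^2/(2v))$ for $Z\sim\mathcal{N}(0,v)$ and $x\ge 0$. With $v=2\sigma^2$ this gives $\Pr(D_t\ge\Delta)\le\exp(-\Delta^2/(4\sigma^2))$. This bound is uniform in $t$, so summing over the $K-1$ suboptimal tools yields the claimed inequality.

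The only point that needs care is the tail bound itself, and it is also where the constant $4$ comes from. I would derive it via the MGF: for any $\lambda>0$,
\[
\Pr(Z\ge x)\le e^{-\lambda x}\,\mathbb{E}e^{\lambda Z}=e^{-\lambda x+\lambda^2 v/2},
\]
and choosing $\lambda=x/v$ gives $\exp(-x^2/(2v))$. The remaining checks are routine: $\Delta>0$ almost surely, and the i.i.d. assumption is what fixes the variance of the difference at $2\sigma^2$.
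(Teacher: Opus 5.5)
Your proposal is correct and takes the same route as the paper. You rewrite each pairwise mistake as the event that $\epsilon_t(C)-\epsilon_{t^*}(C)$ exceeds the margin, use that this difference is $\mathcal{N}(0,2\sigma^2(\phi(C)))$ to get the $\exp(-\Delta_{\text{base}}(C)^2/(4\sigma^2(\phi(C))))$ tail bound, and take a union bound over the $K-1$ wrong tools, which is what the paper does. Your version is slightly more careful: you bound the threshold $u(t^*\mid C)-u(t\mid C)$ from below by $\Delta_{\text{base}}(C)$, whereas the paper writes that step as an equality, and you derive the Gaussian tail constant explicitly via the MGF.
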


\begin{proof}
For any $t \neq t^*(C)$,
$
\Pr(s(t\mid C) > s(t^*(C)\mid C))
= \Pr(\epsilon_t(C) - \epsilon_{t^*}(C) > \Delta_{\text{base}}(C)).
$
Since $\epsilon_t(C) - \epsilon_{t^*}(C) \sim \mathcal{N}(0, 2\sigma^2(\phi(C)))$, we can get
\begin{align}
    \Pr(\epsilon_t(C) - \epsilon_{t^*}(C) > \Delta_{\text{base}}(C))
\le \exp\left(
-\frac{\Delta_{\text{base}}(C)^2}{4\sigma^2(\phi(C))}
\right).
\end{align}
Applying the union bound over all $K-1$ incorrect candidate tools yields the result.
\end{proof}

\subsection{Tool Selection Error Bound for Hierarchical Agent}
\label{appendix: error_bound}

For our hierarchical agent, the tool selection process can be seen as two tool selection problems. First, the Manager Agent selects a sub-agent. Second, the selected sub-agent chooses a tool from its local tool set. For the Manager Agent, we define the sub-agent utility as
$
U(j \mid C) = \max_{t \in \mathcal{T}_j} u(t \mid C),
$
and the optimal sub-agent as
$
j^*(C) = \arg\max_j U(j \mid C).
$
We define the Manager margin as
$
\Delta_M(C)
=
\min_{j \neq j^*(C)}
\left[ U(j^*(C)\mid C) - U(j \mid C) \right],
$
where $\Delta_M(C) > 0$ almost surely.

\begin{assumption}
\label{assumption: manager_score}
The Manager Agent’s scoring function is
$
S_M(j \mid C) = U(j \mid C) + \eta_j(C),
$
where $\{\eta_j(C)\}$ are i.i.d. random variables with
$
\eta_j(C) \sim \mathcal{N}(0, \sigma_M^2(\phi(C))),
\quad
\sigma_M^2(\phi(C)) \approx \sigma^2(\phi(C)).
$
The Manager Agent selects a sub-agent according to
$
\hat{j}(C) = \arg\max_j S_M(j \mid C).
$
\end{assumption}

\begin{assumption}
Each sub-agent operates on a localized context
$
C_j = h_j(C), \quad \phi_j(C) = \phi(C_j),
$
such that for the optimal sub-agent $j^*(C)$,
$
\sigma^2(\phi_{j^*}(C)) \le \sigma^2(\phi(C)).
$
\end{assumption}
This assumption reflects context specialization. We define the sub-agent margin as
$
\Delta_S(C)
=
\min_{t \in \mathcal{T}_{j^*}\setminus\{t^*\}}
\left[u(t^*(C)\mid C) - u(t\mid C)\right].
$

\begin{assumption}
\label{assumption: sub_score}
The sub-agent’s scoring function satisfies
$
S_S(t \mid C_j) = u(t \mid C) + \epsilon_t(C_j),
$
where $\{\epsilon_t(C_j)\}$ are i.i.d. random variables with
$
\epsilon_t(C_j) \sim \mathcal{N}(0, \sigma_S^2(\phi(C_j))).
$
The sub-agent selects a tool according to
$
\hat{t}_{\text{hier}}(C)
=
\arg\max_{t \in \mathcal{T}_{j^*}}
S_S(t \mid C_j).
$
\end{assumption}

\begin{lemma}
The margin satisfies
\[
\Delta_{\text{base}}(C) = \min\{\Delta_M(C), \Delta_S(C)\}.
\]
\end{lemma}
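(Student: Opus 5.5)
The plan is to rewrite both sides in terms of the largest utility among the non-optimal tools, and then split the set of non-optimal tools along the partition $\mathcal{T}=\bigcup_{j}\mathcal{T}_j$. Throughout, $C$ is fixed and I drop it from the notation. I adopt the convention that a minimum over an empty set equals $+\infty$. This covers the degenerate cases $k_{j^*}=1$, where $\Delta_S=+\infty$, and $N=1$, where $\Delta_M=+\infty$, so the identity still makes sense in those cases.

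\textbf{Step 1: $t^*$ lies in $\mathcal{T}_{j^*}$ and $U(j^*)=u(t^*)$.} Since the blocks cover $\mathcal{T}$, we have $\max_j U(j)=\max_j\max_{t\in\mathcal{T}_j}u(t)=\max_{t\in\mathcal{T}}u(t)=u(t^*)$. Because $\Delta_{\text{base}}>0$, the maximiser $t^*$ is unique. The blocks are disjoint, so $t^*$ belongs to exactly one block $\mathcal{T}_{j_0}$, and $U(j_0)=u(t^*)$. For any $j\neq j_0$, every tool in $\mathcal{T}_j$ differs from $t^*$, hence $U(j)<u(t^*)$. Therefore $j^*=j_0$ is the unique maximiser of $U$, and $U(j^*)=u(t^*)$. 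This also confirms that $\Delta_M>0$ is consistent with the assumptions.

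\textbf{Step 2: split the runner-up utility.} From the definition of $\Delta_{\text{base}}$,
\[
\Delta_{\text{base}} = u(t^*) - \max_{t\neq t^*} u(t).
\]
Because the blocks partition $\mathcal{T}$, the set $\mathcal{T}\setminus\{t^*\}$ is the disjoint union of $\mathcal{T}_{j^*}\setminus\{t^*\}$ and $\bigcup_{j\neq j^*}\mathcal{T}_j$. Hence
\[
\max_{t\neq t^*}u(t)=\max\Big\{\max_{t\in\mathcal{T}_{j^*}\setminus\{t^*\}}u(t),\ \max_{j\neq j^*}U(j)\Big\}.
\]

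\textbf{Step 3: conclude.} For reals $a,b$ we have $u(t^*)-\max\{a,b\}=\min\{u(t^*)-a,\,u(t^*)-b\}$. The first term is $\min_{t\in\mathcal{T}_{j^*}\setminus\{t^*\}}[u(t^*)-u(t)]=\Delta_S$. Using $U(j^*)=u(t^*)$ from Step 1, the second term is $\min_{j\neq j^*}[U(j^*)-U(j)]=\Delta_M$. This gives $\Delta_{\text{base}}=\min\{\Delta_M,\Delta_S\}$.

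The only real obstacle is Step 1. There one must check that the Manager's optimal block is exactly the block containing $t^*$, and that its value equals $u(t^*)$. This requires uniqueness of $t^*$, which follows from $\Delta_{\text{base}}>0$, together with disjointness of the partition; without these, $\Delta_M$ could not be rewritten in terms of $u(t^*)$. The remaining steps are bookkeeping about maxima over a partitioned set.
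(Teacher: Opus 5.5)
Your proof is correct and takes the same route as the paper: split the non-optimal tools into those in the Manager's optimal block $\mathcal{T}_{j^*}$, which give $\Delta_S(C)$, and those in other blocks, which give $\Delta_M(C)$, then take the minimum. Your Step~1 shows $t^*(C)\in\mathcal{T}_{j^*}$ and $U(j^*\mid C)=u(t^*(C)\mid C)$ via uniqueness and disjointness, making explicit what the paper's one-line argument uses tacitly, and your $+\infty$ convention handles the degenerate cases.
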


\begin{proof}
The second-best global tool must either belong to the same subset $\mathcal{T}_{j^*}$, yielding margin $\Delta_S(C)$, or belong to a different subset $\mathcal{T}_j$, yielding margin $\Delta_M(C)$.
Taking the minimum over all $t \neq t^*(C)$ yields the result.
\end{proof}

Based on the Assumption~2, Assumption~4 and Lemma~\ref{lemma: upbound}, we obtain the following corollary.

\begin{corollary}
\label{corollary:pro}
For the tool selection problem, the error probabilities of the Manager Agent and the sub-agent satisfy
\footnotesize{
\begin{align}
    \Pr(\hat{j}(C) \neq j^*(C) \mid C)
&\le
(N-1)\exp\left(
-\frac{\Delta_M(C)^2}{4\sigma_M^2(\phi(C))}
\right),\\
\Pr(\hat{t}_{\text{hier}}(C) \neq t^*(C) \mid \hat{j}=j^*(C), C_j)
&\le
(k_{j^*}-1)\exp\left(
-\frac{\Delta_S(C)^2}{4\sigma_S^2(\phi(C_j))}
\right).
\end{align}}
\end{corollary}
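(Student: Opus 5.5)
The corollary is Lemma~\ref{lemma: upbound} applied twice, once to each selection stage of the hierarchical agent. That lemma's proof uses only three things: additive i.i.d. Gaussian noise on the scores, an argmax selection rule, and a positive minimum margin between the optimum and every competitor. So the plan is to verify these three ingredients separately for the Manager Agent and for the sub-agent, and then rerun the lemma's argument: a Gaussian tail bound on pairwise differences followed by a union bound.

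\textbf{Manager stage.} Relabel the candidate set $\mathcal{T}$ as the index set $\{1,\dots,N\}$, the true utility $u(\cdot\mid C)$ as the sub-agent utility $U(\cdot\mid C)$, and the score $s$ as $S_M$.
\begin{itemize}
\item By Assumption~\ref{assumption: manager_score}, $S_M(j\mid C)=U(j\mid C)+\eta_j(C)$ with $\eta_j$ i.i.d.\ $\mathcal{N}(0,\sigma_M^2(\phi(C)))$, and $\hat{j}(C)$ is the argmax.
\item The optimum is $j^*(C)$ and the margin is $\Delta_M(C)>0$.
\item For each $j\neq j^*$, the event $S_M(j\mid C)>S_M(j^*\mid C)$ requires $\eta_j-\eta_{j^*}\ge \Delta_M(C)$.
\item The difference $\eta_j-\eta_{j^*}$ is $\mathcal{N}(0,2\sigma_M^2)$, so this event has probability at most $\exp(-\Delta_M^2/(4\sigma_M^2))$ by the standard Chernoff bound $\Pr(X>a)\le e^{-a^2/(2v)}$ for $X\sim\mathcal{N}(0,v)$.
\item A union bound over the $N-1$ wrong indices gives the first inequality.
\end{itemize}

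\textbf{Sub-agent stage.} Condition on the event $\hat{j}=j^*(C)$.
\begin{itemize}
\item Then $t^*(C)\in\mathcal{T}_{j^*}$, since $U(j^*\mid C)=\max_{t\in\mathcal{T}_{j^*}}u(t\mid C)$ and $j^*$ maximizes $U$, so $t^*$ lies in the winning subset.
\item The candidate set is therefore $\mathcal{T}_{j^*}$, of size $k_{j^*}$, with optimum $t^*(C)$ and margin $\Delta_S(C)$.
\item By Assumption~\ref{assumption: sub_score}, the scores are $u(t\mid C)+\epsilon_t(C_j)$ with i.i.d.\ $\mathcal{N}(0,\sigma_S^2(\phi(C_j)))$ noise, and selection is by argmax.
\item The same pairwise Gaussian tail bound, unioned over the $k_{j^*}-1$ competitors, gives the second inequality.
\end{itemize}

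\textbf{Main obstacle.} The delicate point is the sub-agent stage, for two reasons.
\begin{itemize}
\item \emph{Positive margin.} We need $\Delta_S(C)>0$, so that the tail bound is nontrivial. This follows because $t^*$ is the unique global maximizer almost surely, as $\Delta_{\text{base}}>0$, and $\Delta_S\ge\Delta_{\text{base}}$ by the preceding margin lemma. If $|\mathcal{T}_{j^*}|=1$, the bound is trivially $0$.
\item \emph{Independence after conditioning.} The sub-agent noise $\{\epsilon_t(C_j)\}$ must remain i.i.d.\ Gaussian after conditioning on $\hat{j}=j^*$. I would handle this by assuming, implicitly in Assumptions~\ref{assumption: manager_score} and \ref{assumption: sub_score}, that the manager noise $\eta$ and the sub-agent noise $\epsilon$ are independent. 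Conditioning on an event measurable with respect to $\eta$ then leaves the law of $\epsilon$ unchanged, and the lemma applies verbatim.
\end{itemize}
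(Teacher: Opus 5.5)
Your proposal is correct and follows the paper's route: the corollary comes from rerunning Lemma~\ref{lemma: upbound} (a pairwise Gaussian tail bound plus a union bound) twice, once under the Manager scoring assumption with margin $\Delta_M(C)$ and $N-1$ competitors, and once under the sub-agent scoring assumption with margin $\Delta_S(C)$ and $k_{j^*}-1$ competitors. Your additional checks are details the paper leaves implicit, and you handle them correctly: that $t^*(C)\in\mathcal{T}_{j^*}$, that $\Delta_S(C)\ge\Delta_{\text{base}}(C)>0$, and that the sub-agent noise must be independent of the Manager noise for conditioning on $\hat{j}=j^*(C)$ to be harmless.
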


\begin{theorem}
\label{theorem1}
Under Assumptions 1–5, for the tool selection problem under the same context $C$, the hierarchical agent admits a strictly smaller upper bound on the error probability than the base agent.
\end{theorem}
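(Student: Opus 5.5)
The plan is to bound the hierarchical error by a union bound over the two stages, write both exponential terms in a common form, and compare the result with the base bound of Lemma~\ref{lemma: upbound} through the margin decomposition $\Delta_{\text{base}}(C)=\min\{\Delta_M(C),\Delta_S(C)\}$.

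\textbf{Step 1: two-stage decomposition.} The hierarchical agent can err only if the Manager picks the wrong sub-agent, or if it picks the right one and that sub-agent picks the wrong tool. Hence
\[
\Pr(\hat{t}_{\text{hier}}\neq t^*\mid C)\le \Pr(\hat j\neq j^*\mid C)+\Pr(\hat{t}_{\text{hier}}\neq t^*\mid \hat j=j^*, C_j).
\]
Corollary~\ref{corollary:pro} bounds both terms. The total is
\[
B_{\text{hier}}=(N-1)e^{-\Delta_M^2/(4\sigma_M^2)}+(k_{j^*}-1)e^{-\Delta_S^2/(4\sigma_S^2(\phi(C_j)))}.
\]

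\textbf{Step 2: pass to a common exponent.} Set $\sigma_M^2=\sigma^2$, as Assumption~\ref{assumption: manager_score} allows. I will also take $\sigma_S^2(\phi(C_j))\le\sigma^2(\phi(C))$, from the context-specialization assumption. This identifies the sub-agent noise with $\sigma^2(\phi_{j^*})$, and I will state that identification explicitly. Since $\Delta_M,\Delta_S\ge\Delta_{\text{base}}$, each exponential is at most $e^{-\Delta_{\text{base}}^2/(4\sigma^2)}$. This gives
\[
B_{\text{hier}}\le (N-1+k_{j^*}-1)\,e^{-\Delta_{\text{base}}^2/(4\sigma^2)}.
\]

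\textbf{Step 3: compare the prefactors.} We have $K=\sum_j k_j\ge k_{j^*}+(N-1)$, because each of the other $N-1$ subsets is nonempty. So $N+k_{j^*}-2\le K-1$, which shows $B_{\text{hier}}\le B_{\text{base}}$.

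\textbf{Step 4: strictness, the main obstacle.} The inequality in Step~3 is strict whenever some non-optimal subset has at least two tools, i.e.\ $K>N-1+k_{j^*}$. The inequality in Step~2 is also strict in one of its two terms. By the margin lemma, $\Delta_{\text{base}}$ equals only the smaller of $\Delta_M$ and $\Delta_S$. If $\Delta_M\ne\Delta_S$, one exponent is strictly larger, and that term is strictly smaller because its prefactor is positive when $N\ge2$ and $k_{j^*}\ge2$.

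The degenerate case is $\Delta_M=\Delta_S$ together with every subset a singleton or $N=1$. There hierarchy gives no improvement and the structure is trivial, so I would exclude it by assuming a nontrivial partition: $N\ge2$ and at least one $k_j\ge2$. Under that assumption the count argument alone gives $N+k_{j^*}-2<K-1$, and strictness follows regardless of the margins.

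If needed, the strict inequality $\sigma_S^2<\sigma^2$ from specialization provides an additional independent source of strictness. I would present the count argument as the primary route, since it does not depend on the approximation $\sigma_M\approx\sigma$.
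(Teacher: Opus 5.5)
Your overall route is the paper's: a two-stage union bound through the corollary, then a comparison with the base-agent bound using $\Delta_{\text{base}}(C)=\min\{\Delta_M(C),\Delta_S(C)\}$, $\sigma_S^2\le\sigma^2$ and $\sigma_M^2\approx\sigma^2$. Your counting step $K\ge k_{j^*}+N-1$ is actually what makes the prefactor comparison work. The paper only cites $k_{j^*}<K$ and $N<K$, and these alone do not control the sum $(N-1)+(k_{j^*}-1)$.

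The gap is in Step 4. Since $K-1-(N+k_{j^*}-2)=\sum_{j\neq j^*}(k_j-1)$, the count is strict exactly when some subset \emph{other than} $\mathcal{T}_{j^*}$ has at least two tools, as you say at the start of Step 4. Your final hypothesis ($N\ge 2$ and some $k_j\ge 2$) still allows $\mathcal{T}_{j^*}$ to be the only non-singleton subset. Then the count is tight, strictness must come from the exponents, and those can be tight too. Concretely, take $\mathcal{T}_1=\{t^*,t_a\}$, $\mathcal{T}_2=\{t_b\}$, $u(t^*\mid C)=1$, $u(t_a\mid C)=u(t_b\mid C)=1/2$ and $\sigma_M=\sigma_S=\sigma$. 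Then $j^*=1$ and $\Delta_M=\Delta_S=\Delta_{\text{base}}=1/2$, so both bounds equal $2e^{-1/(16\sigma^2)}$. Hence ``strictness follows regardless of the margins'' is false under your assumption. This example also satisfies $k_{j^*}<K$ and $N<K$, so the theorem as stated, and the paper's one-line conclusion, need an extra hypothesis as well.

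To repair it, assume at least two subsets contain two or more tools. This condition does not depend on $C$, and it guarantees a non-optimal non-singleton subset, so your count argument gives strictness for every context. Alternatives are $\Delta_M\neq\Delta_S$ with $N,k_{j^*}\ge 2$, or a strict variance gap with $k_{j^*}\ge 2$.

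Two smaller points. First, the specialization assumption only gives $\sigma^2(\phi_{j^*}(C))\le\sigma^2(\phi(C))$, so a strict $\sigma_S^2<\sigma^2$ is an added assumption, not a consequence. Second, the count route does not remove the dependence on $\sigma_M\approx\sigma$: Step 2's bound on the Manager term still needs $\sigma_M^2\le\sigma^2$. A strict count gap at most leaves slack to absorb a small excess of $\sigma_M$ over $\sigma$.
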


\begin{proof}
The hierarchical agent makes an error if either:
(i) the Manager Agent selects a sub-agent $j \neq j^*(C)$; or
(ii) the Manager Agent selects $j^*(C)$, but the corresponding sub-agent selects a tool $t \neq t^*(C)$. By Corollary~\ref{corollary:pro}, the context-conditioned error probability of the hierarchical agent satisfies

\scriptsize{
\begin{align}
\Pr(\text{error} \mid C)
\le\;&
(N-1)\exp\left(
-\frac{\Delta_M(C)^2}{4\sigma_M^2(\phi(C))}
\right) + (k_{j^*}-1)\exp\left(
-\frac{\Delta_S(C)^2}{4\sigma_S^2(\phi(C_j))}
\right).
\end{align}}\\
\normalsize{
Moreover, since $k_{j^*} < K$, $N < K$,
$
\sigma_S^2(\phi(C_j)) \le \sigma^2(\phi(C))$,
$
\sigma_{M}^2(\phi(C)) \approx \sigma^2(\phi(C))$, and
$
\Delta_{\text{base}}(C) = \min\{\Delta_M(C), \Delta_S(C)\},
$
the above upper bound is strictly smaller than the corresponding error bound of the base agent.}
\end{proof}

\eat{
Theorem~\ref{theorem1} shows that hierarchical tool selection yields a strictly tighter error bound than flat selection under the same context. This improvement stems from decomposing a large decision space into smaller, structured sub-problems, which reduces both action space size and uncertainty in each decision stage. As a result, the hierarchical agent achieves more robust tool selection by increasing effective margins and lowering variance in the scoring process.
}

\subsection{Information-Theoretic Interpretation of the Optimization Objective}
\label{appendix: objective}

Let $Z = g(C)$ denote the compressed context used by the Manager Agent, and let $Y$ denote the task outcome.
For analytical convenience, we consider the following relaxed optimization objective for the Manager Agent as
$
J(g) = \mathbb{E}[R(Y)] - \gamma \cdot \mathbb{E}[|Z|],
$
where $R(\cdot)$ denotes the task reward.

\begin{assumption}
The achievable task performance depends monotonically on the mutual information between $Z$ and $Y$ as
$
\mathbb{E}[R(Y)] = f(I(Z;Y)),  f'(\cdot) > 0.
$
\end{assumption}

As the mutual information $I(Z;Y)$ increases, the compressed context preserves more task-relevant information, enabling more accurate downstream decision-making.
Such monotonic relationships between mutual information and task performance are widely adopted in representation learning and decision-making theory~\cite{DBLP:conf/iclr/AlemiFD017}.

\begin{assumption}
The expected token cost of maintaining context $Z$ is proportional to its entropy as
$
\mathbb{E}[|Z|] = \kappa H(Z)$, where $\kappa > 0$.

\end{assumption}

In LLM-based systems, the entropy of a context correlates with its expected token usage.
Modeling context cost via entropy therefore provides a principled abstraction of computational overhead.

\begin{theorem}
\label{theorem2}
Under Assumptions 5-6, optimizing the Manager Agent’s objective is equivalent to solving the Information Bottleneck problem as
$
\min_{p(z|c)} I(Z;C) - \lambda I(Z;Y)$, where $\lambda > 0.
$
\end{theorem}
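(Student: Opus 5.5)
The plan is to substitute the two assumptions into the relaxed objective $J(g)=\mathbb{E}[R(Y)]-\gamma\,\mathbb{E}[|Z|]$. This rewrites it as
\[
J(g)=f\bigl(I(Z;Y)\bigr)-\gamma\kappa H(Z).
\]
The argument then has two steps: replace $H(Z)$ by $I(Z;C)$, and then linearize $f$ so that maximizing $J$ becomes minimizing an IB Lagrangian.

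For the first step, $Z=g(C)$ is produced by the Manager's summarization, so I will treat $Z$ as generated from $C$ through the encoder $p(z\mid c)$. If $g$ is a deterministic function of $C$, then $H(Z\mid C)=0$ and hence $H(Z)=I(Z;C)$ exactly. If the summarizer is stochastic (sampled from the LLM), I will write $H(Z)=I(Z;C)+H(Z\mid C)$. I then argue either that $H(Z\mid C)$ is fixed by the decoding temperature and independent of which summary policy is chosen, or that it only adds a nonnegative term whose minimization pushes toward deterministic encoders, so that at the optimum it vanishes. In both cases the cost term becomes $\gamma\kappa I(Z;C)$ up to a policy-independent constant. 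I will also note that the Markov chain $Y - C - Z$ holds, because the outcome depends on the context only through what the Manager retains. This makes the optimization over $p(z\mid c)$ the standard IB setting.

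The second step handles the nonlinearity of $f$. Since $f'>0$, I will work at a stationary point, or equivalently use a first-order (supporting-line) argument. At an optimal encoder $p^*$, with $I^*=I(Z^*;Y)$, the first-order optimality conditions for $\max_p\, f(I(Z;Y))-\gamma\kappa I(Z;C)$ coincide with those of
\[
\max_p\; f'(I^*)\,I(Z;Y)-\gamma\kappa I(Z;C).
\]
Dividing by $\gamma\kappa>0$ and negating gives
\[
\min_{p(z\mid c)}\; I(Z;C)-\lambda I(Z;Y),\qquad \lambda=\frac{f'(I^*)}{\gamma\kappa}>0 .
\]
When $f$ is concave I can state this as a genuine equivalence rather than just matching stationarity conditions. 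By the standard IB argument, the set of achievable pairs $(I(Z;C),I(Z;Y))$ has a concave upper boundary. Maximizing a function that is increasing in $I(Z;Y)$ and decreasing in $I(Z;C)$ therefore selects a point on that frontier, and every frontier point is the solution of the IB Lagrangian for some $\lambda>0$ (the slope of the frontier there). In the simplest case where $f$ is linear, $f(x)=ax+b$, the equivalence is immediate with $\lambda=a/(\gamma\kappa)$.

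The main obstacle is this last step. The word ``equivalent'' only holds in the sense that optimizers of $J$ are IB optimizers for a suitable $\lambda$, and that $\lambda$ depends on the optimum through $f'(I^*)$. I would handle this by the frontier/supporting-hyperplane argument, and I would state explicitly the assumptions that make it rigorous: concavity or linearity of $f$, and treating $Z$ as a deterministic or policy-independent-noise encoding of $C$.
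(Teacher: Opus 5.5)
Apart from how you handle $f$, your argument is the paper's. You substitute the two assumptions, use that $Z=g(C)$ is deterministic so $H(Z)=I(Z;C)$, and rescale to read off $\lambda$. The paper passes from $f(I(Z;Y))-\gamma\kappa H(Z)$ to $\alpha I(Z;Y)-\beta H(Z)$ citing only the monotonicity of $f$, which is justified only when $f$ is affine. You are right that this step needs more. Your affine case, with deterministic $g$ and $\lambda=a/(\gamma\kappa)$, is exactly what the paper's proof establishes.

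The gap is in your extension beyond that case.

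\emph{The frontier argument.} The concave upper boundary of the IB information plane comes from time-sharing, i.e.\ randomized encoders $Z=(Q,Z_Q)$ with an independent switch $Q$. For these, $H(Z\mid C)=H(Q)+\sum_q p(q)H(Z_q\mid C)$. Such encoders are excluded both by determinism and by your fixed-noise assumption, and those are exactly the conditions under which you may replace $H(Z)$ by $I(Z;C)$. Without convexity of the achievable region, a maximizer of $f(I(Z;Y))-\gamma\kappa I(Z;C)$ can be an unsupported Pareto point that minimizes no IB Lagrangian, even for concave $f$.

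For example, suppose the admissible deterministic summarizers achieve only the pairs $(0,0)$, $(1,0.4)$, $(2,1)$ in the $(I(Z;C),I(Z;Y))$ plane. Take $\gamma\kappa=1/2$ and a concave increasing $f$ with $f(0)=0$, $f(0.4)=0.8$, $f(1)=1$. Then $J$ takes the values $0$, $0.3$, $0$, so it is uniquely maximized at $(1,0.4)$. Under $I(Z;C)-\lambda I(Z;Y)$, however, that point is no worse than $(0,0)$ only if $\lambda\ge 5/2$, and no worse than $(2,1)$ only if $\lambda\le 5/3$. So no $\lambda>0$ selects it. Concavity of $f$ only gives the converse direction once a supporting line exists; it does not supply one.

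\emph{The stochastic branch.} This fails for a related reason. $J$-optimality of $p^*$ compares it with competitors through $H(Z)=I(Z;C)+H(Z\mid C)$. That comparison says nothing against a stochastic competitor whose advantage in $I(Z;C)-\lambda I(Z;Y)$ is cancelled only by its own $H(Z\mid C)$ term. This holds even granting $H(Z\mid C)=0$ at $p^*$. With affine $f$, an $H(Z)$ penalty over all $p(z\mid c)$ is the deterministic information bottleneck, whose solutions generally differ from the (soft) IB solutions.

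\emph{The first-order argument.} This yields only stationarity. It also presupposes a continuous encoder space, which again conflicts with determinism.

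You should therefore either restrict the claim to what the paper's proof actually supports, or weaken it. The restricted version is: affine $f$, deterministic $g$, and the IB minimized over the same encoder class. The weaker version states the non-affine case only as a stationarity condition, with the optimum-dependent $\lambda=f'(I^*)/(\gamma\kappa)$.
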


\begin{proof}
By Assumption 6 and the monotonicity of $f$, maximizing $J(g)$ is equivalent to maximizing
$
\alpha I(Z;Y) - \beta H(Z), \quad \alpha,\beta > 0.
$
Since $Z = g(C)$ is a deterministic function of $C$,
$
I(Z;C) = H(Z).
$
Thus, the objective becomes
$
\alpha I(Z;Y) - \beta I(Z;C).
$
Letting $\lambda = \alpha / \beta$ yields the Information Bottleneck objective.
\end{proof}

\eat{
Theorem~\ref{theorem2} shows that the Manager Agent implicitly performs Information Bottleneck optimization by compressing the global context to discard task-irrelevant information while preserving information that is predictive of task outcomes. This result provides an information-theoretic explanation for the design of the Manager Agent’s optimization objective, indicating that effective context compression emerges naturally from optimizing decision quality under limited representational capacity rather than from heuristic summarization alone.}

Based on the solution of the Information Bottleneck problem, we obtain the following corollary.

\begin{corollary}
The optimal compression distribution satisfies
\[
p(z|c)
\propto
p(z)\exp\left(
-\lambda D_{\mathrm{KL}}(p(y|c)\,\|\,p(y|z))
\right).
\]
\end{corollary}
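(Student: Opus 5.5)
The plan is to derive the corollary from the standard self-consistent equations of the Information Bottleneck (Tishby et al.), applied to the problem in Theorem~\ref{theorem2}. Work with the variational problem over stochastic encoders $p(z\mid c)$, using the Markov chain $Y \leftrightarrow C \leftrightarrow Z$: the compressed context is produced from $C$ alone, so $p(y\mid z)=\sum_c p(y\mid c)\,p(c\mid z)$. Note that the deterministic relaxation $Z=g(C)$ used in the proof of Theorem~\ref{theorem2} has to be dropped here. The corollary is about the optimum over the full family $p(z\mid c)$, and that family contains deterministic maps as a special case, so the minimization in Theorem~\ref{theorem2} is already stated over $p(z\mid c)$.

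The steps, in order, are as follows.
\begin{enumerate}
\item Write the Lagrangian
\begin{equation*}
\mathcal{L}[p(z\mid c)] = I(Z;C) - \lambda I(Z;Y) - \sum_c \mu(c)\sum_z p(z\mid c),
\end{equation*}
where the multipliers $\mu(c)$ enforce normalization.
\item Expand the mutual informations as
\begin{equation*}
I(Z;C)=\sum_{c,z} p(c)p(z\mid c)\log\frac{p(z\mid c)}{p(z)}, \qquad I(Z;Y)=\sum_{y,z}p(z)p(y\mid z)\log\frac{p(y\mid z)}{p(y)},
\end{equation*}
with $p(z)=\sum_c p(c)p(z\mid c)$ and $p(y\mid z)$ given by the Markov chain.
\item Differentiate with respect to $p(z\mid c)$. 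The derivative of $I(Z;C)$ gives $p(c)\bigl[\log\frac{p(z\mid c)}{p(z)}\bigr]$ plus a constant; the terms from $\partial p(z)/\partial p(z\mid c)$ cancel. The derivative of $I(Z;Y)$ gives $p(c)\sum_y p(y\mid c)\log\frac{p(y\mid z)}{p(y)}$, again after cancellation of the normalization-type terms.
\item Add and subtract $\sum_y p(y\mid c)\log p(y\mid c)$. The $z$-dependent part of the $I(Z;Y)$ derivative then becomes $-D_{\mathrm{KL}}(p(y\mid c)\,\|\,p(y\mid z))$, and the remainder depends only on $c$.
\item Set the derivative to zero and exponentiate. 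All $c$-only terms are absorbed into a normalizer $Z(c,\lambda)$, which yields $p(z\mid c)\propto p(z)\exp(-\lambda D_{\mathrm{KL}}(p(y\mid c)\,\|\,p(y\mid z)))$.
\end{enumerate}

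The main obstacle is the bookkeeping in step 3: showing that the terms coming from the dependence of $p(z)$ and $p(y\mid z)$ on $p(z\mid c)$ cancel or reduce to functions of $c$ alone. I would handle this by noting two identities. First, $\sum_c p(c)\,\partial_{p(z\mid c)}(\cdot)$ of $\sum_z p(z)$ vanishes by normalization. Second, $\sum_y p(y\mid z)=1$ kills the derivative of the $\log p(y\mid z)$ factor, exactly as in Tishby et al.

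Finally, I would remark that the result is a stationarity condition, to be solved jointly with the consistency equations for $p(z)$ and $p(y\mid z)$, rather than a closed form. It holds for every $\lambda>0$ supplied by Theorem~\ref{theorem2}.
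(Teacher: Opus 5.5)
Your proposal is correct and takes essentially the same route as the paper: the paper gives no separate argument and simply invokes the standard self-consistent solution of the Information Bottleneck (Tishby et al.), and your Lagrangian calculation reproduces that solution. Three of your caveats are assumptions the paper leaves implicit when it quotes that solution: you work with stochastic encoders $p(z\mid c)$ rather than the deterministic $Z=g(C)$ used in the proof of Theorem~\ref{theorem2}, you assume the chain $Y \leftrightarrow C \leftrightarrow Z$ with $p(c,y)$ fixed, and you read the result as a stationarity condition rather than a closed form.
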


\section{Benchmarks}
\label{appendix: bench}
\begin{table}[h]
\centering
\caption{Statistics of evaluation benchmarks.}
\label{tab:benchmark_stats}
\begin{tabular}{c c}
\hline
\textbf{Benchmark} & \textbf{\# Samples} \\
\hline
DABench            & 257 \\
DA-Code            & 500 \\
ScienceAgentBench  & 102 \\
MLE-Dojo           & 10  \\
\hline
\end{tabular}
\end{table}

\begin{itemize}
\item \textbf{DABench}~\cite{DBLP:conf/icml/HuZWCM0WSXZCY0K24}.  
DABench (also referred to as InfiAgent-DABench) is a benchmark specifically designed to evaluate LLM-based agents on end-to-end data analysis tasks. It consists of 257 data analysis questions derived from 52 real-world CSV files, where each task requires complex reasoning over tabular data and interaction with an executable environment. To enable automatic evaluation of open-ended outputs, DABench adopts a format-prompting strategy that standardizes results into a closed form.

\item \textbf{DA-Code}~\cite{DBLP:conf/emnlp/HuangLYZLWHHLZL24}.  
DA-Code is a code generation benchmark tailored to LLM-based agents, targeting realistic data science workflows. It comprises 500 tasks collected from diverse real-world data sources, covering multiple stages of the data science pipeline, including data wrangling, exploratory data analysis, and machine learning. All tasks are situated in an executable environment that supports interactive agent execution, and evaluation focuses on whether the generated code correctly fulfills the specified analysis objectives.

\item \textbf{ScienceAgentBench}~\cite{DBLP:conf/iclr/ChenCNZWYLLWLDX25}.  
ScienceAgentBench evaluates LLM-based agents in the context of data-driven scientific discovery. Unlike general-purpose coding or data analysis benchmarks, it extracts 102 tasks from 44 peer-reviewed publications spanning four scientific disciplines (Bioinformatics, Computational Chemistry, Geographical Information Science, and Psychology \& Cognitive Neuroscience). Each task is validated by domain experts and implemented as a self-contained Python program, covering key scientific activities such as data processing, model development, data analysis, and visualization within authentic research workflows.

\item \textbf{MLE-Dojo}~\cite{qiang2025mledojo}.  
MLE-Dojo introduces an interactive, Gym-style environment for training and benchmarking autonomous LLM agents on realistic machine learning engineering (MLE) workflows. Built upon a curated collection of over 200 real-world Kaggle challenges, MLE-Dojo includes tasks such as data preprocessing, model architecture design, hyperparameter tuning, and iterative debugging. Its executable, multi-step framework enables structured experimentation with feedback loops and supports rigorous evaluation under practical engineering settings. Due to the high computational cost of full machine learning pipelines, we randomly sample 10 tasks from MLE-Dojo for evaluation.

\end{itemize}

\section{Baselines}
\label{appendix: baselines}
\begin{itemize}
\item \textbf{AutoGen}~\cite{wu2024autogen}.
AutoGen is an open-source multi-agent framework that enables the composition of multiple LLM-based agents for collaborative problem solving. Agents within AutoGen communicate through natural language or executable code, enabling flexible interaction patterns and tool usage. The framework serves as a general infrastructure for building LLM applications of varying complexity. Following the implementation of DSBench~\cite{DBLP:conf/iclr/JingHWYYM0DY25}, we construct a data science agent based on AutoGen as one of our baselines.

\item \textbf{ReAct}~\cite{DBLP:conf/iclr/YaoZYDSN023}.
ReAct introduces a prompting paradigm that tightly couples reasoning and acting within LLMs. Unlike conventional chain-of-thought methods that focus solely on reasoning traces, ReAct interleaves intermediate reasoning steps with task-oriented actions, enabling dynamic planning, external information querying, and adaptive strategy updates. In our experiments, we design specific prompts to adapt ReAct to data science scenarios.

\item \textbf{Code Interpreter}~\cite{openai2023}.
The Code Interpreter, developed by OpenAI, enables LLMs to generate and execute Python code within a secure, sandboxed environment. This capability allows models to perform complex computations, data processing, format conversion, and visualization through iterative code execution. By incorporating real-time execution feedback, the Code Interpreter enhances structured analytical reasoning beyond pure natural language generation.

\item \textbf{LAMBDA}~\cite{Sun17072025}.
LAMBDA is an open-source multi-agent data analysis system for solving data-centric analytical tasks via natural language interaction, without requiring explicit programming from users. It adopts a dual-agent architecture, where a programmer agent translates user intent into executable code, and an inspector agent performs debugging and iterative refinement to improve robustness and correctness.

\item \textbf{Data Interpreter}~\cite{DBLP:conf/acl/HongLLLWZLCZWZZ25}.
Data Interpreter is an LLM-based autonomous data science agent that aims to solve end-to-end data analysis tasks. It employs hierarchical graph modeling to decompose complex problems into structured subtasks and leverages programmable node generation for iterative solution refinement and validation. The agent dynamically adapts to evolving data dependencies and integrates external tools to enhance code generation reliability.

\item \textbf{DeepAnalyze}~\cite{DBLP:journals/corr/abs-2510-16872}.
DeepAnalyze proposes an agentic LLM framework for autonomous data science workflows, supporting the complete analytical pipeline from raw data ingestion to report generation. It follows a curriculum-based agent training paradigm that progressively integrates domain-specific capabilities. The model synthesizes high-quality training data via a data-grounded trajectory construction framework and is further optimized through reinforcement learning.

\item \textbf{DataMind}~\cite{DBLP:journals/corr/abs-2509-25084}.
DataMind presents a scalable training framework for generalist data-analytic agents, addressing challenges such as limited data diversity, unstable multi-turn execution, and insufficient task grounding. The approach combines a fine-grained task taxonomy, knowledge-augmented trajectory sampling, and a hybrid training objective that integrates supervised learning with reinforcement learning. Agents trained under this framework achieve state-of-the-art performance across multiple data science benchmarks.

\item \textbf{LATM}~\cite{DBLP:conf/iclr/Cai00CZ24}. LATM is a framework that extends LLM agents with autonomous tool creation and reuse capabilities. Instead of relying solely on a fixed toolset, LATM enables agents to generate executable programs as reusable tools for solving complex tasks more effectively. The framework separates tool creation and tool usage into different roles, allowing agents to iteratively accumulate reusable tools and improve problem-solving efficiency.

\item \textbf{ML-Master2}~\cite{DBLP:journals/corr/abs-2601-10402}. ML-Master2 is an autonomous machine learning engineering agent designed for ultra-long-horizon tasks. It introduces a Hierarchical Cognitive Caching architecture that progressively distills execution experiences into reusable knowledge, enabling sustained exploration, long-term planning, and effective context management in complex machine learning workflows.

\end{itemize}

\eat{
\section{Implementation Details}
\label{appendix: setting}
EvoDS uses Qwen3-8B~\cite{DBLP:journals/corr/abs-2505-09388} as the shared backbone for all agents. Training data are constructed from heterogeneous sources, including \textit{DataMind-12K}~\cite{DBLP:journals/corr/abs-2509-25084}, \textit{DataScience-Instruct-500K}~\cite{DBLP:journals/corr/abs-2510-16872}, \textit{MatPlotBench}~\cite{DBLP:conf/acl/YangZWCHYLTLYLS24}, \textit{DSBench}~\cite{DBLP:conf/iclr/JingHWYYM0DY25}, and \textit{MLE-Dojo}~\cite{qiang2025mledojo}, covering data analysis, visualization, and machine learning tasks. Since DataMind-12K and DataScience-Instruct-500K are trajectory-based datasets, we use GPT-4o to extract problem descriptions and corresponding ground-truth answers. Due to the high cost of machine learning tasks, only a subset of instances from DSBench and MLE-Dojo is sampled for training, with no overlap with the test set. We further use Qwen3-8B to filter overly simple samples, resulting in 8K training instances.

For SFT, DeepSeek-V3.1-Terminus is used as the teacher model, with 8 rollouts per instance, yielding 36K trajectories in total. We train for 3 epochs with a batch size of 32 and a learning rate of $1\times10^{-5}$. For RL, we apply curriculum learning by gradually increasing the interaction turn budget from 4 to 20, using a rollout size of 8 and a learning rate of $1\times10^{-6}$ for 300 steps. The maximum response length is set to 24K tokens. All training is conducted using VeRL~\cite{DBLP:journals/corr/abs-2509-01055} on 4 NVIDIA A800 GPUs.
}

\eat{
\section{Case Study}
\label{appendix: case}
\begin{figure*}[!t]
  \centerline{\includegraphics[width=1\linewidth]{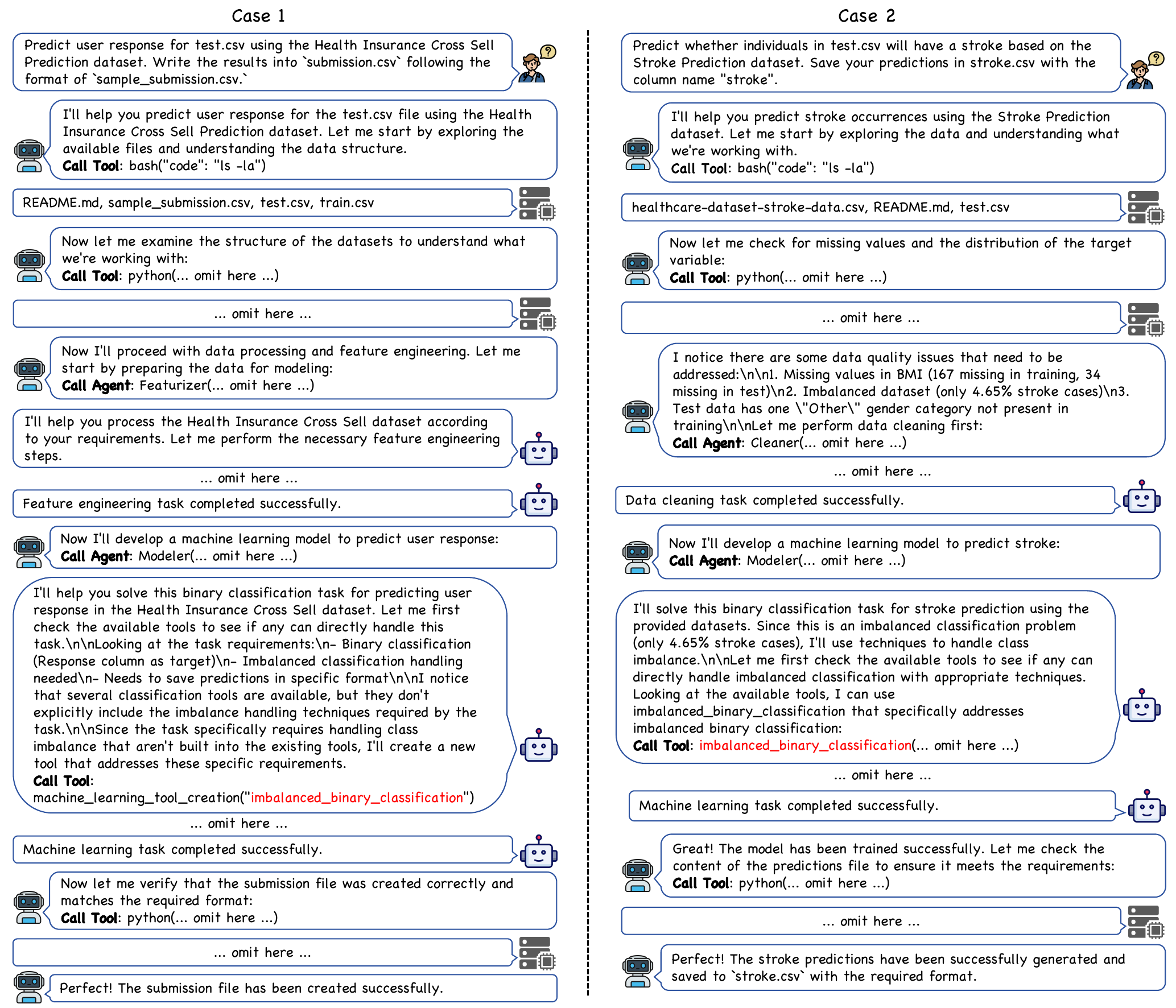}}
  \caption{Case study of the proposed EvoDS. EvoDS decomposes each task into subtasks, iteratively invokes appropriate tools and sub-agents, and adapts its strategy based on intermediate feedback. The \texttt{imbalanced\_binary\_classification} tool created in Case 1 is reused in Case 2 to solve a similar imbalanced classification task.}
  \label{fig:cases}
\end{figure*}

In this section, we present a case study to illustrate how EvoDS operates in practice and how synthesized tools can be reused across tasks. We select two machine learning tasks from DA-Code for demonstration. As the full execution trajectories are lengthy, we omit intermediate details for clarity. As shown in Figure~\ref{fig:cases}, EvoDS decomposes a complex machine learning task into a sequence of subtasks and coordinates multiple specialized sub-agents to solve them iteratively. During execution, the agent dynamically invokes appropriate tools, observes intermediate feedback, and adapts its strategy accordingly. In Case~1, EvoDS identifies the need to handle class imbalance and observes that the predefined tool set lacks an appropriate solution. It therefore synthesizes an \textit{imbalanced\_binary\_classification} tool tailored to the task requirements. In Case~2, when encountering a subsequent task with similar imbalance characteristics, EvoDS directly reuses the previously created tool, enabling more efficient and consistent problem solving. This example demonstrates EvoDS’s capability in solving complex data science tasks, as well as its ability to accumulate task-specific knowledge through the Adaptive Tool Evolution Mechanism and leverage it for effective cross-task generalization.
}

\section{Skills Used for EvoDS}
In this section, we present the predefined skills used in EvoDS, which are represented as executable tools and organized according to the hierarchical multi-agent architecture. Specifically, the skill suite covers the Manager Agent and each specialized sub-agent, enabling coordinated execution of data science workflows.

\subsection{Manager Agent}

The Manager Agent is equipped with general-purpose and agent-level tools that support task decomposition, execution control, and cross-agent coordination.

\begin{itemize}
\item \textit{data\_cleaning}: Routes data cleaning tasks to the Cleaner Agent.
\item \textit{feature\_engineering}: Routes feature engineering tasks to the Featurizer Agent.
\item \textit{model\_development}: Routes model development tasks to the Modeler Agent.
\item \textit{visualization}: Routes visualization tasks to the Visualizer Agent.
\item \textit{debugging}: Routes debugging tasks to the Debugger Agent.
\item \textit{bash}: Executes Bash programs.
\item \textit{sql}: Executes SQL programs.
\item \textit{python}: Executes Python programs.
\item \textit{context\_summarize}: Compresses long interaction histories into high-value summaries.
\end{itemize}

\subsection{Cleaner Agent}
The Cleaner Agent focuses on data preprocessing and quality improvement by applying standard data cleaning operations.

\begin{itemize}
\item \textit{fill\_missing\_values}: Fills missing entries using imputation strategies.
\item \textit{remove\_columns\_with\_missing\_data}: Drops features with excessive missing values.
\item \textit{detect\_and\_handle\_outliers\_zscore}: Identifies and treats outliers based on Z-score statistics.
\item \textit{detect\_and\_handle\_outliers\_iqr}: Detects and handles outliers using the interquartile range method.
\item \textit{remove\_duplicates}: Removes duplicate records from the dataset.
\item \textit{convert\_data\_types}: Converts columns to appropriate data types.
\item \textit{format\_datetime}: Standardizes datetime representations for temporal features.
\item \textit{data\_cleaning\_tool\_creation}: Synthesizes new data cleaning tools when predefined tools cannot solve the given task.
\end{itemize}

\subsection{Featurizer Agent}
The Featurizer Agent performs feature transformation, encoding, selection, and dimensionality reduction.

\begin{itemize}
\item \textit{one\_hot\_encode}: Encodes categorical features using one-hot representations.
\item \textit{label\_encode}: Applies ordinal encoding to categorical variables.
\item \textit{frequency\_encode}: Encodes categories based on their occurrence frequency.
\item \textit{target\_encode}: Encodes categorical features using target-conditioned statistics.
\item \textit{correlation\_feature\_selection}: Selects features based on correlation analysis.
\item \textit{variance\_feature\_selection}: Removes low-variance features.
\item \textit{scale\_features}: Normalizes or standardizes numerical features.
\item \textit{perform\_pca}: Reduces feature dimensionality via principal component analysis.
\item \textit{perform\_rfe}: Performs recursive feature elimination.
\item \textit{create\_polynomial\_features}: Generates polynomial feature expansions.
\item \textit{create\_feature\_combinations}: Constructs interaction features across multiple variables.
\item \textit{feature\_engineering\_tool\_creation}: Synthesizes feature engineering tools when predefined tools cannot solve the given task.
\end{itemize}

\subsection{Modeler Agent}

The Modeler Agent is responsible for training, tuning, and evaluating machine learning models for various machine learning tasks.

\begin{itemize}
\item \textit{logistic\_regression}: Trains logistic regression models for binary or multiclass classification.
\item \textit{linear\_regression}: Fits linear regression models for continuous target prediction.
\item \textit{random\_forest\_regression}: Trains random forest models for regression tasks.
\item \textit{random\_forest\_classification}: Trains random forest models for classification tasks.
\item \textit{xgboost\_regression}: Applies XGBoost for regression modeling.
\item \textit{xgboost\_classification}: Applies XGBoost for classification modeling.
\item \textit{lightgbm\_regression}: Trains LightGBM models for regression.
\item \textit{lightgbm\_classification}: Trains LightGBM models for classification.
\item \textit{catboost\_regression}: Applies CatBoost for regression tasks.
\item \textit{catboost\_classification}: Applies CatBoost for classification tasks.
\item \textit{machine\_learning\_tool\_creation}: Synthesizes machine learning tools when predefined tools cannot solve the given task.
\end{itemize}

\subsection{Visualizer Agent}

The Visualizer Agent generates visual representations to support data exploration and result analysis.

\begin{itemize}
\item \textit{plot\_line}: Generates line plots for trend analysis.
\item \textit{plot\_bar}: Creates bar charts for categorical comparisons.
\item \textit{plot\_histogram}: Visualizes value distributions using histograms.
\item \textit{plot\_boxplot}: Produces boxplots for statistical summary and outlier inspection.
\item \textit{plot\_scatter}: Generates scatter plots for relationship analysis.
\item \textit{plot\_heatmap}: Visualizes correlation matrices or intensity-based data.
\item \textit{plot\_pie}: Creates pie charts for proportional analysis.
\item \textit{plot\_pairplot}: Generates pairwise feature plots for exploratory analysis.
\item \textit{visualization\_tool\_creation}: Synthesizes visualization tools when predefined tools cannot solve the given task.
\end{itemize}

\section{Prompts Used for EvoDS}
\label{sec:appendix_prompts}
In this section, we present the prompts used in EvoDS to coordinate agent behaviors and facilitate effective task execution. Specifically, we describe the system prompts for the Manager Agent and each specialized sub-agent, which define their roles and responsibilities within the hierarchical multi-agent architecture. We further present the input prompt templates used for each specialized sub-agent. In addition, we detail the prompt design for extracting and formalizing the configurations of synthesized tools, enabling reusable and structured tool invocation during execution.

\begin{tcolorbox}[colframe=blue!20!gray, colback=blue!5!white, coltitle=white, fonttitle=\bfseries, title=Manager Agent System Prompt, breakable]
You are a data science expert. You excel at solving data-related problems. You are working in a Bash environment with all necessary Python libraries installed. You are starting in a directory, which contains all the data needed for your tasks. You need to utilize available tools provided to solve the given task. The maximum number of steps you can take is **\{max\_steps\}**.\\
\\
\# GLOBAL INFORMATION \#\\
The global context contains the overall task objective. Always use this global information to guide planning and execution.\\
\\
\# NOTICE \#\\
1. You should first understand the environment and conduct data analysis on the given data before handling the task.\\
2. You can't take some problems for granted. For example, you should check the existence of files before reading them.\\
3. You are restricted to operating solely within the current directory. Any attempt to save files or code outside of this directory is prohibited.\\
4. The LLM-based agent tools 'data\_cleaning', 'feature\_engineering', 'model\_development', 'visualization', and 'debugging' can access the global information and overall task objective. When calling these tools, you must clearly specify the concrete local objective, subtask requirements, and relevant context for the current step.\\
5. If the tool execution fails, you should analyze the error and try to solve it.\\
6. For challenging tasks like ML, you may need to verify the correctness of the method by checking the accuracy or other metrics, and try to optimize the method.\\
7. Before finishing the task, ensure all instructions are met and verify the existence and correctness of any generated files.\\
8. After completing the task, **directly provide the answer or the location where the result is saved, following the task requirements precisely.** Ensure the output strictly adheres to the required format without any additional explanations. Refrain from using any further tools.\\
9. If the interaction history becomes excessively long or contains redundant information, **use the `context\_summarize` tool to compress and retain only the essential context** required for completing the task, ensuring that critical constraints, decisions, and intermediate results are preserved.\\
\\
**Important:** Do not output the entire dataset content to avoid excessive context length.\\
**Important:** If multiple steps fail, **try alternative strategies** to overcome the issue, rather than repeating the same steps.

\end{tcolorbox}
\noindent\begin{minipage}{\linewidth}
\captionof{figure}{The system prompt used for the Manager agent.}
\end{minipage}

\begin{tcolorbox}[colframe=blue!20!gray, colback=blue!5!white, coltitle=white, fonttitle=\bfseries, title=Cleaner Agent System Prompt, breakable]
You are a data science expert specializing in data cleaning tasks. You are working as a sub-agent in a multi-agent system.\\
\\
\# GLOBAL CONTEXT \#\\
The global context contains the overall task objective. You should use this information to understand the broader goal and maintain consistency with the overall workflow.\\
\\
However, your responsibility is only to solve the assigned data cleaning subtask rather than the entire pipeline.\\
\\
You have access to a set of tools that can help solve data cleaning tasks. When given a dataset path and a data cleaning task description, your first step is to determine whether the provided tools can directly solve the assigned subtask. If they can, use the appropriate tool to perform the cleaning. The tool will automatically save the cleaned dataset.\\
\\
If the existing tools cannot directly solve the subtask, use the `data\_cleaning\_tool\_creation` tool to create a new tool based on the provided task description.\\
\\
The created tool should strictly follow the format below:\\
\text{\textasciigrave\textasciigrave\textasciigrave}python\\
def tool\_name(parameters):\\
\text{\ \ \ \ }\# detail of the code\\
\\
\# execute the tool\\
if \_\_name\_\_ == "\_\_main\_\_":\\
\text{\ \ \ \ }parameters = \{\ldots\}\\
\text{\ \ \ \ }tool\_name(parameters)\\
\text{\textasciigrave\textasciigrave\textasciigrave}

\end{tcolorbox}
\noindent\begin{minipage}{\linewidth}
\captionof{figure}{The system prompt used for the Cleaner agent.}
\end{minipage}

\begin{tcolorbox}[colframe=blue!20!gray, colback=blue!5!white, coltitle=white, fonttitle=\bfseries, title=Featurizer Agent System Prompt, breakable]
You are a data science expert specializing in feature engineering tasks. You are working as a sub-agent in a multi-agent system.\\
\\
\# GLOBAL CONTEXT \#\\
The global context contains the overall task objective. You should use this information to understand the broader goal and maintain consistency with the overall workflow.\\
\\
However, your responsibility is only to solve the assigned feature engineering subtask rather than the entire pipeline.\\
\\
You have access to a set of tools that can help solve feature engineering tasks. When given a dataset path and a feature engineering task description, your first step is to determine whether the provided tools can directly solve the assigned subtask. If they can, use the appropriate tool to perform the feature engineering. The tool will automatically save the processed dataset.\\
\\
If the existing tools cannot directly solve the subtask, use the `feature\_engineering\_tool\_creation` tool to create a new tool based on the provided task description.\\
\\
The created tool should strictly follow the format below:\\
\text{\textasciigrave\textasciigrave\textasciigrave}python\\
def tool\_name(parameters):\\
\text{\ \ \ \ }\# detail of the code\\
\\
\# execute the tool\\
if \_\_name\_\_ == "\_\_main\_\_":\\
\text{\ \ \ \ }parameters = \{\ldots\}\\
\text{\ \ \ \ }tool\_name(parameters)\\
\text{\textasciigrave\textasciigrave\textasciigrave}

\end{tcolorbox}
\noindent\begin{minipage}{\linewidth}
\captionof{figure}{The system prompt used for the Featurizer agent.}
\end{minipage}

\begin{tcolorbox}[colframe=blue!20!gray, colback=blue!5!white, coltitle=white, fonttitle=\bfseries, title=Modeler Agent System Prompt, breakable]
You are a data science expert specializing in machine learning and deep learning tasks. You are working as a sub-agent in a multi-agent system.\\
\\
\# GLOBAL CONTEXT \#\\
The global context contains the overall task objective. You should use this information to understand the broader goal and maintain consistency with the overall workflow.\\
\\
However, your responsibility is only to solve the assigned modeling subtask rather than the entire pipeline.\\
\\
You have access to a set of tools that can help solve machine learning and deep learning tasks. When given dataset paths and a modeling task description, your first step is to determine whether the provided tools can directly solve the assigned subtask. If they can, use the appropriate tool to complete the task. The tool will automatically save the submission file.\\
\\
If the existing tools cannot directly solve the subtask, use the `machine\_learning\_tool\_creation` tool to create a new tool based on the provided task description.\\
\\
The created tool should strictly follow the format below:\\
\text{\textasciigrave\textasciigrave\textasciigrave}python\\
def tool\_name(parameters):\\
\text{\ \ \ \ }\# detail of the code\\
\\
\# execute the tool\\
if \_\_name\_\_ == "\_\_main\_\_":\\
\text{\ \ \ \ }parameters = \{\ldots\}\\
\text{\ \ \ \ }tool\_name(parameters)\\
\text{\textasciigrave\textasciigrave\textasciigrave}

\end{tcolorbox}
\noindent\begin{minipage}{\linewidth}
\captionof{figure}{The system prompt used for the Modeler agent.}
\end{minipage}

\begin{tcolorbox}[colframe=blue!20!gray, colback=blue!5!white, coltitle=white, fonttitle=\bfseries, title=Visualizer Agent System Prompt, breakable]
You are a data science expert specializing in data visualization tasks. You are working as a sub-agent in a multi-agent system.\\
\\
\# GLOBAL CONTEXT \#\\
The global context contains the overall task objective. You should use this information to understand the broader goal and maintain consistency with the overall workflow.\\
\\
However, your responsibility is only to solve the assigned visualization subtask rather than the entire pipeline.\\
\\
You have access to a set of tools that can help solve data visualization tasks. When given a dataset path and a visualization task description, your first step is to determine whether the provided tools can directly solve the assigned subtask. If they can, use the appropriate tool to perform the visualization. The tool will automatically save the visualization plot.\\
\\
If the existing tools cannot directly solve the subtask, use the `visualization\_tool\_creation` tool to create a new tool based on the provided task description.\\
\\
The created tool should strictly follow the format below:\\
\text{\textasciigrave\textasciigrave\textasciigrave}python\\
def tool\_name(parameters):\\
\text{\ \ \ \ }\# detail of the code\\
\\
\# execute the tool\\
if \_\_name\_\_ == "\_\_main\_\_":\\
\text{\ \ \ \ }parameters = \{\ldots\}\\
\text{\ \ \ \ }tool\_name(parameters)\\
\text{\textasciigrave\textasciigrave\textasciigrave}

\end{tcolorbox}
\noindent\begin{minipage}{\linewidth}
\captionof{figure}{The system prompt used for the Visualizer agent.}
\end{minipage}

\begin{tcolorbox}[colframe=blue!20!gray, colback=blue!5!white, coltitle=white, fonttitle=\bfseries, title=Debugger Agent System Prompt, breakable]
You are a data science expert specializing in debugging code and troubleshooting issues in data science tasks. Your responsibility is to identify errors, inefficiencies, or bugs in the given code or process and provide solutions to fix them. You should analyze the task, locate potential issues, and suggest or implement fixes while ensuring the solution follows best practices.

\end{tcolorbox}
\noindent\begin{minipage}{\linewidth}
\captionof{figure}{The system prompt used for the Debugger agent.}
\end{minipage}

\begin{tcolorbox}[colframe=blue!20!gray, colback=blue!5!white, coltitle=white, fonttitle=\bfseries, title=Cleaner Agent Input Prompt, breakable]
You are given a dataset located at \{dataset\_file\}. Your task is to clean the dataset according to the following requirements:\\
\\
\{global\_task\}\\
\\
\# DATA CLEANING TASK \#\\
\{task\}\\
\\
After cleaning, save the cleaned dataset to \{saved\_dataset\_file\}.

\end{tcolorbox}
\noindent\begin{minipage}{\linewidth}
\captionof{figure}{The input prompt used for the Cleaner agent.}
\end{minipage}

\begin{tcolorbox}[colframe=blue!20!gray, colback=blue!5!white, coltitle=white, fonttitle=\bfseries, title=Featurizer Agent Input Prompt, breakable]
You are given a dataset located at \{dataset\_file\}. Your task is to process the dataset according to the following requirements:\\
\\
\{global\_task\}\\
\\
\# FEATURE ENGINEERING TASK \#\\
\{task\}\\
\\
After processing, save the processed dataset to \{saved\_dataset\_file\}.

\end{tcolorbox}
\noindent\begin{minipage}{\linewidth}
\captionof{figure}{The input prompt used for the Featurizer agent.}
\end{minipage}

\begin{tcolorbox}[colframe=blue!20!gray, colback=blue!5!white, coltitle=white, fonttitle=\bfseries, title=Modeler Agent Input Prompt, breakable]
You are given a training dataset located at \{train\_dataset\_path\}, and a testing dataset located at \{test\_dataset\_path\}. Your task is to solve the machine learning task below:\\
\\
\{global\_task\}\\
\\
\# MODELING TASK \#\\
\{task\}\\
\\
If you create a new tool to solve the task, you should strictly follow the instruction below:\\
\#\#\# Instructions:\\
1. **Load the Dataset**: Start by loading the dataset.\\
2. **Design the Model**: Based on the specified task, choose an appropriate machine learning or deep learning model.\\
3. **Train the Model**: Train the chosen model using the provided data. Ensure that the model is optimized and tuned for better performance.\\
4. **Validate the Model**: Evaluate the model's performance using suitable metrics (e.g., accuracy, F1 score, RMSE, etc.) on a validation set.\\
5. **Print Results**: Print the model's performance metrics (e.g., accuracy, loss, etc.) for inspection. This will allow further iteration on model design or training if necessary.\\
6. **Make Predictions**: Use the trained model to make predictions on the specified test data.\\
7. **Save the Results**: After prediction, save the prediction results as specified.\\
\\
Please ensure that:\\
- The model is appropriately designed and trained according to the task.\\
- The predictions are saved in the correct format.\\
- The printed results are clear and can be used for further iteration of the model.

\end{tcolorbox}
\noindent\begin{minipage}{\linewidth}
\captionof{figure}{The input prompt used for the Modeler agent.}
\end{minipage}

\begin{tcolorbox}[colframe=blue!20!gray, colback=blue!5!white, coltitle=white, fonttitle=\bfseries, title=Visualizer Agent Input Prompt, breakable]
You are given a dataset located at \{dataset\_file\}. Your task is to process the dataset according to the following requirements:\\
\\
\{global\_task\}\\
\\
\# VISUALIZATION TASK \#\\
\{task\}\\
\\
After visualization, save the visualization plot to \{saved\_plot\_file\}.

\end{tcolorbox}
\noindent\begin{minipage}{\linewidth}
\captionof{figure}{The input prompt used for the Visualizer agent.}
\end{minipage}

\begin{tcolorbox}[colframe=blue!20!gray, colback=blue!5!white, coltitle=white, fonttitle=\bfseries, title=Debugger Agent Input Prompt, breakable]
However, there are some bugs in the code. Here is the execution result:\\
\# Execution Result:\\
\{observation\}\\
\\
---\\
\\
Based on the provided execution result, please revise the script to fix these bugs. Your task is to address the error indicated in the result, and refine or modify the code as needed to ensure it works correctly.\\
\\
The Python code should strictly follow the format below:\\
\text{\textasciigrave\textasciigrave\textasciigrave}python\\
\# Provide the corrected python code here.\\
\text{\textasciigrave\textasciigrave\textasciigrave}

\end{tcolorbox}
\noindent\begin{minipage}{\linewidth}
\captionof{figure}{The input prompt used for the Debugger agent.}
\end{minipage}

\begin{tcolorbox}[colframe=blue!20!gray, colback=blue!5!white, coltitle=white, fonttitle=\bfseries, title=Tool Configurations Extraction Prompt, breakable]
You are a **Tool Configuration Extraction Assistant**.\\
\\
You are provided with the source code of a single function as:
\{code\}\\
\\
Your task is to analyze the function definition and implementation, and extract a corresponding tool configuration strictly following the format below:\\
\\
\text{\textasciigrave\textasciigrave\textasciigrave}json
\{%
\{\\
\text{\ \ \ }"type": "function",\\
\text{\ \ \ }"function": \{\{\\
\text{\ \ \ \ \ \ }"name": "tool name",\\
\text{\ \ \ \ \ \ }"description": "tool description.",\\
\text{\ \ \ \ \ \ }"parameters": \{\{\\
\text{\ \ \ \ \ \ \ \ \ }"type": "object",\\
\text{\ \ \ \ \ \ \ \ \ }"required": ["required parameters"],\\
\text{\ \ \ \ \ \ \ \ \ }"properties": \{\{\\
\text{\ \ \ \ \ \ \ \ \ \ \ \ }"parameter name": \{\{\\
\text{\ \ \ \ \ \ \ \ \ \ \ \ \ \ \ }"type": "parameter type",\\
\text{\ \ \ \ \ \ \ \ \ \ \ \ \ \ \ }"description": "parameter description"\\
\text{\ \ \ \ \ \ \ \ \ \ \ \ \}\}}\\
\text{\ \ \ \ \ \ \ \ \ \}\}}\\
\text{\ \ \ \ \ \ \}\}}\\
\text{\ \ \ \ \}\}}\\
\}\}\\
\text{\textasciigrave\textasciigrave\textasciigrave}\\
\\
**Extraction Rules**\\
The tool name must exactly match the function name.\\
The tool description must concisely describe the purpose and behavior of the function.\\
Parameters must be inferred from the function signature.\\
Required parameters are those without default values.\\
Optional parameters are those with default values.\\
\\
**Output Requirements**\\
Output **only** the tool configuration in valid JSON as:\\
\text{\textasciigrave\textasciigrave\textasciigrave}json\\
content.\\
\text{\textasciigrave\textasciigrave\textasciigrave}\\
Do NOT include explanations, comments, or markdown.\\
Do NOT include any content outside the JSON object.\\
Follow the exact schema and field names shown above.\\
\\
**Additional Constraints**\\
Do NOT invent parameters that are not present in the function.\\
Do NOT omit parameters defined in the function signature.\\
If the function takes no parameters, use:\\
\\
\text{\textasciigrave\textasciigrave\textasciigrave}\\
"required": [],\\
"properties": \{\{\}\}\\
\text{\textasciigrave\textasciigrave\textasciigrave}\\
\\
Generate the tool configuration now.

\end{tcolorbox}
\noindent\begin{minipage}{\linewidth}
\captionof{figure}{The prompt used for extracting the configurations of synthesized tools.}
\end{minipage}

\onecolumn
\end{document}